\setlist[enumerate]{leftmargin=.5in}
\setlist[itemize]{leftmargin=.5in}
\newcommand{\BEAS}{\begin{eqnarray*}}
\newcommand{\EEAS}{\end{eqnarray*}}
\newcommand{\BEA}{\begin{eqnarray}}
\newcommand{\EEA}{\end{eqnarray}}
\newcommand{\BEQ}{\begin{equation}}
\newcommand{\EEQ}{\end{equation}}
\newcommand{\BIT}{\begin{itemize}}
\newcommand{\EIT}{\end{itemize}}
\newcommand{\BNUM}{\begin{enumerate}}
\newcommand{\ENUM}{\end{enumerate}}
\newcommand{\BA}{\begin{array}}
\newcommand{\EA}{\end{array}}
\newcommand{\ie}{{\it i.e.}}
\newcommand{\ones}{\mathbf 1}
\newcommand{\reals}{{\mathbb R}}
\newcommand{\Card}{\mathop{\bf Card}}
\newcommand{\Tr}{\mathop{\bf Tr}}
\newcommand{\diag}{\mathop{\bf diag}}
\newcommand{\onehalf}{\frac{1}{2}}
\newcommand{\Co}{{\mathop {\bf Co}}}
\newcommand{\dsp}{\displaystyle}
\crefname{hypothesis}{Hypothesis}{Hypotheses}
\def\WP{\mbox{\rm WP}(\phi)}
\def\lpf{\lambda_{\rm pf}}
\title{Implicit Deep Learning\thanks{Submitted to the editors, August 6, 2020.
\funding{This work was funded in part by \url{sumup.ai}, the National Science Foundation, the Pacific Earthquake Engineering Research Center, and Total S.A.}}}
\author{Laurent El Ghaoui\thanks{EECS and IEOR departments, UC Berkeley (\email{elghaoui@berkeley.edu}).} \and Fangda Gu\footnotemark[2] \and Bertrand Travacca\thanks{CEE department, UC Berkeley.} \and Armin Askari\footnotemark[2] \and Alicia Y. Tsai\footnotemark[2]}
\begin{document}

\maketitle

% REQUIRED
\begin{abstract}
Implicit deep learning prediction rules generalize the recursive rules of feedforward neural networks. Such rules are based on the solution of a fixed-point equation involving a single vector of hidden features, which is thus only implicitly defined. The implicit framework greatly simplifies the notation of deep learning, and opens up many new possibilities, in terms of novel architectures and algorithms, robustness analysis and design, interpretability, sparsity, and network  architecture optimization.
\end{abstract}

% REQUIRED
\begin{keywords}
Deep learning, implicit models, Perron-Frobenius theory, robustness, adversarial attacks.
\end{keywords}

% REQUIRED
\begin{AMS}
690C26, 49M99, 65K10, 62M45, 26B10
\end{AMS}

% \tableofcontents
% \newpage

\section{Introduction}
\label{sec:introduction}
%!TEX root = idl_main.tex
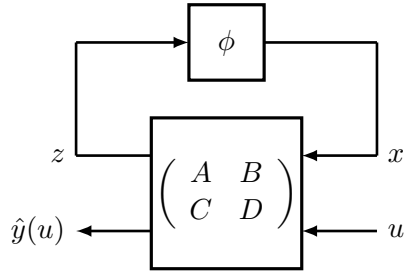
\begin{figure}[h]
\centering

\tikzstyle{system} = [rectangle, minimum height=2cm, minimum width=2cm]
\tikzstyle{activation} = [rectangle, minimum height=1cm, minimum width=1cm]
\tikzstyle{arrow} = [thick,->,>=stealth]
\tikzstyle{line} = [draw, very thick, -latex']
\tikzset{every picture/.style={line width=1pt}} 

\begin{tikzpicture}[auto,node distance=2cm,>=latex,every matrix/.append style={ampersand replacement=\&,matrix of nodes}]
	%[auto, node distance=2cm,>=latex']

\coordinate (topleft) at  (-2,0);	
\coordinate (phileft) at  (-0.5,0);	
\coordinate (bottomleft) at (-2,-1.5);	
\coordinate (systleft) at (-1,-1.5);	
\coordinate (topright) at  (2,0);	
\coordinate (phiright) at  (0.5,0);	
\coordinate (bottomright) at (2,-1.5);	
\coordinate (systright) at (1,-1.5);	
\coordinate (systy) at (-1,-2.5);
\coordinate (systu) at (1,-2.5);
\coordinate (A) at (-1,-1);
% \coordinate (B) at (1,-1);
% \coordinate (C) at (-1,-3);
\coordinate (D) at (1,-3);
\coordinate (Af) at (-0.5,0.5);
\coordinate (Df) at (0.5,-0.5);

\node [activation, name=activation] {$\phi$};
% \node [system, align=center,font=\ttfamily,below of=activation] (system) {$\left(\begin{array}{cc} A & B \\ C & D \end{array}\right)$};
\node [system, below of=activation, name=system] (system) 
{$\left(\begin{array}{cc} A & B \\ C & D \end{array}\right)$}; 

%\draw[thick] node(activation) {};

\draw[very thick] (A) rectangle (D);
\draw[very thick] (Af) rectangle (Df);

\draw[->]  (systy) -- ++ (-1,0) node[left]{$\hat{y}(u)$};
\draw[<-]  (systu) -- ++ (1,0) node[right]{$u$};

\draw (topleft) -- (bottomleft);
\draw (systleft) -- (bottomleft) node[left]{$z$};;
\draw[->] (topleft) -- (phileft);

\draw (topright) -- (bottomright);
\draw[<-] (systright) -- (bottomright) node[right]{$x$};
\draw (topright) -- (phiright);

%\draw[thin, gray](-4,-4) grid (4,4);

\end{tikzpicture}

\caption{\label{fig:Figures_BlockDiag} 
A block-diagram view of an implicit model.
%, to be read from right to left, consistent with matrix-vector multiplication rules.
}
\end{figure}
%\end{document}

\subsection{Implicit prediction rules} % (fold)
\label{sub:implicit_prediction_rules}
In this paper, we consider a new class of deep learning models that are based on implicit prediction rules. Such rules are not obtained via a recursive procedure through several layers, as in current neural networks. Instead, they are based on solving a fixed-point equation in some single ``state'' vector $x \in \reals^n$. Precisely, for a given input vector $u$, the predicted vector is
\begin{subequations}\label{eq:pred-rule-top}
\begin{align}
  \hat{y}(u) &=Cx+Du  \mbox{ [prediction equation]} \label{eq:pred-rule} \\
  x &= \phi(Ax+Bu) \mbox{ [equilibrium equation]} \label{eq:eq-eq} 
\end{align}
\end{subequations}
where $\phi \::\: \reals^n \rightarrow \reals^n$ is a nonlinear vector map (the ``activation'' map), and matrices $A,B,C,D$ contain model parameters. \cref{fig:Figures_BlockDiag} provides a block-diagram view of an implicit model, to be read from right to left, so as to be consistent with matrix-vector multiplication rules.

We can think of the vector $x \in \reals^n$ as a ``state'' corresponding to $n$ ``hidden'' features that are extracted from the inputs, based on the so-called equilibrium equation~\cref{eq:eq-eq}. In general, that equation cannot be solved in closed-form, and the model above provides $x$ only \emph{implicitly}. This equation is not necessarily well-posed, in the sense that it may not admit a solution, let alone a unique one; we discuss this important issue of well-posedness in \cref{sec:about_property_p}. 

For notational simplicity only, our rule does not contain any bias terms; we can easily account for those by considering the vector $(u,1)$  instead of $u$, thereby increasing the column dimension of $B$ by one. 

Perhaps surprisingly, as seen in \cref{sec:nn}, the implicit framework includes most current neural network architectures as special cases. Implicit models are a much wider class: they present much more capacity, as measured by the number of parameters for a given dimension of the hidden features; also, they allow for cycles in the network, which is not permitted under the current paradigm of deep networks.

Implicit rules open up the possibility of using novel architectures and prediction rules for deep learning, which are not based on any notion of ``network'' or ``layers'', as is classically understood. In addition, they allow one to consider rigorous approaches to challenging problems in deep learning, ranging from robustness analysis, sparsity and interpretability, and feature selection.

% subsection implicit_prediction_rules (end)

\subsection{Contributions and paper outline} % (fold)
\label{sub:contributions_and_paper_outline}
Our contributions in this paper, and its outline, are as follows.
\begin{itemize}
\item \emph{Well-posedness and composition} (\cref{sec:about_property_p}): In contrast with standard deep networks, implicit models may not be well-posed, in the sense that the equilibrium equation may have no or multiple solutions. We establish rigorous and numerically tractable conditions for implicit rules to be well-posed. These conditions are then used in the training problem, guaranteeing the well-posedness of the learned prediction rule. We also discuss the composition of implicit models, via cascade connections for example.

\item \emph{Implicit models of neural networks} (\cref{sec:nn}): We provide details on how to represent a wide variety of neural networks as implicit models, building on the composition rules of~\cref{sec:about_property_p}.

\item \emph{Robustness analysis} (\cref{sec:robustness}): We describe how to analyze the robustness properties of a given implicit model, deriving bounds on the state under input perturbations, and generating adversarial attacks. We also discuss which penalties to include into the training problem so as to encourage robustness of the learned rule.

\item \emph{Interpretability, sparsity, compression and deep feature selection} (\cref{sec:sparsity_and_topology_optimization}): Here we focus on finding appropriate penalties to use in order to improve properties such as model sparsity, or obtain feature selection. We also discuss the impact of model errors.

\item \emph{Training problem: formulations and algorithms} (\cref{sec:training_problem}): Informed by our previous findings, we finally discuss the corresponding training problem. Following the work of \cite{gu2018fenchel} and \cite{li2019lifted}, we represent activation functions using so-called Fenchel divergences, in order to relax the training problem into a more tractable form. We discuss several algorithms, including stochastic projected gradients, Frank-Wolfe, and block-coordinate descent.
\end{itemize}
Finally, \cref{sec:num} provides a few experiments supporting the theory put forth in this paper. Our final~\cref{sec:related_work} is devoted to prior work and references.

% Our focus here is on the ReLU activation function: $\phi(\cdot) = \max(0,\cdot)$, applied componentwise to a vector argument. We may easily extend our model and results to other activation functions that act componentwise, such as sigmoids, leaky ReLUs, or $\tanh$. We may also consider maps that do not operate in componentwise fashion, but rather on the whole vector argument, such as normalization, max-pooling, softmax, normalizations, etc. It is also possible to consider different activations for different (blocks of) features.

% In this preliminary work, our focus is on theoretical and algorithmic underpinnings, and not on empirical validation. In particular, we do not aim at empirically proving the superiority of the new class, over current state-of-the-art deep learning models, as applied to real-world, large datasets. Our few numerical experiments are simply aimed at validating the proposed models and training algorithms, in terms of obtaining adversarial attacks, achieving a low training set loss, or recovering a model with sparse model matrices.

% Our paper is organized as follows. We expose the important notion of well-posedness in \cref{sec:about_property_p}, while \cref{sec:robustness} explores the use the implicit framework towards robustness against input uncertainty; \cref{sec:sparsity_and_topology_optimization} discusses issues of interpretability, model compression and architecture optimization. We discuss the training problem in \cref{sec:training_problem}. 
% subsection contributions_and_paper_outline (end)

\subsection{Notation} 
For a matrix $U$, $|U|$ (resp. $U_+$) denotes the matrix with the absolute values (resp.\ positive part) of the entries of $U$. For a vector $v$, we denote by $\diag(v)$ the diagonal matrix formed with the entries of $v$; for a square matrix $V$, $\diag(V)$ is the vector formed with the diagonal elements of $V$. The notation $\ones$ refers to the vector of ones, with size inferred from context. The Hadamard (componentwise) product between two $n$-vectors $x,y$ is denoted $x \odot y$. We use $s_k(z)$ to denote the sum of the largest $k$ entries of a vector $z$. For a matrix $A$, and integers $p , q \ge 1$, we define the induced norm
\[
\|A\|_{p \rightarrow q} = \max_{\xi} \: \|A\xi\|_{q} ~:~ \|\xi\|_{p} \le 1.
\]
The case when $p = q = \infty$ corresponds to the $l_\infty$-induced norm of $A$, also known as its \emph{max-row-sum norm}:
\[
\|A\|_\infty:= \max_{i} \: \sum_j |A_{ij}|.
\]

We denote the set $\{1, \cdots,L\}$ compactly as $[L]$. For a $n$-vector partitioned into $L$ blocks, $z = (z_1,\ldots,z_L)$, with $z_l \in \reals^{n_l}$, $l \in [L]$, with $n_1+\ldots+n_L = n$, we denote by $\eta(z)$ the $L$-vector of norms:
\begin{align}\label{eq:vect-norms}
  \eta(z) := (\|z_1\|_{p_1}, \ldots, \|z_L\|_{p_L})^\top .
\end{align}

Finally, any square, non-negative matrix $M$ admits a real eigenvalue that is larger than the modulus of any other eigenvalue; this non-negative eigenvalue is the so-called  \emph{Perron-Frobenius eigenvalue} \cite{meyer2000matrix}, and is denoted $\lpf(M)$.

% section introduction (end)

\section{Well-Posedness and Composition} % (fold)
\label{sec:about_property_p}

\subsection{Assumptions on the activation map}
\label{sub:ass-phi}
We restrict our attention to activation maps $\phi$ that obey a ``Blockwise LIPschitz'' (BLIP) continuity condition. This condition is satisfied for most popular activation maps, and arises naturally when ``composing'' implicit models (see~\cref{sub:composition}). Precisely, we assume that:
\begin{enumerate}
\item {\em Blockwise:} the map $\phi$ acts in a block-wise fashion, that is, there exist a partition of $n$: $n=n_1+\ldots+n_L$ such that for every vector partitioned into the corresponding blocks: $z = (z_1,\ldots,z_L)$ with $z_l \in \reals^{n_l}$, $l \in [L]$, we have $\phi(z) = (\phi_1(z_1),\ldots,\phi_L(z_L))$ for appropriate maps $\phi_l \::\: \reals^{n_l} \rightarrow \reals^{n_l}$, $l \in [L]$.

\item {\em Lipschitz:} For every $l \in [L]$, the maps $\phi_l$ are Lipschitz-continuous with constant $\gamma_l >0$ with respect to the $l_{p_l}$-norm for some integer $p_l \ge 1$:
\[
\forall \: u,v \in \reals^{n_l} ~:~ \|\phi_l(u)-\phi_l(v)\|_{p_l} \le \gamma_l\|u-v\|_{p_l}.
\]
\end{enumerate}
In the remainder of the paper, we refer to such maps with the acronym BLIP, omitting the dependence on the underlying structure information (integers $n_l$, $p_l$, $\gamma_l$, $l \in [L]$).
We shall consider a special case, referred to a COmponentwise Non-Expansive (CONE) maps, when $n_l = 1$, $\gamma_l = 1$, $l \in [L]$. Such CONE maps satisfy
\begin{align}\label{eq:cone-cond}
\forall \: u,v \in \reals^n ~:~ |\phi(u) - \phi(v) | \le |u-v|,
\end{align}
with inequality and absolute value taken componentwise. Examples of CONE maps include the ReLU (defined as $\phi(\cdot) = \max(0,\cdot)$) and its ``leaky'' variants, tanh, sigmoid, each applied componentwise to a vector input. Our model also allows for maps that do not operate componentwise, such as the softmax function, which operates on a $n$-vector $z$ as:
\begin{align}\label{eq:soft-max}
z \rightarrow \mbox{SoftMax}(z) := \left(\frac{e^{z_i}}{\sum_i e^{z_j}}\right)_{i \in [n]}, 
\end{align}
The softmax map is 1-Lipschitz-continuous with respect to the $l_1$-norm \cite{gao2017properties}.

\subsection{Well-posed matrices} % (fold)
\label{sub:definition}
We consider the prediction rule~\cref{eq:pred-rule} with input point $u \in \reals^p$ and predicted output vector $\hat{y}(u) \in \reals^q$.  The equilibrium equation~\cref{eq:eq-eq} does not necessarily have a well-defined, unique solution $x$, as~\cref{fig:ImplicitRule1D2} illustrates in a scalar case. In order to prevent this, we assume that the $n \times n$ matrix $A$ satisfies the following well-posedness property.

\begin{definition}[Well-posedness property]
The $n \times n$ matrix $A$ is said to be well-posed for $\phi$ (in short, $A \in \WP$) if, for any $n$-vector $b$, the equation in $x \in \reals^n$:
\begin{equation}\label{eq:implicit}
x = \phi(Ax+b) 
\end{equation} 
has a unique solution.
\end{definition}
\begin{figure}[h]
\centering
\includegraphics[width=.8\textwidth]{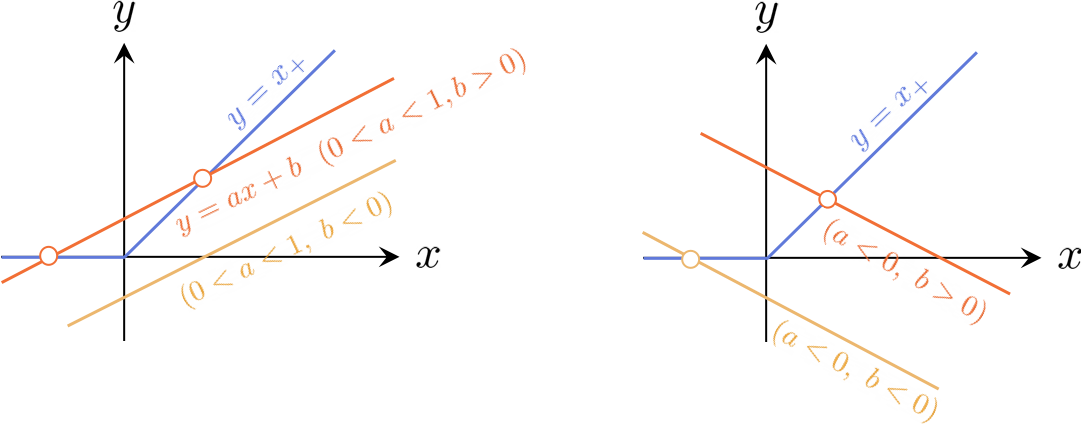}
\caption{\emph{Left:} equation $x = (ax+b)_+$ has two or no solutions, depending on the sign of $b$. \emph{Right:} for $a<0$, solution is unique for every $b$.}  \label{fig:ImplicitRule1D2}
\end{figure}
There are many classes of matrices that satisfy the well-posedness property. As seen next, strictly upper-triangular matrices are well-posed with respect to any activation map that acts componentwise; such a class arises when modeling feedforward neural networks as implicit models, as seen in \cref{sub:feedforward_neural_networks_are_a_special_case}. 
% subsection definition (end)

\subsection{Tractable sufficient conditions for well-posedness} % (fold)
\label{sub:tractable_sufficient_conditions}
Our goal now is to understand how we can constrain $A$ to have the well-posedness property, in a numerically tractable way. 

A sufficient condition is based on the contraction mapping theorem.  The following result focuses on the case when $\phi$ is componentwise non-expansive (CONE), as defined in~\cref{sub:ass-phi}. 

\begin{theorem}[PF sufficient condition for well-posedness with CONE activation]
\label{thm:pf-wp}
Assume that $\phi$ is a componentwise non-expansive (CONE) map, as defined in~\cref{sub:ass-phi}.  Then, $A$ is well-posed with respect to $\phi$ if $\lpf(|A|) <1$, in which case, for any $n$-vector $b$, the solution to the equation \cref{eq:implicit} can be computed via the fixed-point iteration 
\begin{equation}\label{eq:fixed-point-b}
x(0) =0, \;\; x(t+1) = \phi(Ax(t)+b) , \;\; t =0,1,2,\ldots.
\end{equation} 
\end{theorem}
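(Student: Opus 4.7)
The plan is to apply the Banach fixed-point theorem to the map $F(x) := \phi(Ax+b)$, but with respect to a carefully chosen weighted $\ell_\infty$-norm rather than the standard one. The CONE assumption~\cref{eq:cone-cond} immediately gives the componentwise contraction-type bound
\[
|F(x)-F(y)| \;=\; |\phi(Ax+b)-\phi(Ay+b)| \;\le\; |A(x-y)| \;\le\; |A|\,|x-y|,
\]
so the task reduces to showing that the nonnegative matrix $|A|$ is a contraction in some norm whenever $\lpf(|A|)<1$.

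The key step is to use Perron-Frobenius theory to produce a positive weight vector $v>0$ with $|A|v \le \lambda v$ componentwise for some $\lambda<1$. Concretely, I would perturb $|A|$ to $|A|+\epsilon \ones\ones^\top$ for small $\epsilon>0$ to make it irreducible, take its (strictly positive) Perron eigenvector, and then pass to the limit as $\epsilon\to 0^+$; by continuity of $\lpf$, we can choose $\epsilon$ small enough that the perturbed Perron eigenvalue is still below $1$. This yields $v>0$ with $|A|v \le \lambda v$ for $\lambda := \lpf(|A|+\epsilon\ones\ones^\top)<1$. With this $v$ in hand, define the weighted norm $\|z\|_{v,\infty} := \max_i |z_i|/v_i$. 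Then
\[
\frac{|F(x)-F(y)|_i}{v_i} \;\le\; \frac{1}{v_i}\sum_j |A|_{ij}\, v_j \cdot \frac{|x-y|_j}{v_j} \;\le\; \frac{\lambda v_i}{v_i}\,\|x-y\|_{v,\infty} \;=\; \lambda\,\|x-y\|_{v,\infty},
\]
so $F$ is a $\lambda$-contraction in $\|\cdot\|_{v,\infty}$. Since this norm is equivalent to the standard one on $\reals^n$ (as $v$ has strictly positive entries), $(\reals^n, \|\cdot\|_{v,\infty})$ is complete, and Banach's theorem delivers a unique fixed point, proving $A\in\WP$. The same contraction bound shows that the Picard iteration~\cref{eq:fixed-point-b} converges to this fixed point geometrically (in the weighted norm, hence in any norm) from any starting point, including $x(0)=0$.

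The main obstacle is the Perron-Frobenius step: strictly speaking, the classical statement gives a positive eigenvector only when $|A|$ is irreducible, so one must either invoke the more general nonnegative-matrix version (which yields a nonnegative eigenvector, possibly with zero entries) or use the irreducible perturbation trick described above to obtain a strictly positive $v$. A clean alternative that avoids this subtlety entirely is to observe that $\lpf(|A|)<1$ implies $\sum_{k\ge 0} |A|^k$ converges, so $(I-|A|)^{-1}$ exists and is entrywise nonnegative, and then take $v := (I-|A|)^{-1}\ones + \delta\ones$ for small $\delta>0$; this $v$ is strictly positive and satisfies $|A|v \le v - \ones + \delta |A|\ones$, which can be massaged into the required strict componentwise contraction. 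Either route works; the rest is bookkeeping.
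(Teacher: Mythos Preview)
Your proposal is correct and follows essentially the same approach as the paper: both invoke the contraction mapping theorem after observing that $\lpf(|A|)<1$ yields a diagonal scaling (equivalently, a weighted $\ell_\infty$-norm) in which $x\mapsto \phi(Ax+b)$ is a strict contraction---this is exactly the Collatz--Wielandt characterization~\cref{eq:collatz-wiedlandt} that the paper uses. Your alternative route via $v=(I-|A|)^{-1}\ones$ is in fact even cleaner than you suggest, since this $v$ already satisfies $v\ge\ones>0$ and $|A|v=v-\ones$, so the $\delta$-perturbation is unnecessary.
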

The theorem is a direct consequence of the global contraction mapping theorem~\cite[p.83]{sastry2013nonlinear}. We provide a proof in \cref{app:proof_of_wellposedness}. A few remarks are in order.

\begin{remark}
The fixed-point iteration~\cref{eq:fixed-point-b} has linear convergence; each iteration is a matrix-vector product, hence the complexity is comparable to that of a forward pass through a network of similar size.
\end{remark}

\begin{remark}
The PF condition $\lpf(|A|) <1$ is not convex in $A$, but the convex condition $\|A\|_\infty < 1$, is sufficient, in light of the bound $\|A\|_{\infty} \ge \lpf(|A|)$.
\end{remark}

\begin{remark}
The PF condition of Theorem~\ref{thm:pf-wp} is conservative. For example, a triangular matrix $A$ is well-posed with respect to the ReLU and if only if $\diag(A) < \ones$, a consequence of the upcoming~\cref{thm:wp-tri}. The corresponding equilibrium equation can then be solved via the backward recursion
\[
x_n = \frac{(b_n)_+}{1-A_{nn}}, \;\; x_i = \frac{1}{1-A_{ii}}( b_i + \sum_{j>i} A_{ij}x_j)_+, \;\; i=n-1,\ldots,1.
\]
Such a matrix does not necessarily satisfy the PF condition; we can have in particular $A_{11}<-1$, which implies $\lpf(|A|)>1$.
\end{remark}

\begin{remark}
The well-posedness property is invariant under row and column permutation, provided $\phi$ acts componentwise. Precisely, 
if $A$ is well-posed with respect to a componentwise CONE map $\phi$, then for any $n \times n$ permutation matrix $P$, $PAP^\top$ is well-posed with respect to $\phi$. The PF sufficient condition is also invariant under row and column permutations.
\end{remark}

In some contexts, the map $\phi$ does not satisfy the componentwise non-expansiveness condition, but the weaker blockwise Lipschitz continuity (BLIP) defined in~\cref{sub:ass-phi}. The previous theorem can be extended to this case, as follows. We partition the $A$ matrix according to the tuple $(n_1,\ldots,n_L)$, into blocks $A_{ij} \in \reals^{n_i \times n_j}$, $1 \le i,j \le L$, and define a $L \times L$ matrix of induced norms, with elements for $l,h \in [L]$ given by
\begin{align}\label{eq:induced-norm-matrix}
(N(A))_{ij} := \|A\|_{p_j \rightarrow p_i} = \max_{\xi} \: \|A\xi\|_{p_i} ~:~ \|\xi\|_{p_j} \le 1.
\end{align}
\begin{theorem}[PF sufficient condition for well-posedness for BLIP activation]\label{thm:pf-wp-block}
Assume that $\phi$ satisfies the BLIP condition, as defined in~\cref{sub:ass-phi}.  Then, $A$ is well-posed with respect to $\phi$ if 
\[
\lpf(\Gamma N(A)) <1, 
\]
where $\Gamma := \diag(\gamma)$, with $\gamma$ the vector of Lipschitz constants, and $N(\cdot)$ is the matrix of induced norms defined in~\cref{eq:induced-norm-matrix}.
In this case, for any $n$-vector $b$, the solution to the equation \cref{eq:implicit} can be computed via the fixed-point iteration~\cref{eq:fixed-point-b}.
\end{theorem}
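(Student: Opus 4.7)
The plan is to generalize the CONE proof of \cref{thm:pf-wp} by turning the BLIP data into a strict contraction in a carefully weighted norm on $\reals^n$, and then invoke the Banach fixed-point theorem. Three ingredients are needed: (i) a componentwise Lipschitz estimate for $F(x) := \phi(Ax+b)$ in terms of the block-norm vector $\eta$ defined in \cref{eq:vect-norms}; (ii) a Perron--Frobenius construction of a strictly positive vector $v \in \reals^L$ certifying that $M := \Gamma N(A)$ contracts a cone; and (iii) a weighted $\ell_\infty$-type norm on $\reals^n$ in which $F$ becomes a strict contraction with scalar rate $\rho < 1$.

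For step (i), I would compute, for $x,y \in \reals^n$ and $l \in [L]$, that block $l$ of $F(x) - F(y)$ equals $\phi_l((Ax+b)_l) - \phi_l((Ay+b)_l)$, whose $\ell_{p_l}$-norm is at most $\gamma_l \|(A(x-y))_l\|_{p_l}$ by the BLIP assumption. Expanding $(A(x-y))_l = \sum_j A_{lj}(x_j - y_j)$, applying the triangle inequality and the definition of the induced norms $(N(A))_{lj}$ yields the componentwise inequality $\eta(F(x) - F(y)) \le M\,\eta(x - y)$ in $\reals^L$. This is the direct block-wise analogue of the CONE estimate $|F(x) - F(y)| \le |A|\,|x-y|$ used in \cref{thm:pf-wp}.

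Step (ii) is the one I expect to require the most care, because a nonnegative matrix $M$ with $\lpf(M) < 1$ need not admit a strictly positive Perron eigenvector when $M$ is reducible, so one cannot simply read off $v$ from an eigenvector computation. I would bypass this by fixing any $\rho \in (\lpf(M), 1)$; then $\lpf(M/\rho) < 1$, the Neumann series $\sum_{k \ge 0} (M/\rho)^k$ converges, and its sum equals $(I - M/\rho)^{-1}$, which is entrywise nonnegative with diagonal entries at least $1$. Setting $v := (I - M/\rho)^{-1}\ones$ produces a strictly positive $v \in \reals^L$ satisfying $(I - M/\rho)v = \ones$, equivalently $Mv \le \rho v$ componentwise with strict inequality.

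For step (iii), define
\[
\|x\|_\star := \max_{l \in [L]} \frac{\eta(x)_l}{v_l},
\]
which is a norm on $\reals^n$ since $v > 0$ and each $\|\cdot\|_{p_l}$ is a norm on $\reals^{n_l}$. Combining step (i) with the bound $\eta(x-y)_j \le \|x-y\|_\star\, v_j$ for all $j$ gives
\[
\eta(F(x) - F(y))_l \le (M\,\eta(x-y))_l \le \|x-y\|_\star (Mv)_l \le \rho\, v_l \,\|x-y\|_\star,
\]
whence $\|F(x) - F(y)\|_\star \le \rho\,\|x - y\|_\star$ with $\rho < 1$. Since $(\reals^n, \|\cdot\|_\star)$ is complete, the Banach fixed-point theorem yields a unique solution of $x = \phi(Ax+b)$ together with linear convergence of the iteration $x(t+1) = \phi(Ax(t)+b)$ from any initial point, including $x(0) = 0$, which is exactly the conclusion of the theorem.
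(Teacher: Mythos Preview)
Your proposal is correct and follows essentially the same route as the paper: establish the blockwise estimate $\eta(F(x)-F(y)) \le \Gamma N(A)\,\eta(x-y)$, convert the Perron--Frobenius hypothesis into a strict contraction in a weighted $\ell_\infty$-type norm, and conclude via the Banach fixed-point theorem. Your step~(ii), constructing the strictly positive weight vector via the Neumann series $v=(I-M/\rho)^{-1}\ones$ rather than appealing to a Perron eigenvector, is a clean way to sidestep reducibility issues and is arguably more self-contained than invoking the Collatz--Wielandt formula~\cref{eq:collatz-wiedlandt} that the paper relies on elsewhere.
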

\begin{proof}
See~\cref{app:proof_of_wellposedness-block}.
\end{proof}
% subsection tractable_sufficient_conditions (end) 

\subsection{Composition of implicit models} % (fold)
\label{sub:composition}
Implicit models can be easily composed via matrix algebra. Sometimes, the connection preserves well-posedness, thanks to the following result.

% \paragraph{Well-posedness of block-triangular matrices} % (fold)
% \label{par:well_posedness_of_block_triangular_matrices}
\begin{theorem}[Well-posedness of block-triangular matrices, componentwise activation]\label{thm:wp-tri}
Assume that the activation map $\phi$ acts componentwise. The upper block-triangular matrix
\[
A := \begin{pmatrix} A_{11} & A_{12} \\ 0 & A_{22} \end{pmatrix}
\]
with $A_{ii} \in \reals^{n_i \times n_i}$, $i=1,2$, is well-posed with respect to $\phi$ if and only if its the diagonal blocks $A_{11}, A_{22}$ are. 
\end{theorem}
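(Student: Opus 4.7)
The plan is to exploit the decoupling induced by the block-triangular structure of $A$ together with the fact that $\phi$ acts componentwise (so also blockwise with respect to the $(n_1, n_2)$ partition). Writing $x = (x_1, x_2)$ and $b = (b_1, b_2)$ conformably, and letting $\phi_1, \phi_2$ denote the restrictions of $\phi$ to the two blocks, the equation $x = \phi(Ax + b)$ splits into
\[
x_2 = \phi_2(A_{22} x_2 + b_2), \qquad x_1 = \phi_1(A_{11} x_1 + A_{12} x_2 + b_1),
\]
with the $x_2$ equation autonomous. This reduction is the heart of the argument in both directions.

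For the ``if'' direction, I would assume $A_{11}$ and $A_{22}$ are well-posed and construct the unique full solution directly: solve the autonomous $x_2$ equation using well-posedness of $A_{22}$ to obtain a unique $x_2$, then recognize the $x_1$ equation as the $A_{11}$-well-posedness equation with effective bias $b_1 + A_{12} x_2$, yielding a unique $x_1$. Uniqueness of the full $(x_1,x_2)$ follows because any solution of the full system necessarily has its $x_2$ component solving the autonomous equation (so it must equal the unique $x_2$ above) and then its $x_1$ component solving the reduced $A_{11}$ equation.

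For the ``only if'' direction, I would first show $A_{22}$ is well-posed. Existence of an $x_2$ for any given $b_2$ follows by taking any $b_1$ and projecting the unique full solution. Uniqueness is the delicate part: suppose $x_2^{(1)} \neq x_2^{(2)}$ both satisfied the $A_{22}$ equation for some $b_2$, and let $v := A_{12}(x_2^{(2)} - x_2^{(1)})$. Varying $b_1$ and reparametrizing $\tilde b_1 := b_1 + A_{12} x_2^{(1)}$, well-posedness of $A$ would force the solvability set $U := \{\tilde b_1 : x_1 = \phi_1(A_{11}x_1 + \tilde b_1) \text{ has a solution}\}$ to partition $\reals^{n_1}$ under the translates by $\{0, v\}$: for each $\tilde b_1$, exactly one of $\tilde b_1, \tilde b_1 + v$ lies in $U$, and the corresponding $x_1$ solution there is unique. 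The contradiction then comes from the Lipschitz/continuity structure of $\phi_1$ imposed by the BLIP assumption: bounded sub-activations yield $U = \reals^{n_1}$ by a Brouwer-type argument, breaking disjointness, while unbounded cases (e.g. ReLU) are ruled out by the half-space / sublevel-set geometry of $U$ that is incompatible with a $v$-translation partition.

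Once $A_{22}$ is known to be well-posed, well-posedness of $A_{11}$ follows by a direct substitution: given any $\tilde b_1 \in \reals^{n_1}$, pick any $b_2$, solve for the unique $x_2$, set $b_1 := \tilde b_1 - A_{12} x_2$, and read off the $x_1$-component of the unique full solution as the unique solution of $x_1 = \phi_1(A_{11} x_1 + \tilde b_1)$. The main obstacle throughout is the translation-partition argument for $A_{22}$ uniqueness; everything else is a routine assembly of the blockwise decomposition.
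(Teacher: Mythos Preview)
Your ``if'' direction and the deduction of $A_{11}$-well-posedness from $A_{22}$-well-posedness are correct and standard. The genuine gap is in the $A_{22}$-uniqueness step of the ``only if'' direction.

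First, a setup error: you claim the two translates $\{U,\,U+v\}$ partition $\reals^{n_1}$, but this is only valid when $S_2(b_2)=\{x_2^{(1)},x_2^{(2)}\}$ exactly. If $S_2(b_2)$ contains a third element, the full solution may select it for some $b_1$, in which case neither $\tilde b_1$ nor $\tilde b_1+v$ lies in $U$. The correct statement is that the (a priori arbitrary) family $\{U-A_{12}x_2: x_2\in S_2(b_2)\}$ partitions $\reals^{n_1}$.

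Second, and more seriously, your sketch of the contradiction is not a proof. The Brouwer argument for bounded $\phi_1$ is fine (it forces $U=\reals^{n_1}$, which cannot be partitioned nontrivially by its own translates), but ``half-space / sublevel-set geometry'' for the unbounded case is hand-waving, and you give no indication of how it would work when $n_1>1$. That some essential use of continuity is required here is shown by the following counterexample with discontinuous componentwise $\phi$: take $n_1=n_2=1$, $\phi_1(z)=-\lfloor z\rfloor$, $\phi_2(z)=\lfloor z\rfloor$, $A_{11}=1$, $A_{12}=1$, $A_{22}=1/2$. Then for every $b_2$ the equation $x_2=\lfloor x_2/2+b_2\rfloor$ has exactly two solutions (two consecutive integers), so $A_{22}$ is not well-posed; yet one checks directly that the full system has a unique solution for every $(b_1,b_2)$, so $A$ \emph{is} well-posed. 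Thus the ``only if'' direction genuinely fails without the Lipschitz hypothesis, and your argument must invoke it in a sharp way, which the present sketch does not.

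For $n_1=1$ a clean route is available: substituting $z=A_{11}x_1+\beta$ shows $U$ is the range of the continuous map $\psi(z):=z-A_{11}\phi_1(z)$, and the requirement ``unique whenever solvable'' forces $\psi$ injective, hence strictly monotone, hence $U$ is an open interval; no family of two or more translates of an open interval can partition $\reals$. Extending this to $n_1>1$, or alternatively proving continuity of the full solution in $b_1$ and using connectedness, is the real work your proposal leaves undone.
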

\begin{proof}
See~\cref{app:wp-tri}.
\end{proof}
This result establishes the fact stated previously, that when $\phi$ is the ReLU, an upper-triangular matrix $A \in \WP$ if and only if $\diag(A) < \ones$. 
A similar result holds with the lower block-triangular matrix 
\[
A := \begin{pmatrix} A_{11} & 0 \\ A_{21} & A_{22} \end{pmatrix},
\]
where $A_{12}\in \reals^{n_1 \times n_2}$ is arbitrary. 
It is possible to extend this result to activation maps $\phi$ that satisfy the block Lipschitz continuity (BLIP) condition, in which case we need to assume that the partition of $A$ into blocks is consistent with that of $\phi$. As seen later, this feature arises naturally when composing implicit models from well-posed blocks.
% paragraph well_posedness_of_block_triangular_matrices (end)
\begin{theorem}[Well-posedness of block-triangular matrices, blockwise activation]\label{thm:wp-tri-block}
Assume that the matrix $A$ can be written as
\[
A := \begin{pmatrix} A_{11} & A_{12} \\ 0 & A_{22} \end{pmatrix}
\]
with $A_{ii} \in \reals^{n_i \times n_i}$, $i=1,2$, and $\phi$ acts blockwise accordingly, in the sense that there exist two maps $\phi_1,\phi_2$ such that $\phi((z_1,z_2)) = (\phi_1(z_1),\phi_2(z_2))$ for every $z_i \in \reals^{n_i}$, $i=1,2$. Then $A$ is well-posed with respect to $\phi$ if and only if for $i=1,2$, $A_{ii}$ is well-posed with respect to $\phi_i$. 
\end{theorem}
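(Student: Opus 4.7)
The plan is to leverage the block-triangular structure of $A$ together with the blockwise action of $\phi$ to decouple the equation $x=\phi(Ax+b)$ into the pair
\begin{align*}
x_2&=\phi_2(A_{22}x_2+b_2),\\
x_1&=\phi_1(A_{11}x_1+A_{12}x_2+b_1),
\end{align*}
where $x=(x_1,x_2)$ and $b=(b_1,b_2)$ are partitioned conformally. The structural fact driving both directions is that the second equation depends only on $x_2$.

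For the forward direction (each $A_{ii}$ well-posed with respect to $\phi_i$ implies $A\in\WP$), given $(b_1,b_2)$ I would first invoke well-posedness of $A_{22}$ to obtain the unique solution $x_2^\star$ of the second equation; then, with $c_1:=A_{12}x_2^\star+b_1$, well-posedness of $A_{11}$ yields a unique $x_1^\star$ solving $x_1=\phi_1(A_{11}x_1+c_1)$. Any full-system solution must have its second coordinate in the (singleton) solution set of the $A_{22}$-equation, hence equal $x_2^\star$, and then its first coordinate must equal $x_1^\star$.

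For the converse ($A\in\WP$ implies each $A_{ii}$ well-posed), I would establish well-posedness of $A_{22}$ first and then bootstrap to $A_{11}$. The bootstrap is direct: given $c_1\in\reals^{n_1}$, pick any $b_2$ and let $x_2^\star$ be the unique solution of the $A_{22}$-equation (granted by the first step), then set $b_1:=c_1-A_{12}x_2^\star$. Well-posedness of $A$ at $(b_1,b_2)$ yields a unique $(x_1^\star,x_2^\star)$, and its first coordinate solves $x_1=\phi_1(A_{11}x_1+c_1)$; any second solution of this $A_{11}$-equation would pair with $x_2^\star$ to produce a distinct second full-system solution.

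The delicate step is uniqueness for $A_{22}$. Existence at any $b_2$ is immediate from well-posedness of $A$ at $(0,b_2)$. For uniqueness, given distinct $x_2^{(1)},x_2^{(2)}$ both solving the $A_{22}$-equation, the plan is to exhibit a single $b_1^\star$ at which the full system admits two distinct solutions, contradicting well-posedness of $A$. Writing $\bar x_1^{(k)}:=\phi_1(u^{(k)})$ for any preimage $u^{(k)}$, the pair $(\bar x_1^{(k)},x_2^{(k)})$ solves the full system exactly when $b_1=u^{(k)}-A_{11}\bar x_1^{(k)}-A_{12}x_2^{(k)}$; matching the two resulting $b_1$-values reduces to solving $H(u^{(1)})-H(u^{(2)})=A_{12}(x_2^{(1)}-x_2^{(2)})$, where $H(u):=u-A_{11}\phi_1(u)$. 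Showing this equation is always solvable, equivalently that the difference set $H(\reals^{n_1})-H(\reals^{n_1})$ equals $\reals^{n_1}$, is the main obstacle I expect: the BLIP Lipschitz structure makes $H$ a continuous, controlled perturbation of the identity whose image is large enough for the required difference to be attained (for bounded activations via a topological surjectivity argument on Lipschitz perturbations of the identity, and for unbounded activations such as ReLU by a direct argument that $H$ stretches to infinity along enough rays).
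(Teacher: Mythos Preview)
Your forward direction and your bootstrap from $A_{22}$ to $A_{11}$ are correct and match the natural argument.

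The gap is in the ``delicate step''. Your reduction of uniqueness for $A_{22}$ to solving $H(u^{(1)})-H(u^{(2)})=A_{12}(x_2^{(1)}-x_2^{(2)})$ is valid, but the justification you propose---that the BLIP structure \emph{alone} forces $H(\reals^{n_1})-H(\reals^{n_1})=\reals^{n_1}$---does not go through. Your case split ``bounded activation / unbounded activation such as ReLU'' is not exhaustive for BLIP maps. Take $\phi_1$ to be the identity (a perfectly good BLIP map) and $A_{11}=I$: then $H\equiv 0$ and the difference set collapses to $\{0\}$. Granted, in that example $A$ cannot be well-posed, so the implication you want is vacuously true there; but your argument at this step never invokes well-posedness of $A$, so it has no mechanism to rule such cases out. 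More generally, for an unbounded $\phi_1$ there is no reason $H$ should ``stretch to infinity along enough rays'' without extra input---that heuristic is specific to ReLU-type maps where $H$ restricts to the identity on a full orthant.

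The fix is to feed the well-posedness of $A$ into this step rather than relying only on BLIP. One clean route: \emph{before} tackling $A_{22}$, prove uniqueness for $A_{11}$ directly (if $x_1\neq x_1'$ both solve the $A_{11}$-equation at some $c_1$, pick any $b_2$, any $x_2\in S(b_2)$, set $b_1:=c_1-A_{12}x_2$, and obtain two distinct full-system solutions at $(b_1,b_2)$). This makes $H$ injective, so by invariance of domain the image $G:=H(\reals^{n_1})$ is open in $\reals^{n_1}$. Now well-posedness of $A$ at $(b_1,b_2)$ says that exactly one $x_2\in S(b_2)$ satisfies $A_{12}x_2+b_1\in G$; equivalently, the nonempty open translates $\{G-A_{12}x_2:x_2\in S(b_2)\}$ form a partition of $\reals^{n_1}$. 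Connectedness of $\reals^{n_1}$ then forces $|S(b_2)|=1$, which is the uniqueness you need. After that, your bootstrap to $A_{11}$ goes through unchanged.
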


%!TEX root = idl_main.tex
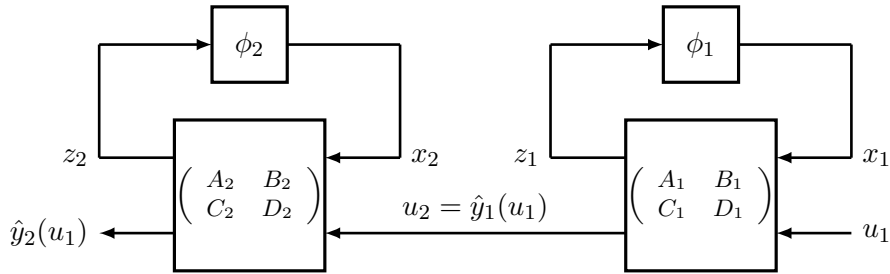
\begin{figure}[h]
\centering

\tikzstyle{system} = [rectangle,
minimum height=2cm, minimum width=2cm]
\tikzstyle{activation} = [draw, rectangle,
minimum height=1cm, minimum width=1cm]
\tikzstyle{arrow} = [thick,->,>=stealth]
\tikzstyle{line} = [draw, very thick, blue, -latex']
\tikzset{every picture/.style={line width=1pt}} 

\begin{tikzpicture}[auto,node distance=2cm,>=latex,every matrix/.append style={ampersand replacement=\&,matrix of nodes}]
	%[auto, node distance=2cm,>=latex']

%\begin{scope}[draw=blue, yshift=2.5cm]
\coordinate (topleft) at (-5,0); % (-2,0);	
\coordinate (phileft) at (-3.5,0); %(-0.5,0);	
\coordinate (bottomleft) at (-5,-1.5); %(-2,-1.5);	
\coordinate (systleft) at (-4,-1.5); %(-1,-1.5);	
\coordinate (topright) at (-1,0); % (2,0);	
\coordinate (phiright) at (-2.5,0); %(0.5,0);	
\coordinate (bottomright) at (-1,-1.5); %(2,-1.5);	
\coordinate (systright) at (-2,-1.5); %(1,-1.5);	
\coordinate (systy) at (-4,-2.5); %(-1,-2.5);
\coordinate (systu) at (-2,-2.5); %(1,-2.5);
\coordinate (A) at (-4,-1); %(-1,-1);
\coordinate (D) at (-2,-3); %(1,-3);
\coordinate (Af) at (-3.5,0.5); %(-0.5,0.5);
\coordinate (Df) at (-2.5,-0.5); %(0.5,-0.5);

\coordinate (topleft1) at (1,0); % (-2,0);
\coordinate (phileft1) at (2.5,0); %(-0.5,0);
\coordinate (bottomleft1) at (1,-1.5); %(-2,-1.5);
\coordinate (systleft1) at (2,-1.5); %(-1,-1.5);
\coordinate (topright1) at (5,0); % (2,0);
\coordinate (phiright1) at (3.5,0); %(0.5,0);
\coordinate (bottomright1) at (5,-1.5); %(2,-1.5);
\coordinate (systright1) at (4,-1.5); %(1,-1.5);
\coordinate (systy1) at (2,-2.5); %(-1,-2.5);
\coordinate (systu1) at (4,-2.5); %(1,-2.5);
\coordinate (A1) at (2,-1); %(-1,-1);
\coordinate (D1) at (4,-3); %(1,-3);
\coordinate (Af1) at (2.5,0.5); %(-0.5,0.5);
\coordinate (Df1) at (3.5,-0.5); %(0.5,-0.5);

\node[draw, very thick, fit={(A) (D)}, inner sep=0pt, label=center:\footnotesize $\left(\begin{array}{cc} A_2 & B_2 \\ C_2 & D_2 \end{array}\right)$] (a1) {};
\node[draw, very thick, fit={(Af) (Df)}, inner sep=0pt, label=center:$\phi_2$] (a2) {};
\draw[->]  (systy) -- ++ (-1,0) node[left]{$\hat{y}_2(u_1)$};
\draw (topleft) -- (bottomleft);
\draw (systleft) -- (bottomleft) node[left]{$z_2$};;
\draw[->] (topleft) -- (phileft);
\draw (topright) -- (bottomright);
\draw[<-] (systright) -- (bottomright) node[right]{$x_2$};
\draw (topright) -- (phiright);

\node[draw, very thick, fit={(A1) (D1)}, inner sep=0pt, label=center:\footnotesize$\left(\begin{array}{cc} A_1 & B_1 \\ C_1 & D_1 \end{array}\right)$] (a3) {};
\node[draw, very thick, fit={(Af1) (Df1)}, inner sep=0pt, label=center:$\phi_1$] (a4) {};
%\draw[->]  (systy1) -- ++ (-1,0) node[left]{$\hat{y}_1(u)$};
\draw[->]  (systy1) -- (systu) node[midway,above]{$u_2=\hat{y}_1(u_1)$};
\draw[<-]  (systu1) -- ++ (1,0) node[right]{$u_1$};
\draw (topleft1) -- (bottomleft1);
\draw (systleft1) -- (bottomleft1) node[left]{$z_1$};;
\draw[->] (topleft1) -- (phileft1);
\draw (topright1) -- (bottomright1);
\draw[<-] (systright1) -- (bottomright1) node[right]{$x_1$};
\draw (topright1) -- (phiright1);

%\draw[thick, <-] (1,-3) -- (6,-3) node {$u_2 = \hat{y}_1$};

\end{tikzpicture}

\caption{\label{fig:Figures_BlockDiag_Cascade} 
Cascade connection of two implicit models.
%, to be read from right to left, consistent with matrix-vector multiplication rules.
}
\end{figure}

%\end{document}

% \begin{figure}[h]
% \centering
% \includegraphics[width=.7\textwidth]{}
% \caption{Cascade connection of two implicit models.}
% \label{fig:Cascade}
% \end{figure}
Using the above results, we can preserve well-posedness of implicit models via composition. For example, given two models with matrix parameters $(A_i,B_i,C_i,D_i)$ and activation functions $\phi_i$, $i=1,2$, we can consider a ``cascaded'' prediction rule:
\[
\hat{y}_2 = C_2 x_2 +D_2 u_2 \mbox{ where } u_2 = \hat{y}_1 = C_1x_1 +D_1 u_1, \mbox{ where } x_i = \phi_i(A_ix_i+B_iu_i), \;\; i=1,2.
\]
The above rule can be represented as~\cref{eq:pred-rule}, with $x=(x_2,x_1)$, $\phi((z_2,z_1))=(\phi_2(z_2),\phi_1(z_1))$ and 
\[
\left(\begin{array}{c|c} 
A & B \\\hline C & D
\end{array}\right) = \left(\begin{array}{cc|c} 
A_2 & B_2C_1 & B_2D_1 \\
0 & A_1 & B_1 \\\hline 
C_2 & D_2C_1 & D_2D_1
\end{array}\right).
\]
Due to \cref{thm:wp-tri-block}, the cascaded rule is well-posed for the componentwise map with values $\phi(z_1,z_2)=(\phi_1(z_1),\phi_2(z_2))$ if and only if each rule is. 

A similar result holds if we put two or more well-posed models in parallel, and do a (weighted) sum the outputs. With the above notation, setting $\hat{y}(u_1,u_2) = \hat{y}_1(u_1) + \hat{y}_2(u_2)$ leads to a new implicit model that is also well-posed. Other possible connections include concatenation: $\hat{y}(u) = (\hat{y}_1(u),  \hat{y}_2(u))$, and affine transformations (a special case of cascade connection where one of the systems has no activation). We leave the details to the reader.

In both cascade and parallel connections, the triangular structure of the matrix $A$ of the composed system ensures that the PF sufficient condition for well-posedness is satisfied for the composed system if and only if it holds for each sub-system.

Multiplicative connections are in general are not Lipschitz-continuous, unless the inputs are bounded. Precisely, consider two activation maps $\phi_i$ that are Lipschitz-continuous with constant $\gamma_i$ and are bounded, with $|\phi_i(v)| \le c_i$ for every $v$, $i=1,2$; then, the multiplicative map
\[
(u_1,u_2) \in \reals^2 \rightarrow \phi(u) = \phi_1(u_1) \phi_2(u_2)
\]
is Lipschitz-continuous with respect to the $l_1$-norm, with constant $\gamma := \max \{c_2 \gamma_1, c_1 \gamma_2\}$. Such connections arise in the context of attention units in neural networks, which use  (bounded) activation maps such as $\tanh$.
\begin{figure}[h]
\centering
\includegraphics[height=.33\textheight]{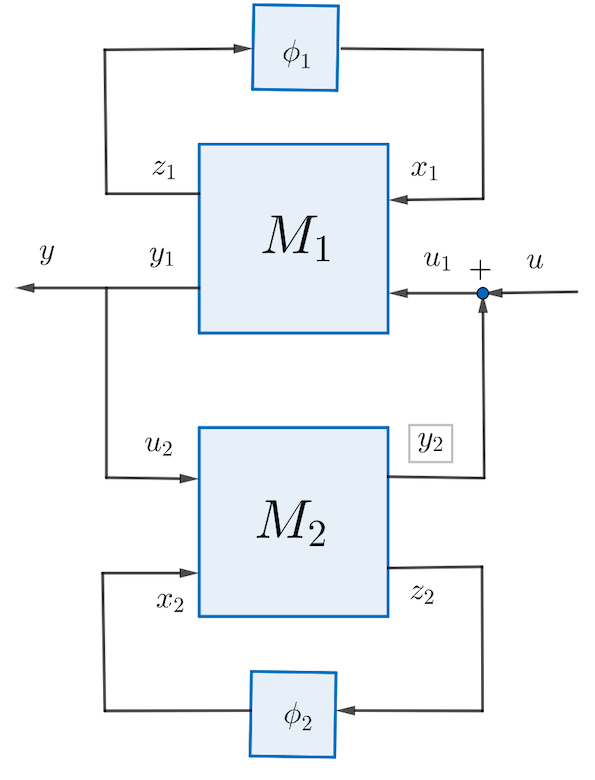}
\caption{Feedback connection of two implicit models.}
\label{fig:feedback}
\end{figure}

Finally, feedback connections are also possible. Consider two well-posed implicit systems:
\[
y_i = C_ix_i+D_iu_i , \;\; x_i = \phi_i(A_ix_i+Bu_i), \;\; i=1,2.
\]
Now let us connect them in a feedback connection: the combined system is described by the implicit rule~\cref{eq:pred-rule-top}, where $u_1 = u+y_2$, $u_2 = y_1 = y$. The feedback system is also an implicit model of the form~(\ref{eq:pred-rule-top}), with appropriate matrices $(A,B,C,D)$, and activation map acting blockwise: $\phi(z_1,z_2) = (\phi_1(z_1),\phi_2(z_2))$ and state $(x_1,x_2)$. In the simplified case when $D_1 = D_2 = 0$, the feedback connection has the model matrix
\[
\left(\begin{array}{c|c} 
A & B \\\hline C & D
\end{array}\right) = 
\left(\begin{array}{cc|c} 
A_1 & B_1C_2 & B_1 \\ B_2C_1 & A_2 & 0 \\\hline
C_1 & 0 & 0
\end{array}\right).
\]
Note that the connection is not necessarily well-posed when, even if $A_1 \in \text{WP}(\phi_1), A_2 \in \text{WP}(\phi_2) $.
% paragraph cascade_connection_of_implicit_models (end)

% subsection composition (end)

\subsection{Scaling implicit models}
\label{sub:scaling}
% It is possible to re-scale an implicit model, and modify the map $\phi$ accordingly, in order to change the corresponding Lipschitz constants. This may be useful in the context of training, as it enables working with a simple condition such as $\|A\|_\infty < 1$ when ensuring well-posedness.
% \begin{figure}[h]
% \centering
% \includegraphics[height=.2\textheight]{}
% \caption{Two ways of scaling an implicit model without altering the prediction rule itself.}
% \label{fig:heatmap}
% \end{figure}
% To be precise, assume that the activation map satisfies the BLIP condition~\cref{sub:ass-phi}: for every $z_i \in \reals^{n_i}$, $i \in [L]$, $\phi(z_1,\ldots,z_L) = (\phi_1(z_1),\ldots,\phi_L(z_L))$, and there exists positive constants $\gamma_i$, with $\phi_i/\gamma_i$ non-expansive for every $i \in [L]$. Then, defining $\tilde{\Gamma} = \diag(\gamma_i I_{n_i})_{i=1}^L$, $\tilde{\phi} = \tilde{\Gamma}^{-1} \phi$, and with $\tilde{x} = \tilde{\Gamma}^{-1}x$, $\tilde{A} = A\tilde{\Gamma}$, $\tilde{C} = C\tilde{\Gamma}$, the implicit prediction rule
% \[
% \tilde{x} = \tilde{\phi}(\tilde{A}\tilde{x} + Bu), \;\; y = \tilde{C}\tilde{x} +Bu,
% \]
% is equivalent to the original rule.

Assume that the activation map $\phi$ is componentwise non-expansive (CONE) and positively homogeneous, as is the ReLU, or its leaky version. Consider an implicit model of the form~\cref{eq:pred-rule-top}, and assume it satisfies the PF sufficient condition for well-posedness of~\cref{thm:pf-wp}: $\lpf(|A|) \le \kappa$, where $0 \le \kappa <1$ is given. Then, there is another implicit model with the same activation map $\phi$, matrices $(A',B', C',D')$, which has the same prediction rule (i.e. $y'(u)=y(u), \forall u$), and satisfies $\|A'\|_\infty < 1$.  

This result is a direct consequence of the following expression of the PF eigenvalue as an optimally scaled $l_\infty$-norm, known as the Collatz-Wielandt formula, see~\cite[p. 666]{meyer2000matrix}:
\begin{equation}
    \label{eq:collatz-wiedlandt}
\lpf(|A|) = \inf_{S} \: \|SAS^{-1}\|_\infty ~:~ S = \diag(s), \;\; s>0.
\end{equation}
When the eigenvalue is simple, the optimal scaling vector is positive: $s>0$, and the new model matrices are obtained by diagonal scaling:
\begin{equation}\label{eq:scaling-idl}
    \left(\begin{array}{c|c} A' & B' \\\hline C ' & D' \end{array}\right) = 
    \left(\begin{array}{c|c} SAS^{-1} & SB \\\hline CS^{-1} & D' \end{array}\right),
\end{equation}
where $S = \diag(s)$, with $s>0$ a Perron-Frobenius eigenvector of $|A|$.

In a training problem, this result allows us to consider the convex constraint $\|A\|_\infty <1$ in lieu of its Perron-Frobenius eigenvalue counterpart. This result also allows us to rescale any given implicit model, such as one derived from deep neural networks, so that the norm condition is satisfied; we will exploit this in our robustness analyses~\cref{sec:robustness}.
% section about_property_p (end)

\section{Implicit models of deep neural networks}
\label{sec:nn}
A large number of deep neural networks can be modeled as implicit models. Our goal here is to show how to build a well-posed implicit model for a given neural network, assuming the activation map satisfies the componentwise non-expansiveness (CONE), or the more general block Lipschitz-continuity (BLIP) condition, as detailed in~\ref{sub:ass-phi}. Thanks to the composition rules of~\cref{sub:composition}, it suffices to model individual layers, since a neural network is just a cascade connection of such layers. The block Lipschitz structure then emerges naturally as the result of composing the layers.

We will find that the resulting models always have a strictly (block) upper triangular matrix $A$, which automatically implies that these models are well-posed; in fact the equilibrium equation can be simply solved via backwards substitution. In turn, models with strictly upper triangular structure also naturally satisfy the PF sufficient condition for well-posedness: for example, in the case of a componentwise non-expansive map $\phi$, the matrix $|A|$ arising in~\cref{thm:pf-wp} is also strictly upper triangular, and therefore all of its eigenvalues are zero. A similar result also holds for the case when $\phi$ is block Lipschitz continuous, as defined in~\cref{sub:ass-phi}.

\subsection{Basic operations}
\paragraph{Activation at the output}
It is common to have an activation at the output level, as in the rule
\begin{equation}\label{eq:pred-rule-top-out}
  \hat{y}(u) = \phi_o(Cx+Du) , \;\; 
  x = \phi(Ax+Bu) ,
\end{equation}
with $\phi_o$ the output activation function.
We can represent this rule as in~\cref{eq:pred-rule-top}, by introducing a new state variable: with $\tilde{\phi} : = (\phi_o,\phi)$,
\[
\begin{pmatrix}
z \\ x
\end{pmatrix} = \tilde{\phi} \left( \begin{pmatrix} 
0 & C \\ 0 & A
\end{pmatrix} \begin{pmatrix}
z \\ x
\end{pmatrix} + \begin{pmatrix} 
D \\ B
\end{pmatrix}u
\right), \;\; 
  y = z . 
\]
The rule is well-posed with respect to $\tilde{\phi}$ if and only if the matrix $A$ is, with respect to $\phi$.

\paragraph{Bias terms and affine rules} The affine rule
\[
y = Cx+Du + d , \;\; x = \phi(Ax+Bu+b),
\]
where $d \in \reals^q$, $b \in \reals^n$, is handled by simply appending a $1$ at the end of the input vector $u$.

A related transformation is useful with activation functions $\phi$, such as the sigmoid, that do not satisfy $\phi(0) = 0$. Consider the rule
\[
y = Cx + Du, \;\; x = \phi(Ax+Bu) .
\]
Defining $\tilde{\phi}(\cdot) := \phi(\cdot)- \phi(0)$ and using the state vector $\tilde{x} := x + \phi(0)$, we may represent the above rule as
\[
y = C \tilde{x}  + \begin{pmatrix}
D & -C\phi(0)
\end{pmatrix} \begin{pmatrix}
u \\ 1
\end{pmatrix}, \;\; \tilde{x} = \tilde{\phi}(A\tilde{x}+\begin{pmatrix}
B & -A\phi(0)
\end{pmatrix} \begin{pmatrix}
u \\ 1
\end{pmatrix}) .
\]
Again, the rule is well-posed if and only if the matrix $A$ is.

\paragraph{Batch normalization} Batch normalization consists in including in the prediction rule a normalization step using some estimates of input mean $\hat{u}$ and variance ${\sigma}>0$ that are based on a batch of training data. The parameters $\hat{u},{\sigma}$ are given at test time. This step is a simple affine rule:
\[
y = Du + d, \mbox{ where } [D,d] := \diag(\sigma)^{-1}[I,-\hat{u}].
\]

\subsection{Fully connected feedforward neural networks} % (fold)
\label{sub:feedforward_neural_networks_are_a_special_case}
Consider the following prediction rule, with $L>1$ fully connected layers:
\begin{equation}\label{eq:standard-nn-pred-rule}
\hat{y}(u) = W_L x_L, \;\; x_{l+1} = \phi_l(W_l x_l), \;\; x_0 = u.
\end{equation}
Here $W_l \in \reals^{n_{l+1}\times n_l}$ and $\phi_l \:: \: \reals^{n_{l+1}} \rightarrow \reals^{n_{l+1}}$, $l=1,\ldots,L$, are given weight matrices and activation maps, respectively. We can express the above rule as~\cref{eq:pred-rule}, with $x = (x_L,\ldots,x_1)$, and
\begin{equation}\label{eq:standard-nn-pred-rule-implicit-model}
\left(\begin{array}{c|c} 
A & B \\\hline C & D
\end{array}\right)
= 
\left(\begin{array}{cccc|c} 
0 & W_{L-1} & \ldots & 0 & 0 \\
& 0 & \ddots & \vdots & \vdots \\
&& \ddots & W_1 & 0 \\
&& & 0 & W_0 \\\hline
W_{L} & 0 & \ldots &0 & 0
\end{array}\right) ,
\end{equation}
and with an appropriately defined blockwise activation function $\phi$, defined as operating on an  $n$-vector $z=(z_L,\ldots,z_1)$ as $\phi(z) = (\phi_L(z_L), \ldots,\phi_1(z_1))$. Due to the strictly upper triangular structure of $A$, the system is well-posed, irrespective of $A$; in fact the equilibrium equation $x = \phi(Ax+Bu)$ is easily solved via backward block substitution, which corresponds to a simple forward pass through the network. 

\subsection{Convolutional layers and max-pooling}
\label{sub:cnn}
A single convolutional layer can be represented as a linear map:
$y = Du$, where $u$ is the input, $D$ is a matrix that represents the (linear) convolution operator, with a ``constant-along-diagonals'', Toeplitz-like structure. For example a 2D convolution with a 2D kernel $K$ takes a $3 \times 3$ matrix $U$ and produces a $2 \times 2$ matrix $Y$. With
\[
U = \begin{pmatrix} u_1 & u_2 & u_3 \\ u_4 & u_5 & u_6 \\ u_7 & u_8 & u_9 \end{pmatrix}, \;\; K = \begin{pmatrix}
k_1 & k_2 \\ k_3 & k_4 \end{pmatrix} ,
\]
the convolution can be represented as $y=Du$, with $y,u$ vectors formed by stacking the rows of $U,Y$ together, and
\[
D = \begin{pmatrix} 
k_1 & k_2 & 0 & k_3 & k_4 & 0 & 0 & 0 & 0 \\
0 & k_1 & k_2 & 0 & k_3 & k_4 & 0 & 0 & 0 \\
0 & 0 & 0 & k_1 & k_2 & 0 & k_3 & k_4 & 0 \\
0 & 0 & 0 & 0 & k_1 & k_2 & 0 & k_3 & k_4 
\end{pmatrix}.
\]
\begin{figure}[h!]
\centering
\includegraphics[width=0.5\columnwidth]{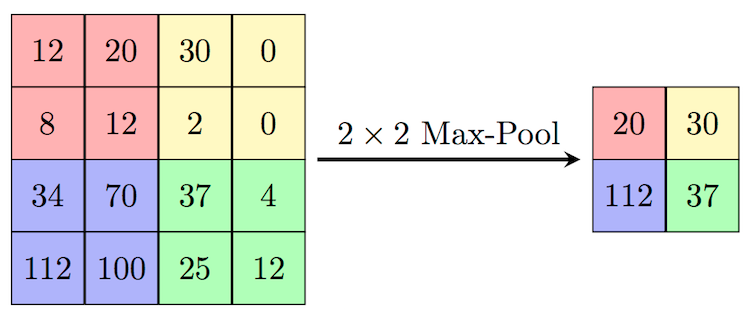}
\caption{\label{fig:max-pool} A max-pooling operation: the smaller image contains the maximal pixel values of each colored area.}
\end{figure}
Often, a convolutional layer is combined with a max-pooling operation. The latter forms a down-sample of an image, which is a  smaller image that contains the largest pixel values of specific sub-areas of the original image. Such an operation can be represented as 
\[
y_j = \max_{1 \le i \le h} \: (B_{j} u)_i, \;\; j \in [q].
\]
In the above, $p = qh$, and the matrices $B_{j} \in \reals^{h \times p}$, $j \in [q]$, are used to select specific sub-areas of the original image. In the example of~\cref{fig:max-pool}, the number of pixels selected in each area is $h = 4$, the output dimension is $q = 4$, that of the input is $p = qh = 16$; vectorizing images row by row:
\[
\begin{pmatrix} 
B_1 \\\hline B_2 \\\hline B_3 \\\hline B_4 
\end{pmatrix} = 
\diag(M,M) \in \reals^{16 \times 16}, \;\; M:=
\begin{pmatrix} 
I_2 & 0 & 0 & 0 \\
0 & 0 & I_2 & 0 \\
0 & I_2 & 0  & 0 \\
0 & 0 & 0 & I_2 
\end{pmatrix} \in \reals^{8 \times 8},
\]
where $I_2$ is the $2 \times 2$ identity matrix. 

Define the mapping $\phi \::\: \reals^n \rightarrow \reals^n$, where $n = p$, as follows. For a $p$-vector $z$ decomposed in $q$ blocks $(z_1,\ldots,z_q)$, we set
$\phi(z_1,\ldots,z_q) = (\dsp\max(z_1) , \ldots, \max(z_q), 0, \ldots, 0)$. (The padded zeroes are necessary in order to make sure the input and output dimensions of $\phi$ are the same). We obtain the implicit model
\[
y = C\phi(Bu) = Cx , \mbox{ where } x := \phi(Bu).
\]
Here $C$ is used to select the top $q$ elements, 
\[
C = \begin{pmatrix} 
I_q & 0 & \ldots & 0 
\end{pmatrix} , \;\; 
B := \begin{pmatrix} 
B_1^\top & \ldots & B_q^\top 
\end{pmatrix}^\top .
\]
The Lipschitz constant of the max-pooling activation map $\phi$, with respect to the $l_\infty$-norm, is $1$, hence it verifies the BLIP condition of~\cref{sub:ass-phi}.
% subsection feedforward_neural_networks_are_a_special_case (end)

\subsection{Residual nets}
A building block in residual networks involves the relationship illustrated in~\cref{fig:resnets}.
\begin{figure}
    \centering
    \begin{minipage}{0.5\textwidth}
        \centering
 \includegraphics[width=.4\columnwidth]{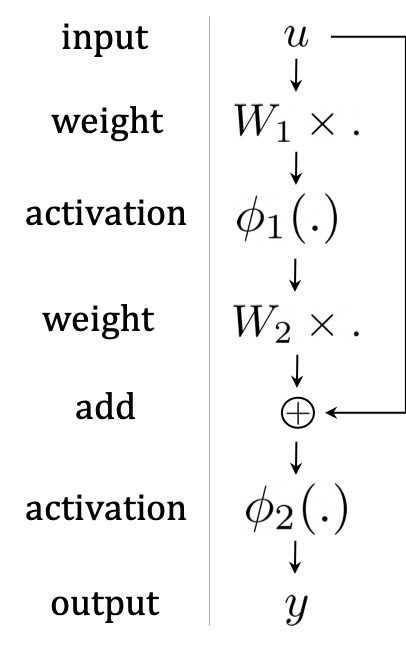}
\caption{\label{fig:resnets} Building block of residual networks.}
    \end{minipage}\hfill
    \begin{minipage}{0.5\textwidth}
        \centering
\includegraphics[width=1\columnwidth]{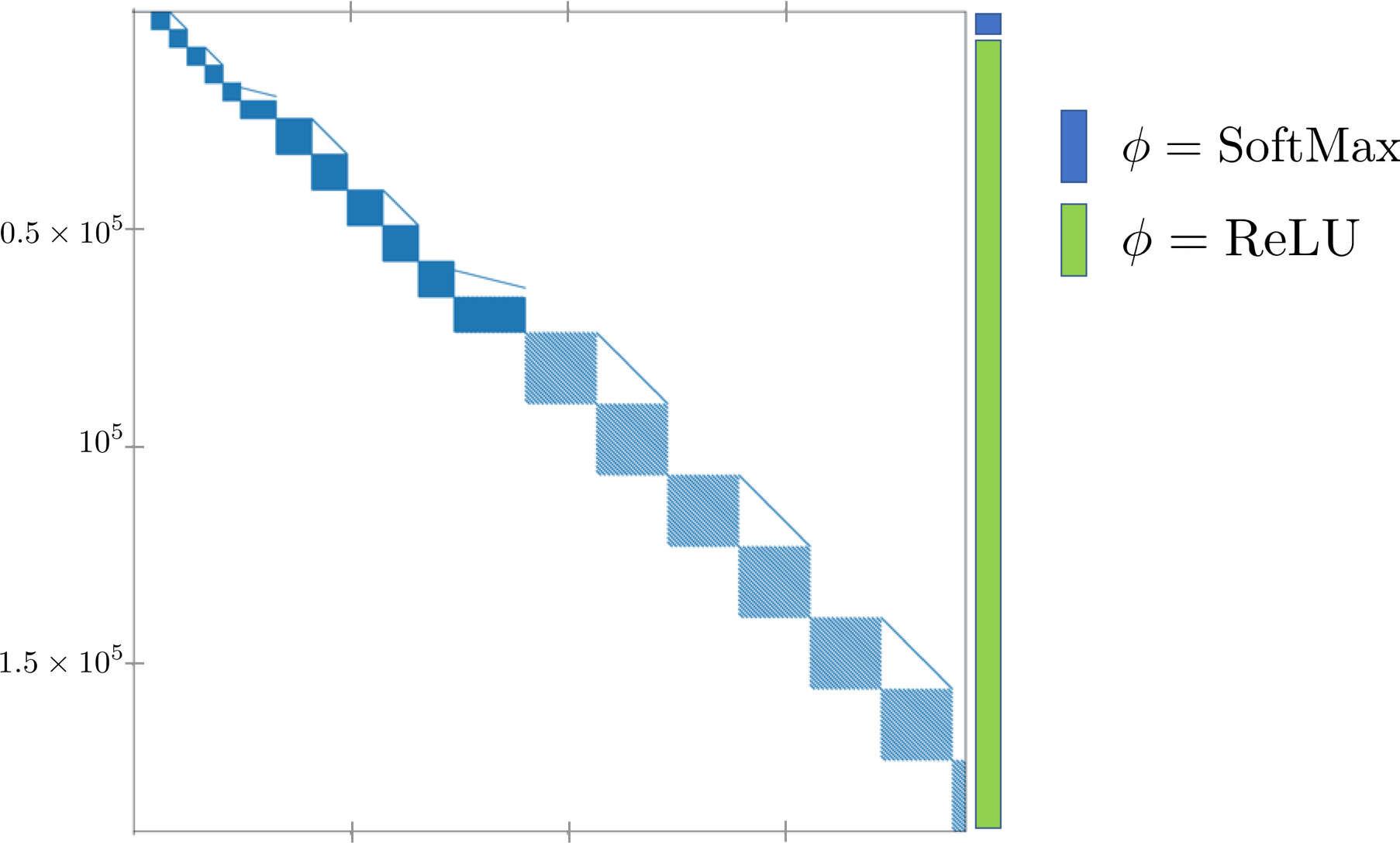}
\caption{\label{fig:resnet20} The matrix $A$ for a 20-layer residual network. Nonzero elements are colored in blue.}
    \end{minipage}
\end{figure}
Mathematically, a residual block combines two linear operations, with non-linearities in the middle and the end, and adds the input to the resulting output:
\[
y = \phi_2(u+W_2\phi_1(W_1u)),
\]
where $W_1,W_2$ are weight matrices of appropriate size.
The above is a special case of the implicit model~\cref{eq:pred-rule-top}: defining the blockwise map $\phi(z_2,z_1) = (\phi_2(z_2),\phi_1(z_1))$, 
\[
\begin{pmatrix} x_2 \\ x_1 \end{pmatrix} = \phi \left( 
\begin{pmatrix} 0 & W_2 \\ 0 & 0 \end{pmatrix}\begin{pmatrix} x_2 \\ x_1 \end{pmatrix} + \begin{pmatrix} I \\ W_1 \end{pmatrix}u \right), \;\;
y = x_2.
\]

\cref{fig:resnet20} displays the model matrix $A$ for a $20$-layer residual network. Convolutional layers appear as matrix blocks with Toeplitz (constant along diagonal) structure; residual unit correspond to the straight lines on top of the blocks. The network only uses the ReLU map, except for the last layer, which uses a softmax map.
% \textbf{XXX WHY THIS SPECIFIC ARCHITECTURE, is that from something famous?? XXX}

\subsection{Recurrent units}
\label{sub:rnns}

%\begin{figure}[h]
%\centering
%\includegraphics[height=.1\textheight]{}
%\caption{A recurrent neural network.}
%\label{fig:rnn}
%\end{figure}
Recurrent neural nets (RNNs) can be represented in an ``unrolled'' form as shown in~\cite{kawakami2008supervised}, which is the perspective we will consider here. The input to an RNN block is a sequence of vectors $\{u_1, \cdots, u_T \}$, where for every $t \in [T]$, $u_t \in \reals^p$. At each time step, the network takes in a input $u_t$ and the previous hidden state $h_{t-1}$ to produce the next hidden state $h_{t}$; the hidden state $h_t$ defines the state space or ``memory" of the network. The rule can be written as, 
    \begin{align}
        h_{t} = \phi_H(W_H h_{t-1} + W_I u_t), \;\; y_t = \phi_O(W_O h_{t}), \;\; t=1,\ldots,T,
        \label{rnn-hidden}
    \end{align}
Then, equations~\cref{rnn-hidden} can be expressed as an implicit model ~\cref{eq:pred-rule-top-out}, with $x = (h_{T}, \ldots, h_0)$, $u = (u_T, 
\dots, u_1)$, and weight matrices
    \begin{align}
        \left(
            \begin{array}{c|c}
                A & B \\ \hline
                C & 0 
            \end{array}
        \right)
        = 
        \left(
            \begin{array}{ccccc|cccc}
                0   &   W_H   & \cdots & 0        & 0       &   W_I  & \cdots & 0       & 0      \\
                0   &   0     & W_H    & \vdots   & \vdots  &   0    & W_I    & \vdots  & \vdots \\
                    &         & \ddots & \ddots   & 0       &        & \ddots & \ddots  & 0      \\
                    &         &        & 0        & W_H     &        &        & 0       & W_I    \\  
                \hline
                W_O & 0       & \cdots & 0        & 0       &   0    & \cdots & \cdots  & 0
            \end{array}
        \right)
    \end{align}
where $A, B$ are strictly upper block triangular and share the same block diagonal sub-matrices, $W_H$ and $W_I$ respectively, shared across all hidden states and inputs. 

Note that the above approach leads to matrices $A,B,C,D$ with a special structure; during training and test time, it is possible to exploit that structure in order to avoid ``unrolling'' the recurrent layers.

% This Lipschitz constant may not be less than one, and the corresponding activation map not expansive. As seen in~\cref{sub:scaling}, we may then simply scale the model matrices so as to ensure (block) non-expansiveness.

\subsection{Multiplicative units: LSTM, attention mechanisms and variants}
Some deep learning network architectures use multiplicative units. As mentioned in~\cref{sub:composition}, multiplication between variables in not Lipschitz-continuous in general, unless the inputs are bounded. Luckily, most of the multiplicative units used in practice involve bounded inputs.

In the case of Long Short-Term Memory (LSTM) \cite{yu2019review} or gated recurrent units (GRUs) \cite{cho2014learning}, a basic building block involves the \textit{product} of two input variables after each one passes through a bounded non-linearity. Precisely, the output takes the form
\[
y = \phi(u) := \phi_1(u_1) \phi_2(u_2) ,
\]
where $u_1,u_2 \in \reals$, and $\phi_1,\phi_2$ are both \textit{bounded} (scalar) non-linearities.  

Attention models \cite{bahdanau2014neural} use componentwise vector multiplication, usually involving a softmax operation~\cref{eq:soft-max}:
\[
y = \phi(u) = \phi_1(u_1) \odot \mbox{SoftMax}(u_2),
\]
%XXX Wrong, in fact it is
%\[
%y = \phi(u) := \phi_1(u_1) \phi_2(u_2 \odot u_3) ,
%\]
where $\phi_1$ is a bounded, componentwise Lipschitz-continuous activation map, such as the sigmoid; and $W$ is a matrix of weights. The map $\phi$ is then Lipschitz-continuous with respect to the $l_1$-norm. As shown in Section \ref{sub:definition}, we can formulate these multiplicative units as well-posed implicit models.

\section{Robustness} % (fold)
\label{sec:robustness}
In this section, our goal is to analyze the robustness properties of a given implicit model~\cref{eq:pred-rule}. We seek to bound the state, output and loss function, under unknown-but-bounded inputs. This robustness analysis task is of interest in itself for diagnosis or for generating adversarial attacks. It will also inform our choice of appropriate penalties or constraints in the training problem. We assume that the activation map is Blockwise Lipschitz, and that the matrix $A$ of the implicit model satisfies the sufficient conditions for well-posedness outlined in~\cref{thm:pf-wp-block}.

\subsection{Input uncertainty models} % (fold)
\label{sub:input_uncertainty_model}
We assume that the input vector is uncertain, and only known to belong to a given set ${\cal U} \subseteq \reals^{p}$. Our results apply to a wide variety of sets ${\cal U}$; we will focus on the following two cases.

A first case corresponds to a box:
\begin{equation}\label{eq:U-def}
{\cal U}^{\rm box} := \left\{ u \in \reals^{p} ~:~ |u-u^0| \le \sigma_u \right\}.
\end{equation}
Here, the $p$-vector $\sigma_u>0$ is a measure of componentwise uncertainty affecting each data point, while $u^0$ corresponds to a vector of ``nominal'' inputs.  The following variant limits the number of changes in the vector $u$:
\begin{equation}\label{eq:U-def-card}
{\cal U}^{\rm card} := \left\{ u \in \reals^{p} ~:~ |u-u^0| \le \sigma_u, \;\; \Card(u-u^0) \le k \right\},
\end{equation}
where $\Card$ denotes the cardinality (number of non-zero components) in its vector argument, and $k <p$ is a given integer.
% subsection input_uncertainty_model (end)

\subsection{Box bounds on the state vector} % (fold)
\label{sub:bounds_on_the_state_vector}
Assume first that $\phi$ is a CONE map, and that the input is only known to belong to the box set ${\cal U}^{\rm box}$~\cref{eq:U-def}. We seek to find componentwise bounds on the state vector $x$, of the form $|x-x^0| \le \sigma_x$, with $x$ and $x^0$ the unique solution to $\xi = \phi(A\xi+Bu)$ and $\xi = \phi(A\xi + Bu^0)$ respectively, and $\sigma_x >0$.
We have
\begin{align*}
|x-x^0| &= |\phi(Ax+Bu) - \phi(Ax^0+Bu^0)| \\
&\le|A(x-x^0) + B(u-u^0)| \le |A||x-x^0| + |B(u-u^0)| , %\sigma_u ,
\end{align*}
which shows in particular that
\[
\|x-x^0\|_\infty \le \|A\|_\infty \|x-x^0\|_\infty + \|B(u-u^0)\|_\infty,
\]
hence, provided $\|A\|_\infty <1$, we have:
\begin{equation}\label{eq:simple-x}
\|x-x^0\|_\infty \le \frac{\||B|\sigma_u\|_\infty}{1-\|A\|_\infty} . 
\end{equation}
With the cardinality constrained uncertainty set ${\cal U}^{\rm card}$~\cref{eq:U-def-card}, we obtain 
\begin{equation}\label{eq:simple-x-card}
\|x-x^0\|_\infty \le \frac{\delta}{1-\|A\|_\infty} , \;\;
\delta := \max_{1 \le i \le n} \: s_k(\sigma_u \odot |B|^\top e_i),
\end{equation}
with $e_i$ the $i$-th unit vector in $\reals^n$, and $s_k$ the sum of the top $k$ entries in a vector.

We may refine the analysis above, with a ``box'' (componentwise) bound. 
\begin{theorem}[Box bound on state, CONE map]\label{thm:box-bnd}
Assuming that $\phi$ is componentwise non-expansive, and $\lpf(|A|) < 1$. then, $I-|A|$ is invertible, and 
\begin{equation}\label{eq:box-x}
|x-x^0| \le \sigma_x : = (I-|A|)^{-1} |B|\sigma_u.
\end{equation}
\end{theorem}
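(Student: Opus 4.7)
The plan is to reduce the theorem to a standard Perron--Frobenius / Neumann-series argument applied to the componentwise inequality already essentially derived in the text just above the theorem statement. I would proceed in three short steps: (i) establish a componentwise recursion on $|x-x^0|$, (ii) invert $I-|A|$ and certify that its inverse is entrywise nonnegative, and (iii) combine the two to obtain the claimed box bound.

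\emph{Step 1: componentwise inequality.} Since $x=\phi(Ax+Bu)$ and $x^0=\phi(Ax^0+Bu^0)$, the CONE property \cref{eq:cone-cond} gives, componentwise,
\[
|x-x^0| \;=\; |\phi(Ax+Bu)-\phi(Ax^0+Bu^0)| \;\le\; |A(x-x^0)+B(u-u^0)| \;\le\; |A|\,|x-x^0| + |B|\,|u-u^0|.
\]
Using $|u-u^0|\le \sigma_u$ from the definition of ${\cal U}^{\rm box}$ in~\cref{eq:U-def} yields $(I-|A|)\,|x-x^0| \le |B|\sigma_u$, again componentwise.

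\emph{Step 2: invertibility and nonnegativity of $(I-|A|)^{-1}$.} The hypothesis $\lpf(|A|)<1$ ensures that $1$ is strictly larger than the modulus of every eigenvalue of $|A|$, so $I-|A|$ is nonsingular. Moreover, since $|A|\ge 0$ entrywise and its spectral radius is less than one, the Neumann series $\sum_{k\ge 0}|A|^k$ converges and equals $(I-|A|)^{-1}$; as a sum of entrywise nonnegative matrices, this inverse is itself entrywise nonnegative.

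\emph{Step 3: conclusion.} Entrywise nonnegativity of $(I-|A|)^{-1}$ means that left-multiplication by it preserves componentwise inequalities between nonnegative vectors. Applying it to $(I-|A|)\,|x-x^0|\le |B|\sigma_u$ yields
\[
|x-x^0| \;\le\; (I-|A|)^{-1}|B|\sigma_u \;=\; \sigma_x,
\]
which is the claimed bound. The only mildly delicate point in this plan is step 2, where one must remember that the Perron--Frobenius spectral-radius hypothesis, not merely invertibility of $I-|A|$, is what delivers entrywise nonnegativity of the inverse; without this, the monotonicity argument in step 3 would fail.
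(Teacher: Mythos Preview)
Your proof is correct and follows essentially the same route as the paper: the componentwise Lipschitz inequality $|x-x^0|\le |A|\,|x-x^0|+|B|\sigma_u$ (which the paper already derives just before the theorem), combined with the Neumann-series representation $(I-|A|)^{-1}=\sum_{k\ge 0}|A|^k\ge 0$ valid under $\lpf(|A|)<1$, immediately yields the bound. One tiny wording issue: in Step~3 you say multiplication by $(I-|A|)^{-1}$ preserves inequalities ``between nonnegative vectors,'' but in fact an entrywise nonnegative matrix preserves \emph{any} componentwise inequality $v\le w$ (since $w-v\ge 0$ implies $M(w-v)\ge 0$); this is what you actually need, since $(I-|A|)|x-x^0|$ can have negative entries.
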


\begin{proof}
See~\cref{app:box-state-bnds}.
\end{proof}
Note that the box bound can be computed via $(I-|A|)\sigma_x = |B| \sigma_u$, as the limit point of the fixed-point iteration
\[
\sigma_x(0) = 0, \;\; \sigma_x(t+1) = |A|\sigma_x(t) + |B|\sigma_u, \;\; t=0,1,2,\ldots,
\]
which converges since $\lpf(|A|) < 1$.

% Our bound is related to the ``interval propagation'' approach~\cite{} that has been proposed for feedforward networks, whereby interval bounds are propagated through the layers. This leads to 
% \[
% |x_{l+1}-x^0_{l+1}| \le |W_l|
% \]
% Consider the standard ReLU-based feedforward neural network~\cref{eq:standard-nn-pred-rule}. As detailed in~\cref{sub:feedforward_neural_networks_are_a_special_case}, we can express the feedfoward rule as an implicit model~\cref{eq:pred-rule-top}, with a strictly upper triangular matrix $A$. The rule is well-posed, since $0 = \lpf(|A|) <1$. The matrix $(I-|A|)^{-1}$, which appears instrumentally in the state bound~\cref{eq:box-x}, has a very structured form. For example, if $L=3$, we have
% \[
% (I-|A|)^{-1} = 
% \left(\begin{array}{ccc} 
% I & -|W_{2}| & 0 \\
% 0 & I & -|W_1| \\
% 0 & 0 & I 
% \end{array}\right)^{-1} = \left(\begin{array}{ccc} 
% I & |W_{2}| & |W_1|\cdot|W_2| \\
% 0 & I & |W_1| \\
% 0 & 0 & I 
% \end{array}\right)
% \]
% We observe that the matrix contains the terms $|W_1|$, $|W_2|$, $|W_1|\cdot|W_2|$, which is what we expect from the layer-wise analysis. 
% % subsection comparison_with_other_approaches (end)

The case when $\phi$ is block Lipschitz (BLIP) involves the matrix of norms $N(A)$ defined in~\cref{eq:induced-norm-matrix}, as well as a related matrix defined for the input matrix $B$. Decomposing $B$ into blocks $B = (B_{li})_{l \in [L],i \in [p]}$, with $B_{li} \in \reals^{n_l}$, $l \in [L]$, we define the $L \times p$ matrix of norms
\begin{align}\label{B-blip-vecnorm}
N(B) := (\|B_{li}\|_{p_l})_{l \in [L], \: i \in [p]}.
\end{align}

\begin{theorem}[Box bound on the vector norms of the state, BLIP map]\label{thm:box-bnd-blip}
Assuming that $\phi$ is BLIP, and the corresponding sufficient well-posedness condition of~\cref{thm:pf-wp-block} is satisfied. Define $\Gamma := \diag(\gamma)$, where $\gamma$ is the vector of Lipschitz constants. Then, $I-\Gamma N(A)$ is invertible, and
\begin{equation}\label{eq:box-x-blip}
\eta(x-x^0) \le (I-\Gamma N(A))^{-1} \Gamma N(B)\sigma_u,
\end{equation}
where the vector of norms function $\eta(\cdot)$ is defined in~\cref{eq:vect-norms}.
\end{theorem}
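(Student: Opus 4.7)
The plan is to mirror the proof of the CONE case (\cref{thm:box-bnd}), replacing componentwise absolute values by blockwise norms and exploiting the BLIP Lipschitz property together with the induced norm inequality block by block.

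First I would start from the fact that under the hypothesis of~\cref{thm:pf-wp-block}, both $x$ and $x^0$ are the unique solutions of $\xi = \phi(A\xi + Bu)$ and $\xi = \phi(A\xi + Bu^0)$, respectively, so that
\[
x - x^0 \;=\; \phi(Ax+Bu) - \phi(Ax^0+Bu^0).
\]
Fix a block index $l \in [L]$. Since $\phi$ is BLIP with constant $\gamma_l$ on block $l$, applying $\|\cdot\|_{p_l}$ to the $l$-th block on both sides gives
\[
\|x_l - x^0_l\|_{p_l} \;\le\; \gamma_l\,\bigl\|[A(x-x^0)]_l + [B(u-u^0)]_l\bigr\|_{p_l}.
\]
By the triangle inequality, this is at most $\gamma_l\bigl(\|\sum_h A_{lh}(x_h-x^0_h)\|_{p_l} + \|\sum_i B_{li}(u_i-u^0_i)\|_{p_l}\bigr)$.

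Next, for the $A$-term I would use the definition of the induced norm matrix~\cref{eq:induced-norm-matrix}: for every $h \in [L]$,
\[
\|A_{lh}(x_h-x^0_h)\|_{p_l} \;\le\; \|A_{lh}\|_{p_h\to p_l}\,\|x_h-x^0_h\|_{p_h} \;=\; (N(A))_{lh}\,\eta(x-x^0)_h.
\]
For the $B$-term, each $B_{li} \in \reals^{n_l}$, so $\|B_{li}(u_i-u^0_i)\|_{p_l} = (N(B))_{li}\,|u_i-u^0_i| \le (N(B))_{li}\,\sigma_{u,i}$. Summing over $h$ and $i$, I obtain the componentwise vector inequality
\[
\eta(x-x^0) \;\le\; \Gamma N(A)\,\eta(x-x^0) + \Gamma N(B)\,\sigma_u.
\]

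Finally, I would invoke the standard Perron-Frobenius fact that for a non-negative square matrix $M$ with $\lpf(M) < 1$, the matrix $I - M$ is invertible and $(I-M)^{-1} = \sum_{k \ge 0} M^k$ has non-negative entries. Applied to $M := \Gamma N(A) \ge 0$ with $\lpf(M) < 1$ (by hypothesis), this yields invertibility and non-negativity of $(I - \Gamma N(A))^{-1}$. Rearranging the displayed inequality as $(I - \Gamma N(A))\,\eta(x-x^0) \le \Gamma N(B)\sigma_u$ and multiplying on the left by the non-negative matrix $(I - \Gamma N(A))^{-1}$ preserves the inequality, giving the desired bound~\cref{eq:box-x-blip}.

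The only nontrivial step is the last one: one must justify that left-multiplication by $(I - \Gamma N(A))^{-1}$ preserves componentwise inequalities, which hinges precisely on this matrix being entrywise non-negative. This is where the Perron-Frobenius spectral condition $\lpf(\Gamma N(A)) < 1$ plays its essential role; everything else is a block-by-block application of the BLIP property and the induced norm definition.
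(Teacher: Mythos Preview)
The proposal is correct and follows essentially the same approach as the paper: it is the direct blockwise analogue of the CONE argument displayed in the text before \cref{thm:box-bnd}, replacing componentwise absolute values by the vector of norms $\eta(\cdot)$, using the BLIP Lipschitz constants and the induced-norm matrix $N(A)$, and then inverting $I-\Gamma N(A)$ via the Neumann series granted by $\lpf(\Gamma N(A))<1$.
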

\begin{proof}
See~\cref{app:box-state-bnds}.
\end{proof}
% subsection bounds_on_the_state_vector (end)

\subsection{Bounds on the output and the sensitivity matrix} % (fold)
\label{sub:bounds_on_the_output}

% , and demonstrate that the sufficient condition for well-posedness of~\cref{thm:pf-wp-block} ensures that the prediction rule displays block Lipschitz continuity properties.

The above allows us to analyze the effect of effect of input noise on the output vector $y$. Let us assume that the function $\phi$ satisfies the CONE (componentwise non-expansiveness) condition~\cref{eq:cone-cond}. In addition, we assume that the stronger condition for well-posedness, $\|A\|_\infty <1$, is satisfied. (we can always rescale the model so as to ensure that property, provided $\lpf(|A|) <1$.)  For the implicit prediction rule~\cref{eq:pred-rule-top}, we then have
\[
\forall \: u, u^0 ~:~ \|\hat{y}(u)-\hat{y}(u^0)\|_\infty \le \kappa \|u-u^0\|_\infty,
\]
 where
\begin{equation}\label{eq:kappa-def}
\kappa := \frac{\|B\|_\infty  \|C\|_\infty}{1-\|A\|_\infty} + \|D\|_\infty.
\end{equation}
The above shows that the prediction rule is Lipschitz-continuous, with a constant bounded above by $\kappa$. This result motivates the use of the $\|\cdot\|_\infty$ norm as a penalty on model matrices $A,B,C,D$, for example via a convex penalty that bounds the Lipschitz constant,
\begin{equation}\label{eq:penalty-rob}
\kappa \le P(A,B,C,D) := \frac{1}{2}\frac{\|B\|_\infty^2 + \|C\|_\infty^2}{1-\|A\|_\infty} + \|D\|_\infty.
%\|A\|_\infty+\|B\|_\infty+\|B\|_\infty+\|D\|_\infty.
\end{equation}

We can refine this analysis with the following theorem.
\begin{theorem}[Box bound on the output, CONE map]\label{thm:box-bnd-output}
Assuming that $\phi$ is componentwise non-expansive, and $\lpf(|A|) < 1$. Then, $I-|A|$ is invertible, and
\begin{equation}\label{eq:box-y}
\forall \: u, u^0 ~:~ |\hat{y}(u)-\hat{y}(u^0)| \le S |u-u^0|,
\end{equation}
where the (non-negative) matrix
\[
S := |C|(I-|A|)^{-1}|B|+|D|
\]
is a ``sensitivity matrix'' of the implicit model with a CONE map.
\end{theorem}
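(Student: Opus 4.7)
The plan is to mimic the derivation used for the state bound (\cref{thm:box-bnd}), but apply it pointwise in $|u - u^0|$ rather than in a uniform box-radius $\sigma_u$, and then push the resulting bound through the affine output equation $\hat{y}(u) = Cx + Du$.

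First I would write
\[
\hat{y}(u) - \hat{y}(u^0) = C(x - x^0) + D(u - u^0),
\]
where $x, x^0$ are the unique solutions of the equilibrium equation associated with $u, u^0$ (unique thanks to $\lpf(|A|) < 1$ and \cref{thm:pf-wp}). Applying $|\cdot|$ componentwise and using the triangle inequality gives
\[
|\hat{y}(u) - \hat{y}(u^0)| \le |C|\,|x - x^0| + |D|\,|u - u^0|,
\]
so everything reduces to a componentwise bound on $|x - x^0|$ in terms of $|u - u^0|$ itself (not a preassigned $\sigma_u$).

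Next I would reproduce the argument preceding \cref{thm:box-bnd}: using the CONE property of $\phi$ and the implicit equation,
\[
|x - x^0| = |\phi(Ax + Bu) - \phi(Ax^0 + Bu^0)| \le |A|\,|x - x^0| + |B|\,|u - u^0|.
\]
The key auxiliary fact is that $\lpf(|A|) < 1$ implies that $I - |A|$ is invertible and its inverse is \emph{entrywise nonnegative}, since the Neumann series
\[
(I - |A|)^{-1} = \sum_{k=0}^{\infty} |A|^{k}
\]
converges and all summands are nonnegative matrices. Rearranging the displayed inequality as $(I - |A|)|x - x^0| \le |B|\,|u - u^0|$ and left-multiplying by the nonnegative matrix $(I - |A|)^{-1}$ preserves the inequality, yielding
\[
|x - x^0| \le (I - |A|)^{-1}|B|\,|u - u^0|.
\]

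Finally I would substitute this into the earlier bound on $|\hat{y}(u) - \hat{y}(u^0)|$:
\[
|\hat{y}(u) - \hat{y}(u^0)| \le \bigl(|C|(I - |A|)^{-1}|B| + |D|\bigr)|u - u^0| = S\,|u - u^0|,
\]
which is the claim. The one point that deserves care, and the only real ``obstacle,'' is the monotonicity step where one left-multiplies by $(I - |A|)^{-1}$; this requires the nonnegativity of the inverse, and that is precisely what the Perron--Frobenius hypothesis $\lpf(|A|) < 1$ buys us through the Neumann-series representation. Everything else is componentwise triangle inequality and the CONE property already recorded in \cref{sub:ass-phi}.
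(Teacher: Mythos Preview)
Your proposal is correct and follows essentially the same route as the paper: apply the CONE contraction inequality to get $|x-x^0|\le |A||x-x^0|+|B||u-u^0|$, use $\lpf(|A|)<1$ and the Neumann series to obtain the nonnegative inverse $(I-|A|)^{-1}$ and hence $|x-x^0|\le (I-|A|)^{-1}|B||u-u^0|$, then push through $\hat y = Cx+Du$. Your identification of the monotonicity step (left-multiplying a componentwise inequality by the entrywise nonnegative $(I-|A|)^{-1}$) as the only delicate point is exactly right.
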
 

\begin{figure}[h]
\centering 
\includegraphics[height=.2\textheight]{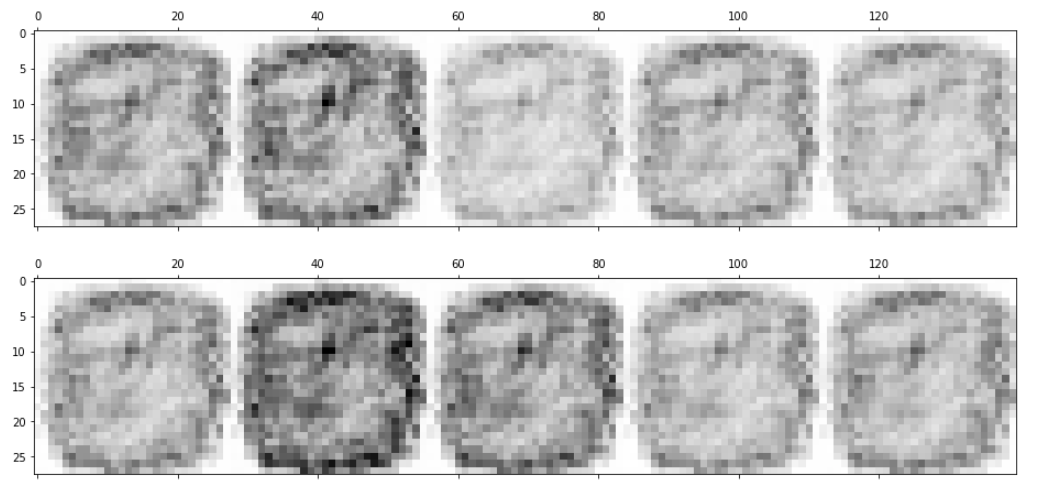}
\caption{Sensitivity matrix for a $3$-layer neural network with $q=10$ outputs and $n = 1094$ states.}
\label{fig:heatmap}
\end{figure}
The sensitivity matrix is a way to visualize the input-output properties of a given model. \cref{fig:heatmap} shows the sensitivity matrix for a given neural network for classification that has $q=10$ classes, with each row represented as an image. We observe that some classes have higher sensitivity values than others, which can be used to focus an attack on the model.   

Our refined analysis suggests a ``natural'' penalty to use during the training phase in order to improve robustness, namely $\|S\|_\infty$.

As before, the approach can be generalized to the case when $\phi$ is block Lipschitz (BLIP). Decomposing $C$ into column blocks $C = (C_1,\ldots,C_L)$, with $C_l \in \reals^{q \times n_l}$, $l \in [L]$, we define the matrix of (dual) norms
\[
N(C) := (\|C_{il}\|_{p_l^*})_{l \in [L], \: i \in [q]},
\]
where $p^*_l := 1/(1-1/p_l)$, $l \in [L]$.  Also recall the corresponding matrix of norms related to $B$~\cref{B-blip-vecnorm}.

\begin{theorem}[Box bound on the output, BLIP map]\label{thm:box-bnd-output-blip}
Assuming that $\phi$ is a BLIP map, and that the sufficient condition for well-posedness $\lpf(\Gamma N(A)) < 1$ is satisfied, with $\Gamma := \diag(\gamma)$ containing the Lipschitz constants associated with $\phi$. Then, $I-\Gamma N(A)$ is invertible, and
\begin{equation}\label{eq:box-y-blip}
\forall \: u, u^0 ~:~ |\hat{y}(u)-\hat{y}(u^0)| \le S |u-u^0|,
\end{equation}
where the (non-negative) $q \times p$ matrix
\[
S := N(C)(I-\Gamma N(A))^{-1}\Gamma N(B)+ |D|
\]
is a ``block sensitivity matrix'' of the implicit model with a BLIP map.
\end{theorem}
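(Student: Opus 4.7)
The plan is to reduce the output bound to the already-established state bound of Theorem \ref{thm:box-bnd-blip}, combined with Hölder's inequality applied block-by-block to the linear map $C$.

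First I would set $\sigma_u := |u-u^0|$, and let $x, x^0$ denote the unique states corresponding to inputs $u, u^0$, which exist by the well-posedness hypothesis $\lpf(\Gamma N(A)) < 1$ and Theorem \ref{thm:pf-wp-block}. From the prediction equation \cref{eq:pred-rule}, we have $\hat{y}(u) - \hat{y}(u^0) = C(x-x^0) + D(u-u^0)$, so the triangle inequality gives componentwise
\[
|\hat{y}(u)-\hat{y}(u^0)| \le |C(x-x^0)| + |D|\,|u-u^0|.
\]
The $|D|$ term already matches the corresponding piece of $S$, so the task reduces to bounding $|C(x-x^0)|$.

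Next I would decompose the product $C(x-x^0) = \sum_{l \in [L]} C_l (x-x^0)_l$ using the block structure, where $C_l \in \reals^{q \times n_l}$ consists of rows $C_{il}$, and $(x-x^0)_l \in \reals^{n_l}$ is the $l$th block of the state difference. Applying Hölder's inequality with conjugate exponents $p_l$ and $p_l^* = p_l/(p_l-1)$ row-by-row yields
\[
|(C(x-x^0))_i| \le \sum_{l \in [L]} \|C_{il}\|_{p_l^*}\, \|(x-x^0)_l\|_{p_l} = \bigl(N(C)\,\eta(x-x^0)\bigr)_i,
\]
so that $|C(x-x^0)| \le N(C)\,\eta(x-x^0)$ componentwise.

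Finally I would invoke Theorem \ref{thm:box-bnd-blip} with $\sigma_u = |u-u^0|$ to obtain $\eta(x-x^0) \le (I-\Gamma N(A))^{-1}\Gamma N(B)\,|u-u^0|$, where $I-\Gamma N(A)$ is invertible with non-negative inverse because $\Gamma N(A) \ge 0$ has spectral radius strictly less than $1$ (so the Neumann series $\sum_{k\ge 0}(\Gamma N(A))^k$ converges to a non-negative matrix). Since $N(C) \ge 0$, left-multiplying the state bound by $N(C)$ preserves the componentwise inequality, and assembling the pieces gives $|\hat{y}(u)-\hat{y}(u^0)| \le S|u-u^0|$ as claimed. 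The only mildly delicate point is the block-Hölder step: one must be careful that the chosen $p_l^*$ is exactly the dual exponent used in defining $N(C)$, which is precisely how $N(C)$ was set up in the statement, so no additional work is required there.
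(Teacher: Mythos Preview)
Your proposal is correct and follows essentially the same route as the paper: split the output error into the $C(x-x^0)$ and $D(u-u^0)$ pieces, bound the first via a block-wise H\"older inequality to get $|C(x-x^0)| \le N(C)\,\eta(x-x^0)$, then invoke the state bound of Theorem~\ref{thm:box-bnd-blip} and the Neumann-series nonnegativity of $(I-\Gamma N(A))^{-1}$.
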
 % subsection bounds_on_the_output (end)
\begin{proof}
See~\cref{app:box-state-bnds}.
\end{proof}

\subsection{Worst-case loss function} % (fold)
\label{sub:worst_case_loss_function}
In this section, we assume that the map $\phi$ satisfies the CONE property~\cref{eq:cone-cond}. We seek to find bounds on the loss function evaluated between a given ``target'' $y \in \reals^p$ and a prediction $\hat{y}$, using the ``box'' bounds on the state and output of~\cref{sub:bounds_on_the_state_vector} and~\cref{sub:bounds_on_the_output}.

Consider the case of squared Euclidean loss function, that is:
\[
{\cal L}(y,\hat{y}) = \|y-\hat{y}\|_2^2.
\]
The worst-case loss can be computed based on the box bound~\cref{eq:box-y}, as
\[
{\cal L}_{\rm wc}(y,\hat{y}^0) := \max_{|\hat{y}-\hat{y}^0| \le \sigma_y} \: {\cal L}(y,\hat{y}) = \||y-\hat{y}^0|+ \sigma_y \|_2^2.
\]

We can also consider the cross entropy loss function, that is: 
\[
{\cal L}(y,\hat{y}) =
\log(\sum_{i=1}^q e^{\hat{y}_i}) - y^\top \hat{y},
\]
where $y \ge 0$, $\ones^\top y = 1$ is given.

Assume that $D=0$ for simplicity, so that $\hat{y}^0 := Cx^0$ is the nominal output. Using the norm bound~\cref{eq:simple-x}, we have
$\|x-x^0\|_\infty \le \rho$, 
for an appropriate $\rho >0$. The corresponding worst-case loss is:
\begin{eqnarray*}
{\cal L}_{\rm wc}(y,\hat{y}^0) &:=& \max_{\delta x \::\: \|\delta x\|_\infty \le \rho} \: {\cal L}(y,\hat{y}^0 + C\delta x) \\
&=& \max_{\delta x \::\: \|\delta x\|_\infty \le \rho} \: \log(\sum_{i=1}^q e^{\hat{y}^0_i + c_i^\top \delta x}) - y^\top (\hat{y}^0+C\delta x) ,
\end{eqnarray*}
with $c_i^\top$ the $i$-th row of $C$. The above may be hard to compute, but we can work with a bound, based on evaluating the maximum above for each of the two terms independently. We obtain
\begin{eqnarray*}
{\cal L}_{\rm wc}(y,\hat{y}^0) &\le& \log(\sum_{i=1}^q e^{\hat{y}^0_i + \rho \|c_i\|_1}) - y^\top \hat{y}^0 + \rho \|C^\top y\|_1.
\end{eqnarray*}
Since $y \ge0 $ and $\ones^\top y = 1$, we have $\|C^\top y\|_1 \le \|C\|_\infty$, and we obtain the bound
\begin{eqnarray*}
{\cal L}(y,\hat{y}^0) \le {\cal L}_{\rm wc}(y,\hat{y}^0) &\le&  {\cal L}(y,\hat{y}^0) + 2 \rho \|C\|_\infty,
\end{eqnarray*}
which again involves the norm $\|C\|_\infty$ as a penalty. We can refine the analysis, based on the box bound~\cref{eq:box-y}:
\[
{\cal L}_{\rm wc}(y,\hat{y}^0) \le \log(\sum_{i=1}^q e^{\hat{y}^0_i+\sigma_{y,i}}) - y^\top \hat{y}^0 + y^\top \sigma_y.
\]
% subsection worst_case_loss_function (end)

\subsection{LP relaxation for CONE maps} % (fold)
\label{sub:improved_state_bounds_based_on_lagrange_relaxation}
The previous bounds does not provide a direct way to generate an adversarial attack, that is, a feasible point $u \in {\cal U}$ that has maximum impact on the state. 
In some cases it may be possible to refine our previous box bounds via an LP relaxation, which has the advantage of suggesting a specific adversarial attack. Here we restrict our attention to the ReLU activation: $\phi(z)= \max(z,0)=z_+$, which is a CONE map. 

We consider the problem
\begin{equation}\label{eq:state-wc-l1}
p^* := \max_{x,u \in {\cal U}} \: \sum_{i \in [n]} f_i(x_i)  ~:~ x = z_+, \;\; z = Ax+Bu, \;\; |x-x^0| \le \sigma_x,
\end{equation}
where $f_i$'s are arbitrary functions. Setting $f_i(\xi) = (\xi-x_i^0)^2$, $i \in [n]$, leads to the problem of finding the largest discrepancy (measured in $l_2$-norm) between $x$ and $x^0$; setting $f_i(\xi) = -\xi$, $i \in [n]$, finds the minimum $l_1$-norm state.  By construction, our bound can only improve on the previous state bound, in the sense that
\[
p^* \le \sum_{i \in [n]} \max_{|\alpha|\le 1} \: f_i(x_i^0+\alpha  \sigma_{x,i}).
\]
Our result is expressed for general sets ${\cal U}$, based on the support function $\sigma_{\cal U}$ with values for $b \in \reals^p$ given by
\begin{equation}\label{eq:sU-def}
\sigma_{\cal U}(b) := \max_{u \in {\cal U}} \: b^\top u .
\end{equation}
Note that this function depends only on the convex hull of the set ${\cal U}$.
In the case of the box set ${\cal U}^{\rm box}$ given in \cref{eq:U-def}, there is a convenient closed-form expression:
\[
\sigma_{{\cal U}^{\rm box}}(b) = b^\top  u^0 + \sigma_u^\top |b| .
\]
Likewise for the cardinality-constrained set ${\cal U}^{\rm card}$ defined in~\cref{{eq:U-def-card}}, we have
\[
s_{{\cal U}_k}(b) = \max_{u \in {\cal U}^{\rm card}} \: b^\top u = b^\top u^0 + s_k(\sigma_u \odot |b|),
\]
where $s_k$ is the sum of the top $k$ entries of its vector argument---a convex function. 

The only coupling constraint in \cref{eq:state-wc-l1} is the affine equation, which suggests the following relaxation. 
\begin{theorem}[LP bound on the state]\label{thm:lp-bound}
An upper bound on the objective of problem~\cref{eq:state-wc-l1} is given by
\[
p^* \le \overline{p}: = \min_{\lambda} \: \sigma_{\cal U}(B^\top \lambda)+ \sum_{i \in [n]} g_i(\lambda_i,(A^\top \lambda)_i) ,
\]
where $s_{\cal U}$ is the support function defined in \cref{eq:sU-def},
and $g_i$, $i\in  [n]$, are the convex functions
\[
g_i ~:~ (\alpha ,\beta) \in \reals^2 \rightarrow g_i(\alpha,\beta) := \max_{\zeta \::\: |\zeta_+-x_i^0| \le \sigma_{x,i}} \: f_i(\zeta_+) -\alpha \zeta + \beta \zeta_+  , \;\; i \in [n] .
\]
If the functions $g_i$, $i \in [n]$ are closed, we have the bidual expression
\[
\overline{p}: = \max_{x, \: u \in {\cal U}} \: -\sum_{i \in [n]} g_i^*(-(Ax+Bu)_i,x_i),	
\]
where $g_i^*$ is the conjuguate of $g_i$, $i \in [n]$. 
\end{theorem}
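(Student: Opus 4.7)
The plan is a Lagrangian relaxation of the only coupling constraint in~\cref{eq:state-wc-l1}, namely the affine equation $z = Ax+Bu$. First I reparametrize by substituting $x_i = (z_i)_+$ throughout, which turns the box bound into the per-coordinate constraint $|(z_i)_+ - x_i^0| \le \sigma_{x,i}$ and leaves $z = Ax+Bu$ as the sole constraint linking coordinates. Attaching a multiplier $\lambda \in \reals^n$ to this equation, weak duality yields, for every $\lambda$,
\[
p^* \le \sup_{u \in {\cal U},\,z} \left\{ \sum_{i \in [n]} f_i(z_{i,+}) + \lambda^\top(Az_+ + Bu - z) \;:\; |z_{i,+}-x_i^0| \le \sigma_{x,i} \right\}.
\]

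The right-hand side decouples cleanly: $u$ enters only through $\lambda^\top Bu$, whose supremum over ${\cal U}$ is $\sigma_{\cal U}(B^\top\lambda)$ by~\cref{eq:sU-def}, while the remaining problem in $z$ separates across coordinates, with each supremum over $z_i$ yielding exactly $g_i(\lambda_i,(A^\top\lambda)_i)$ after renaming $\zeta = z_i$. Minimizing over $\lambda$ then gives the first claim $p^* \le \overline{p}$.

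For the bidual I use that each $g_i$, being a pointwise supremum of affine functions of $(\alpha,\beta)$, is convex; under the closedness hypothesis $g_i^{**}=g_i$. I lift the dual by introducing slack variables $v = B^\top\lambda$ and $\beta_i = (A^\top\lambda)_i$, rewriting
\[
\overline{p} = \inf_{\lambda,v,\beta} \Bigl\{ \sigma_{\cal U}(v) + \sum_{i \in [n]} g_i(\lambda_i,\beta_i) \;:\; v = B^\top\lambda,\ \beta = A^\top\lambda \Bigr\},
\]
and dualize the two linear equalities with multipliers $u$ and $x$. The Lagrangian then separates: minimizing over $v$ produces $-\delta_{\overline{\mathrm{conv}}\,{\cal U}}(u)$ since $\sigma_{\cal U}^* = \delta_{\overline{\mathrm{conv}}\,{\cal U}}$, and the joint minimization over $(\lambda_i,\beta_i)$ produces $-g_i^*(-(Ax+Bu)_i,x_i)$ by the definition of the Fenchel conjugate; careful sign bookkeeping between the two dualizations is needed to hit exactly these arguments, and collecting terms gives the stated expression.

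The main technical point is strong duality in this second step: equating $\overline{p}$ with the bidual maximum rather than a lower bound for it. This is a Fenchel--Rockafellar identity that I verify using the fact that the lifted coupling constraints $v = B^\top\lambda$ and $\beta = A^\top\lambda$ are polyhedral (hence any constraint qualification is vacuous on that side) and that $\sigma_{\cal U}$ and the $g_i$'s are closed proper convex by hypothesis, so a standard relative-interior CQ on the effective domains applies. Everything else --- the two supremum decompositions, matching the $\sigma_{\cal U}$ and $g_i$ definitions, and the final identification of $g_i^*$ --- is essentially definitional or standard Fenchel calculus.
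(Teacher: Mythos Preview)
Your proposal is correct and follows essentially the same route the paper signals: dualize only the affine coupling $z = Ax+Bu$ (after the substitution $x=z_+$), which immediately decouples into the support-function term in $u$ and the per-coordinate maximizations defining the $g_i$'s; then recover the bidual by Fenchel duality on the lifted dual. Your observation that the bidual naturally lands on $\overline{\mathrm{conv}}\,{\cal U}$ rather than ${\cal U}$ is the right reading and is consistent with the paper's worked example for ${\cal U}^{\rm card}$, where the bidual constraint set is exactly that convex hull.
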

\begin{proof}
See~\cref{app:proof_of_lp_bnds}.
\end{proof}

Consider for example $f_i(\xi) = c_i\xi$, $i \in [n]$, where $c \in \reals^n$ is given: the problem is
\[
p^* = \max_{x,\: u\in{\cal U}} \: c^\top x ~:~ x = z_+, \;\; z = Ax+Bu, \;\; |x-x^0| \le \sigma_x.
\]
Let $\underline{x} := (x^0-\sigma_x)_+$, $\overline{x} := x^0+\sigma_x$, so that the constraint $x \ge 0$, $|x-x^0| \le \sigma_x$ writes $\underline{x} \le x \le \overline{x}$. We have 
\begin{eqnarray*}
g_i(\alpha,\beta) &=& \max_{\zeta} \:  (\beta +c_i)\zeta_+ -\alpha \zeta ~:~ \underline{x}_i \le \zeta_+ \le \overline{x}_i \\
&=& \max_{\zeta_+,\zeta_-} \: \alpha (\zeta_--\zeta_+) +  (\beta +c_i)\zeta_+ ~:~ \zeta_\pm \ge0, \;\; \underline{x}_i \le \zeta_+ \le \overline{x}_i \\
&=& \max(\underline{x}_i(\beta-\alpha +c_i),\overline{x}_i(\beta-\alpha +c_i))  \mbox{ if $\alpha \le 0$, $+\infty$ otherwise.}
\end{eqnarray*}
This leads to the dual 
\[
p^* \le \overline{p}: = \min_{\lambda \ge 0} \: \sigma_{\cal U}(B^\top \lambda)+ \sum_{i \in [n]} \max(\underline{x}_i((A^\top \lambda)_i +c_i-\lambda_i) ,\overline{x}_i((A^\top \lambda)_i +c_i-\lambda_i) .
\]
For every $i \in [n]$, the function $g_i$ is closed, with conjugate given by
\begin{eqnarray*}
g_i^*(v,w) &=& \max_{\alpha \le 0, \: \beta} \: v \alpha + w \beta -\max(\underline{x}_i(\beta-\alpha +c_i),\overline{x}_i(\beta-\alpha +c_i)) \\
&=& \min_{\tau \::\: |\tau| \le 1} \: \max_{\alpha \le 0, \: \beta} \: v \alpha + w \beta -\tau\underline{x}_i(\beta-\alpha +c_i) - (1-\tau)\overline{x}_i(\beta-\alpha +c_i) \\
&=& w \mbox{ if } -v \le w \in [\underline{x}_i,\overline{x}_i], \;\; + \infty \mbox{ otherwise.}
\end{eqnarray*}
We obtain a ``natural'' relaxation:
\begin{eqnarray*}
p^* &\le& \overline{p} = \max_{x, \: u \in {\cal U}} \: c^\top x ~:~ x \ge Ax+Bu, \;\; x \ge 0, \;\; |x-x^0| \le \sigma_x.
\end{eqnarray*}

We consider a variant of the previous problem, where we seek a sparse adversarial attack: the cardinality of changes in the input is constrained in the set ${\cal U}^{\rm card}$. 
\[
p^* = \max_{x, \: u \in {\cal U}^{\rm card}} \: c^\top x ~:~ x = z_+, \;\; z = Ax+Bu, \;\; |x-x^0| \le \sigma_x,
\]
where ${\cal U}^{\rm card}$ is the set~\cref{eq:U-def-card}, with $k \in [p]$ given. The relaxation expresses as
\[
p^* \le \overline{p} :=  \min_{\lambda} \: \lambda^\top Bu^0 + s_k (\sigma_u \odot |B^\top \lambda|)+ \sum_{i \in [n]} \max(\underline{x}_i((A^\top \lambda)_i +c_i-\lambda_i) ,\overline{x}_i((A^\top \lambda)_i +c_i-\lambda_i) .
\]
The bidual writes
\begin{eqnarray*}
p^* &\le& \overline{p} = \max_{x, \: u} \: c^\top x ~:~ 
\begin{array}[t]{l}
x \ge Ax+Bu, \;\; x \ge 0, \;\; |x-x^0| \le \sigma_x, \\ 
\|\diag(\sigma_u)^{-1}  (u-u^0)\|_1 \le k, \;\;
%\dsp\sum_{i \in [p]} \frac{|u_i-u_i^0|}{\sigma_{u,i}} \le k, \;\; 
|u-u^0| \le \sigma_u.
\end{array}
\end{eqnarray*}
where $\diag(\cdot)$ is a diagonal matrix formed with the entries of its vector argument. The above provides a heuristic for a sparse adversarial attack, based on any vector $u$ that is optimal for the above.
% subsection improved_state_bounds_based_on_lagrange_relaxation (end)

\subsection{SDP relaxations for maximal state discrepancy with ReLU activation} % (fold)
\label{sub:sdp_relaxations}

Assume again that $\phi$ is the ReLU, and consider the box uncertainty set ${\cal U}^{\rm box}$~\cref{eq:U-def}. Our goal is to find the largest state discrepancy, measured in the Euclidean norm:
\begin{align}\label{eq:ss-ncvx}
    p^* = \max_{x, \: u \in {\cal U}^{\rm box}} \: \|x-x^0\|_2^2 ~:~ x = \phi (Ax+Bu), \;\; |x-x^0| \le \sigma_x,
\end{align}
Based on a quadratic representation of $\phi$, we can arrive at a SDP relaxation of \eqref{eq:ss-ncvx}. We first use the fact that for any two $n$-dimensional vectors $x,z$:
\begin{equation}\label{eq:phi-rep-quad}
x = \phi(z) \Longleftrightarrow x \ge 0, \;\; x \ge z, \;\; x \odot x \le x \odot z,
\end{equation}
where $\odot$ is the Hadamard (component-wise) product. Next, we use the fact that any bound the form $l \leq z-z_0 \leq u$ where $z,l,u$ are vectors can be rewritten as
\begin{align}
    l \leq z-z_0 \leq u &\Longleftrightarrow (z-z_0-l) \odot (z-z_0-u) \leq 0 \\
    &\Longleftrightarrow z \odot z \leq (2z_0 + l + u) \odot z - (l + z_0) \odot (u + z_0)
\end{align}
These two reformulations allow us to represent \eqref{eq:ss-ncvx} as a non-convex QCQP:
\begin{align}\label{eq:state-eq-eucl}
    p^\ast = &\max_{x,u} \; \|x-x^0\|_2^2  \\
    &\text{s.t.} \;\;\;  x \geq 0, \;\; x \geq Ax + Bu, \;\; x \odot x \leq x \odot (Ax + Bu) \nonumber \\
    & \;\;\;\;\;\;\; x \odot x \leq (2x_0) \odot x - (-\sigma_x + x_0) \odot (\sigma_x + x_0) \nonumber\\
    & \;\;\;\;\;\;\; u \odot u \leq (2u_0) \odot u - (-\sigma_u + u_0) \odot (\sigma_u + u_0) \nonumber
\end{align}
We can derive an upper bound on the value of this QP by solving the Lagrange dual of the problem, leading to a semidefinite program. However, it is a well known result of quadratic optimization (for example see Appendix B of \cite{boyd2004convex}) that the bidual of a non-convex QCQP simply becomes the canonical rank relaxed version of \eqref{eq:ss-ncvx} when we redefine the objective in terms of a new variable $Z = zz^\top$, where $z = [x, \; u, \; 1]^\top$. Making this substitution and relaxing the equality constraint into a semidefinite constraint, the bidual of \eqref{eq:ss-ncvx} becomes
\begin{align}\label{eq:bidual}
p^{\ast\ast} = \max_{X,x,u} &\; \text{Tr}(X[1,1]) - 2x^\top x^0 + \|x^0\|_2^2  \\
&\text{s.t.} \;\;  \left[ \begin{array}{c|c}
X & \begin{array}{c}
x \\ 
u
\end{array} \\ \hline
\begin{array}{cc}
x^\top & u^\top
\end{array} & 1
\end{array}
\right] \succeq 0, \nonumber \\
& \;\; x \geq 0, \;\; x \geq Ax + Bu, \;\; \diag(X[1,1]) \leq \diag(AX[1,1]) + \diag(BX[2,1]), \nonumber\\
& \;\; \diag(X[2,2]) \leq (2u^0) \circ u - (-\sigma_u + u^0) \circ (\sigma_u + u^0), \nonumber\\
& \;\; \diag(X[1,1]) \leq (2x^0) \circ x - (-\sigma_x + x^0) \circ (\sigma_x + x^0), \nonumber
\end{align}
where used a rank relaxation involving
\begin{align*}
\begin{bmatrix}
x \\
u \\ 
1
\end{bmatrix}
\begin{bmatrix}
x \\
u \\ 
1
\end{bmatrix}^\top = \begin{bmatrix}
xx^\top & xu^\top & x \\
ux^\top & uu^\top & u \\
x^\top & u^\top & 1
\end{bmatrix} = \left[ \begin{array}{c|c}
X & \begin{array}{c}
x \\ 
u
\end{array} \\ \hline
\begin{array}{cc}
x^\top & u^\top
\end{array} & 1
\end{array}
\right]
\end{align*}
By construction, \eqref{eq:bidual} is convex. One immediate reason why we wish to solve the bidual as opposed to the Lagrange dual of \eqref{eq:ss-ncvx} is that we are able to get a candidate state $x$ and attack $u$, which can be viewed, respectively, as the perturbed state resulting from an adversarial attack on our network. This attack can be retrieved from the bidual in different ways. The first one would be to look at the optimal $u$ and the second would be to do a rank one decomposition of $X$ and extract $u$ from it. 

\section{Sparsity and Model Compression} % (fold)
\label{sec:sparsity_and_topology_optimization}
In this section, we examine the role of sparsity and low-rank structure in implicit deep learning, specifically in the model parameter matrix 
\[
M := \begin{pmatrix} A & B \\ C & D \end{pmatrix} .
\]
We assume that the activation map $\phi$ is a CONE map, as defined in~\cref{sub:ass-phi}. Most of our results can be generalized to BLIP maps.

Since a CONE map acts componentwise, the prediction rule~\cref{eq:pred-rule} is invariant under permutations of the state vector, in the sense that, for any $n \times n$ permutation matrix, the matrix $\diag(P,I)M\diag(P^\top ,I)$ represents the same prediction rule as $M$ given above. 

Various kinds of sparsity of $M$ can be encouraged in the training problem, with appropriate penalties.  For example, we can use penalties that encourage many elements in $M$ to be zero; the advantage of such ``element-wise'' sparsity is, of course, computational, since sparsity in matrices $A,B,C,D$ will allow for computational speedups at test time. Another interesting kind of sparsity is rank sparsity, which refers to the case when model matrices are low-rank.

Next, we examine the benefits of row- (or, column-) sparsity, which refers to the fact that entire rows (or, columns) of a matrix are zero. Note that column sparsity in a matrix $N$ can be encouraged with a penalty in the training problem, of the form
\[
{\cal P}(N) = \sum_{i} \|Ne_i\|_\alpha
\]
where $\alpha>1$. Row sparsity can be handled via ${\cal P}(N^\top )$.

\subsection{Deep feature selection} % (fold)
\label{sub:interpretability}
We may use the implicit model to select features. Any zero column in the matrix $(B^\top ,D^\top )^\top $ means that the corresponding element in an input vector does not play any role in the prediction rule. We may thus use a column-norm penalty in the training problem, in order to encourage such a sparsity pattern: 
\begin{equation}\label{eq:Bsparse-penalty}
	{\cal P}(B,D)  = \sum_{j=1}^p \left\|
	\begin{pmatrix} B \\ D \end{pmatrix} e_j\right\|_\alpha,
\end{equation}
with $\alpha > 1$.
% subsection interpretability (end)

\subsection{Dimension reduction via row- and column-sparsity}
\label{sub:dim_red}
Assume that the matrix $A$ is row-sparse. Without loss of generality, using permutation invariance, we can assume that $M$ writes
\[
M =  \begin{pmatrix} A_{11} & A_{12} & B_1 \\ 0 & 0 & B_2 \\
C_1 & C_2 & D\end{pmatrix} ,
\]
where $A_{11}$ is square of order $n_1 < n$. We can then decompose $x$ accordingly, as $x=(x_1,x_2)$ with $x_1 \in \reals^{n_1}$, and the above implies $x_2 = \phi(B_2u)$. The prediction rule for an input $u \in \reals^p$ then writes
\[
\hat{y}(u) = C_1 x_1 + Du, \;\; x_1 = \phi(A_{11} x_1 + A_{12}\phi(B_2u) +B_1 u).
\]
The rule only involves $x_1$ as a true hidden feature vector. In fact, the row sparsity of $A$ allows for a computational speedup, as we simply need to solve a fixed-point equation for the vector with reduced dimensions, $x_1$.

Further assume that $(A,B)$ is row-sparse. Again without loss of generality we may put $M$ in the above form, with $B_2 = 0$. Then the prediction rule can be written 
\[
\hat{y}(u) = C_1 x_1 + Du, \;\; x_1 = \phi(A_{11} x_1 +B_1 u).
\]
This means that the dimension of the state variable can be fully reduced, to $n_1 <n$. Thus, row sparsity of $(A,B)$ allows for a reduction in the dimension of the prediction rule.

Assume now that the matrix $A$ is column-sparse. Without loss of generality, using permutation invariance, we can assume that $M$ writes
\[
M = \begin{pmatrix} A_{11} & 0 & B_1 \\ A_{21} & 0 & B_2 \\ C_1 & C_2 & D \end{pmatrix} ,
\]
where $A_{11}$ is square of order $n_1 < n$. We can then decompose $x$ accordingly, as $x=(x_1,x_2)$ with $x_1 \in \reals^{n_1}$. The above implies that the prediction rule for an input $u \in \reals^p$ writes
\[
\hat{y}(u) = C_1 x_1 + C_2 x_2 + Du, \;\; x_1 = \phi(A_{11} x_1 + B_1 u), \;\; x_2 = \phi(A_{21}x_1 +B_2u).
\]
Thus, column-sparsity allows for a computational speedup, since $x_2$ can be directly expressed as closed-form function of $x_1$. 

Now assume that $(A^\top ,C^\top )^\top $ is column-sparse. Again without loss of generality we may put $M$ in the above form, with $C_2 = 0$. We obtain that the prediction rule does not need $x_2$ at all, so that the computation of the latter vector can be entirely avoided. This means that the dimension of the state variable can be fully reduced, to $n_1 <n$. Thus, column sparsity of $(A^\top ,C^\top )^\top $ allows for a reduction in the dimension of the prediction rule.

To summarize, row or column sparsity of $A$ allows for a computational speedup; if the corresponding rows of $B$ (resp.\ columns of $C$) are zero, then the prediction rule involves only a vector of reduced dimensions.
% subsection dim_red (end)

\subsection{Rank sparsity} % (fold)
\label{sub:rank_sparsity}
Assume that the matrix $A$ is rank $k \ll n$, and that a corresponding factorization is known: $A = LR^\top $, with $L,R \in \reals^{n \times k}$.
In this case, for any $p$-vector $u$, the equilibrium equation $x = \phi(Ax+Bu)$ can be written as $x = \phi(Lz+Bu)$, where $z := R^\top x$. Hence, we can obtain a prediction for a given input $u$ via the solution of a low-dimensional fixed-point equation in $z \in \reals^k$:
\[
z = R^\top  \phi (Lz+Bu).
\]
It can be shown that, when $\phi$ is a CONE map, the above rule is well-posed if $\lpf(|R|^T|L|) <1$. Once a solution $z$ is found, we simply set the prediction to be $\hat{y}(u) = C\phi(Lz+Bu) +Du$.

At test time, if we use fixed-point iterations to obtain our predictions, then the computational savings brought about by the low-rank representation of $A$ can be substantial, with a per-iteration cost going from $O(n^2)$, to $O(kn)$ if we use the above.

% Encouraging rank sparsity can be done using an explicit low-rank representation of $A$, as $A = LR^\top $. In this case, the sub-problem of updating $A$ (as in~\cref{eq:AB-update}) is not jointly convex in $L,R$, but may be addressed by alternating over the factors $L,R$, similar to what is done in power iteration or generalized low-rank modelling schemes~\cite{udell2016generalized}.

\subsection{Model error analysis}
The above suggests to replace a given model matrix $A$ with a low-rank or sparse approximation, denoted $A^0$. The resulting state error can be then bounded, as follows. Assume that $|A-A^0| \le E$, where $E \ge 0$ is a known upper bound on the componentwise error in $A$. The following theorem, proved in \cref{app:proof_pertA}, provides \emph{relative} error bounds on the state, provided the perturbed system satisfies the well-posedness condition $\lpf(|A|) <1$. As before, we denote by $x^0,x$ the (unique) solutions to the unperturbed and perturbed equilibrium equations $\xi = \phi(A^0\xi +Bu)$ and $\xi = \phi(A\xi +Bu)$, respectively. 
\begin{theorem}[Effect of errors in $A$]\label{thm:pf-pertA}
Assuming that $\phi$ is a CONE map, and that $\lpf(|A^0+E|) < 1$. Then:
\begin{equation}\label{eq:pertA}
|x-x^0| \le (I-(|A^0+E|))^{-1}E x_0.
\end{equation}
\end{theorem}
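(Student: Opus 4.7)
The plan is to proceed directly from the defining equilibrium equations for $x^0$ and $x$ and exploit the CONE property together with the monotonicity of Perron--Frobenius theory, in the same spirit as the proof of~\cref{thm:box-bnd}. First I would write $x^0 = \phi(A^0 x^0 + Bu)$ and $x = \phi(Ax+Bu)$ and apply the componentwise non-expansiveness condition~\cref{eq:cone-cond} to get
\[
|x-x^0| \;\le\; |Ax - A^0 x^0|.
\]
Next I would decompose $Ax - A^0 x^0 = A(x-x^0) + (A-A^0)x^0$ and apply the componentwise triangle inequality, using the hypothesis $|A-A^0|\le E$, to obtain
\[
|x-x^0| \;\le\; |A|\,|x-x^0| + E\,|x^0|.
\]

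Then I would absorb $|A|$ into the dominating non-negative matrix appearing in the theorem: since $|A|\le |A^0|+|A-A^0|\le |A^0|+E$, I get
\[
|x-x^0| \;\le\; (|A^0|+E)\,|x-x^0| + E\,|x^0|,
\]
so that $\bigl(I-(|A^0|+E)\bigr)|x-x^0| \le E\,|x^0|$ componentwise. (I suspect the $|A^0+E|$ in the statement is a typographical shorthand for $|A^0|+E$, which is what the proof naturally produces.)

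The final step is to invert $I-(|A^0|+E)$ and preserve the inequality. The well-posedness hypothesis $\lpf(|A^0|+E)<1$ lets me expand the inverse as a Neumann series $\sum_{k\ge 0}(|A^0|+E)^k$, which is entrywise non-negative because $|A^0|+E\ge 0$. Left-multiplying by this non-negative matrix preserves the componentwise inequality and yields the desired bound $|x-x^0|\le (I-(|A^0|+E))^{-1}E\,|x^0|$. The only delicate point, and really the main obstacle, is justifying that the two componentwise manipulations commute properly with the matrix inverse: this hinges precisely on the non-negativity of $(I-(|A^0|+E))^{-1}$, which in turn follows from the Perron--Frobenius condition exactly as in the proof of~\cref{thm:pf-wp}. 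Everything else is routine absolute-value bookkeeping.
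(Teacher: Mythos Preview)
Your argument is correct and follows exactly the pattern the paper uses for its perturbation bounds (cf.~the proof of \cref{thm:box-bnd}): apply the CONE inequality~\cref{eq:cone-cond}, decompose $Ax-A^0x^0=A(x-x^0)+(A-A^0)x^0$, bound $|A|\le|A^0|+E$, and then invert via the non-negative Neumann series guaranteed by the Perron--Frobenius hypothesis. Your reading of $|A^0+E|$ as shorthand for $|A^0|+E$ is also right---the proof genuinely requires the latter (and $|x^0|$ rather than $x_0$), since $|A|\le|A^0+E|$ can fail when $A^0$ has negative entries.
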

\begin{proof}
	See \cref{app:proof_pertA}.
\end{proof}
% subsection rank_sparsity (end)

% \subsection{Architecture optimization} % (fold)
% \label{sub:topology_optimization}
% In this section, we consider the problem of designing the architecture of the network from scratch. The basic idea is to allow for a very large number of parameters, simply requiring that $A$ be well-posed, then optimize in such a way that the resulting model parameter matrices are sparse, leading to a sparse network of connections.  As noted above, such sparse architectures are very relevant in practice, as they allow for speedups in the prediction rule at test time.

% Note that the sufficient well-posedness condition $\|A\|_{\infty} <1$, with the latter norm defined in~\cref{eq:suff-cond-wp}, will naturally tend to encourage sparsity in the rows of matrix $A$. Our framework allows for many other types of sparsity-inducing penalties or constraints.  A similar approach can be made in terms of compressing the model parameters, precisely encouraging a low rank in $A$. As seen above, this in turn is useful for speeding up the prediction rule at test time.
% % subsection topology_optimization (end)
% section sparsity_and_topology_optimization (end)

\section{Training Implicit Models} % (fold)
\label{sec:training_problem}

\subsection{Setup} % (fold)
\label{sub:training_problem}
We are now given an input data matrix $U = [u_1,\ldots,u_m] \in \reals^{p \times m}$ and response matrix $Y = [y_1, \ldots, y_m] \in \reals^{q \times m}$, and seek to fit a model of the form~\cref{eq:pred-rule}, with $A$ well-posed with respect to $\phi$, which we assume to be a BLIP map.  We note that the rule~\cref{eq:pred-rule}, when applied to a collection of inputs $(u_i)_{1 \le i \le m}$, can be written in matrix form, as
\[
\hat{Y}(U) = CX+DU , \mbox{ where } X = \phi(AX+BU).
\]
where $U = [u_1,\ldots,u_m] \in \reals^{p \times m}$, and $\hat{Y}(U) = [\hat{y}(u_1), \ldots, \hat{y}(u_m)] \in \reals^{q \times m}$. 

We consider a training problem of the form
\begin{equation}\label{eq:training-pb}
\min_{A,B,C,D,X} \: {\cal L}(Y,CX+DU) + {\cal P} (A,B,C,D) ~:~ X = \phi(AX+BU),  \;\; A \in \WP.
\end{equation}
In the above, ${\cal L}$ is a loss function, assumed to be convex in its second argument, and ${\cal P}$ is a convex penalty function, which can be used to enforce a given (linear) structure (such as, $A$ strictly upper block triangular) on the parameters, and/or encourage their sparsity. 

In practice, we replace the well-posedness condition by the sufficient condition of \cref{thm:pf-wp-block}. As argued in~\cref{sub:scaling}, the latter can be further replaced without loss of generality with the easier-to-handle (convex) constraint; in the case of CONE maps, this condition is $\|A\|_\infty \le \kappa$, where $\kappa \in (0,1)$ is a hyper-parameter.

% Our training problem involve two kinds of variables: the model variables $(A,B,C,D)$; and the ``state'' variable $X$. 

\paragraph{Examples of loss functions} % (fold)
\label{par:example_of_a_loss_function_cross_entropy_with_softmax}
For regression tasks, we may use the squared Euclidean loss: for $Y,\hat{Y} \in \reals^{q \times m}$,
\[
{\cal L}(Y,\hat{Y}) := \frac{1}{2} \|Y-\hat{Y}\|_F^2.
\]
For multi-class classification, a popular loss is a combination of negative cross-entropy with the soft-max: for two $q$-vectors $y,\hat{y}$, with $y \ge 0$, $y^\top \ones = 1$, we define
\[
{\cal L}(y,\hat{y})  = -y^\top \log \left( 
\frac{e^{\hat{y}}}{{\sum_{i=1}^q e^{\hat{y}_i}}} 
\right) = 
\log(\sum_{i=1}^q e^{\hat{y}_i}) - y^\top \hat{y}.
\]
We can extend the definition to matrices, by summing the contribution to all columns, each corresponding to a data point: for $Y,Z \in \reals^{q \times m}$,
\begin{equation}\label{eq:def-loss-ex}
{\cal L}(Y,\hat{Y})  = \sum_{j=1}^m \log\left(\sum_{i=1}^q e^{\hat{Y}_{ij}} \right) - \sum_{j=1}^m \sum_{i=1}^q Y_{ij}\hat{Y}_{ij} = \log(\ones^\top  \exp(\hat{Y}))\ones - \Tr Y^\top \hat{Y},
\end{equation}
where both the $\log$ and the exponential functions apply componentwise.
% paragraph example_of_a_loss_function_cross_entropy_with_softmax (end)

\paragraph{Examples of penalty functions} % (fold)
\label{par:examples_of_penalty_functions}
Via an appropriate definition of ${\cal P}$, we can make sure that the model matrix $A$ is well-posed with respect to $\phi$, either imposing an upper triangular structure for $A$, or via an $l_\infty$-norm constraint $\|A\|_\infty < 1$, or a Perron-Frobenius eigenvalue constraint. Note that in the case of the CONE maps such as the ReLU, due to scale invariance seen in~\cref{sub:scaling}, we can always replace a Perron-Frobenius eigenvalue constraint with a $l_\infty$-norm constraint.  

Beyond well-posedness, the penalty can be used to encourage desired properties of the model. For robustness, the convex penalty~\cref{eq:penalty-rob} can be used, provided we also enforce $\|A\|_\infty <1$. We may also use an $l_\infty$-norm bound on the sensitivity matrices $S$ appearing in~\cref{thm:box-bnd-output} or~\cref{thm:box-bnd-output-blip}. For feature selection, an appropriate penalty may involve the block norms~\cref{eq:Bsparse-penalty}; sparsity of the model matrices can be similarly handled with ordinary $l_1$-norm penalties on the elements of the model matrix $M=(A,B,C,D)$.

\paragraph{Fenchel divergence formulation} % (fold)
\label{par:fenchel_divergence_formulation}
Problem~(\ref{eq:training-pb}) can be equivalently written
\[
\min_{A,B,C,D,X} \: {\cal L}(Y,CX+DU) + {\cal P} (A,B,C,D) ~:~ F_\phi(X,AX+BU) \le 0, \;\; A \in \WP,
\]
where $F_\phi$ is the so-called Fenchel divergence adapted to $\phi$~\cite{gu2018fenchel}, applied column-wise to matrix inputs. In the case of the ReLU activation, for two given vectors $x,z$ of the same size, we have 
\begin{equation}\label{eq:relu-div}
F_\phi(x,z) := \frac{1}{2} x\odot x + \frac{1}{2} z_+ \odot z_+ - x \odot z \mbox{ if } x \ge 0,
\end{equation}
with $\odot$ the componentwise multiplication. We can then define $F_\phi(X,Z)$ with $X,Z$ two matrix inputs having the same number of columns, by summing over these columns. As seen in~\cite{gu2018fenchel}, a large number of popular activation maps can be expressed similarly.

We may also consider a relaxed form:
\[
\min_{A,B,C,D,X} \: {\cal L}(Y,CX+DU) + {\cal P} (A,B,C,D) + \lambda^\top F_\phi(X,AX+BU) , \;\; A \in \WP,
\]
where $\lambda >0$ is a $n$-vector parameter.

% For the $\tanh$ activation function, we have $f_\phi(U,V) = \ones^\top  F_\phi(U,V) \ones$ if $|U| <= 1$, $+\infty$ otherwise, where
% \begin{equation}\label{eq:tanh-div}
% F_{\phi}(X,Z) := \frac{1}{2} (X \odot \log\big((1+X)\oslash(1-X)\big)+\log(|1-X \odot X|) + \log(\cosh(Z))- X \odot Z ,
% \end{equation}
% where $\oslash$ denotes componentwise division, and $|\cdot|$, $\log$, $\cosh$ are understood componentwise.

\subsection{Gradient Methods}
\label{sub:gradient-descent}
Assuming that $\phi$ is a CONE map for simplicity, 
we consider the problem
\begin{equation}\label{eq:training-pb-linf}
\min_{A,B,C,D,X} \: {\cal L}(Y,CX+DU) + {\cal P} (A,B,C,D) ~:~ X = \phi(AX+BU),  \;\; \|A\|_\infty \le \kappa,
\end{equation}
where $\kappa <1$ is given.

We can solve~\cref{eq:training-pb-linf} using (stochastic) projected gradient descent. The basic idea is to update the model parameters $M=(A,B,C,D)$ using stochastic gradients, by differentiating through the equilibrium equation.  It turns out that this differentiation requires solving an equilibrium equations involving a matrix variable, which can be very efficiently solved via fixed-point iterations. More precisely, the main effort in computing the gradient for a mini-batch of size say $b$ consists in solving matrix equations in a $n \times b$ matrix $V$:
\[
    V = \Psi \circ \left(A^\top V + C^\top L\right),
\]
where the columns of $L$ contains gradients of the loss with respect to $\hat{y}$, and $\Psi$ is a matrix whose columns contain the derivatives of the activation map, both evaluated at a specific training point. Due to the fact taht $A$ satisfies the PF sufficient condition for well-posedness with respect to $\phi$, the equation above has a unique solution; the matrix $V$ can be computed as the limit point of the convergent fixed-point iterations
\begin{equation}
    \label{eq:fixed-point-v}
V(t+1) = \Psi \circ \left(A^\top V(t) + C^\top L\right)
, \;\; t=0,1,2,\ldots
\end{equation}
The proof of the following result is given in \cref{app:grad-descent}. 

% For more details, see~\cref{app:grad-descent}. If the map is a CONE or BLIP map, and the matrix $A$ satisfies the well-posedness conditions of~\cref{thm:pf-wp} or~\cref{thm:pf-wp-block}, then these equations can be solved via fixed-point iterations. The above claim follows from the following theorem.
\begin{theorem} [Computing gradients] \label{thm:grad_exist}
Consider an implicit model ($\phi$, $A,B,C,D$), with $\phi$ a differentiable BLIP map. If $A$ is well-posed with respect to $\phi$, the gradients with respect to the model parameter matrices $A, B, C$ and $D$ exist, and are uniquely given via the solution of a fixed-point equation, as detailed in~\cref{app:grad-descent}.
\end{theorem}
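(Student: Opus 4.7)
The plan is to apply the implicit function theorem to the equilibrium equation $X = \phi(AX+BU)$ and then use adjoint sensitivity analysis to express all four gradients in terms of a single auxiliary matrix $V$ satisfying~\cref{eq:fixed-point-v}.

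First I would linearize the equilibrium equation. Since $\phi$ is differentiable and BLIP, let $Z = AX+BU$ and let $\Psi$ denote the (block-)Jacobian of $\phi$ evaluated at $Z$. Treating $X$ implicitly as a function of the parameters, differentiation yields
\[
dX \;=\; \Psi \odot \bigl(A\,dX + dA\cdot X + dB\cdot U\bigr),
\]
so $dX$ solves the linear equation $(\mathcal{I}-\mathcal{T})(dX) = \Psi \odot (dA\cdot X + dB\cdot U)$, where $\mathcal{T}\colon V \mapsto \Psi \odot (AV)$. The BLIP bound gives the blockwise estimate $|\Psi_{ij}| \le \gamma_i$, so the matrix of induced norms associated with $\mathcal{T}$ is dominated entrywise by $\Gamma N(A)$. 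Under the PF sufficient condition $\lpf(\Gamma N(A))<1$ of~\cref{thm:pf-wp-block}, this shows $\mathcal{I}-\mathcal{T}$ is invertible (the Neumann series converges by the same global contraction-mapping argument used there), so the implicit function theorem yields existence and uniqueness of $dX$, hence of the partial derivatives of $X$ with respect to $A$ and $B$.

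Next I would apply the chain rule and an adjoint computation. With $L := \nabla_{\hat Y}\mathcal{L}(Y, CX+DU)$, the differential of the loss reads
\[
d\mathcal{L} \;=\; \langle L,\; dC\cdot X + C\cdot dX + dD\cdot U\rangle_F .
\]
To eliminate $dX$, I introduce $V$ solving $V = \Psi \odot (A^\top V + C^\top L)$. A short computation using the adjoint operator $\mathcal{T}^{\ast}\colon W \mapsto A^\top(\Psi \odot W)$ (with respect to the Frobenius inner product) shows $\langle C^\top L,\, dX\rangle_F = \langle V,\; dA\cdot X + dB\cdot U\rangle_F$. Reading off the coefficients of $dA$, $dB$, $dC$, $dD$ gives the closed-form expressions $\nabla_A\mathcal{L} = VX^\top$, $\nabla_B\mathcal{L} = VU^\top$, $\nabla_C\mathcal{L} = LX^\top$, $\nabla_D\mathcal{L} = LU^\top$.

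Finally I would verify that the adjoint fixed-point equation is itself well-posed. Since $\lpf(M) = \lpf(M^\top)$, the PF condition transfers from $A$ to $A^\top$, and applying~\cref{thm:pf-wp-block} to the map $V \mapsto \Psi \odot (A^\top V + C^\top L)$ shows it has a unique solution, computable as the limit of the convergent fixed-point iteration~\cref{eq:fixed-point-v} started at $V(0)=0$. The main obstacle is the first step: one must check that the Jacobian $\Psi$ at the equilibrium really is controlled by the Lipschitz constants $\gamma_l$ in the blockwise norms used to build $N(A)$, so that the operator $\mathcal{T}$ inherits the contraction property from $A$. For CONE maps this is immediate since $|\Psi_{ij}|\le 1$ entrywise; for general BLIP maps it requires careful bookkeeping through the block partition of $\phi$.
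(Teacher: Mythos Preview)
Your proposal is correct and follows essentially the same approach as the paper: implicit differentiation of the equilibrium equation, followed by an adjoint computation that produces the fixed-point equation $V = \Psi \odot (A^\top V + C^\top L)$, whose well-posedness is inherited from the PF condition on $A$ via $\lpf(M)=\lpf(M^\top)$; the resulting gradient formulas $\nabla_A\mathcal{L}=VX^\top$, $\nabla_B\mathcal{L}=VU^\top$, $\nabla_C\mathcal{L}=LX^\top$, $\nabla_D\mathcal{L}=LU^\top$ are exactly those the paper derives. Your closing caveat is also well placed: both you and the paper in effect rely on the PF sufficient condition (Theorems~\ref{thm:pf-wp}/\ref{thm:pf-wp-block}) rather than mere well-posedness, since invertibility of $I-\mathrm{diag}(\Psi)A$ and convergence of~\cref{eq:fixed-point-v} are obtained from the contraction bound, not from the abstract $A\in\WP$ hypothesis.
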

% Note that since \cref{thm:pf-wp} and \cref{thm:pf-wp-block} ensures the well-posedness of $A$ for any CONE activations and BLIP activations with fixed parameters $n_l$, $p_l$, $\gamma_l$, $l \in [L]$, it follows that under such conditions, the gradient can be computed via fixed-point iterations.

In order to handle the well-posedness constraint $\|A\|_\infty \le \kappa$, the projected gradient method requires a projection at each step. This step corresponds to a sub-problem of the form:
\begin{equation}
    \label{eq:proj_nrm_ball}
\min_{A} \: \|A-A^0\|_F ~:~ \|A\|_\infty \le \kappa,
\end{equation}
with matrix $A^0$ given. The above problem is decomposable across rows, and can be efficiently solved using (vectorized) bisection, as detailed in~\cref{app:vec-bisec-proj}.

In some cases, it is desirable to obtain a (block) sparse model matrix $M$. One way to efficiently achieve sparsity is the use of a conditional gradient  (Frank-Wolfe) algorithm that typically only adds a few non-zeros elements to the model matrix $M$ at each step. When updating $A$ for example, considering $B,C,D$ fixed for simplicity, the method requires solving a problem of the form
\[
\max_{A} \: \Tr G^\top A ~:~ \|A\|_\infty \le \kappa ,
\]
where $G$ is a gradient matrix. The problem can be solved in closed-form.

\subsection{Block-coordinate descent methods} % (fold)
\label{sub:bcd_algorithm}
Block-coordinate descent methods use cyclic updates, optimizing one matrix variable at a time, fixing all the other variables. These methods are particularly interesting when the updates require solving convex problems. For instance, considering the training problem~\cref{eq:training-pb-linf}, given $X$, the problem is convex in the model matrix $M$. Then, given the model matrix $M$, finding $X$ consists in a feasibility problem that can be solved with fixed-point iterations. 

In the case of the Fenchel divergence problem relaxation, all the updates involve solving convex problems, as shown in \cite{gu2018fenchel}, as the Fenchel divergence $F_\phi$, for most of activation maps, is bi-convex in its two arguments---meaning that given the first argument fixed, it is convex in the second and vice-versa. We refer to \cite{gu2018fenchel, travacca2020} for more on Fenchel divergences in the context of implicit deep learning and neural networks. We  illustrate and give more detailed examples of such methods in the next section.

% subsection bcd_algorithm (end)
% section training_problem (end)

\section{Numerical experiments}
\label{sec:num}
%To do:
%\begin{itemize}
%\item Robustness analysis: %bounds and adversarial %attacks.
%\item [Beta] Training to recover the performance of a given NN.
%\item Training for sparse model and deep feature selection.
%\item Training for robustness.
%\end{itemize}

\subsection{Learning real nonlinear functions via regression}
We start by illustrating the ability of the gradient method, as presented in \cref{sub:gradient-descent}, to learn the parameters of the implicit model, with  well-posedness enforced via a max-row-sum norm constraint, $\|A\|_\infty \leq 0.5$. We aim at learning a real function, as an example we focus on
$$f(u) = 5 \cos(\pi u) \exp(-\frac{1}{2}|u|),$$
\begin{figure}[ht]
\label{fig: nonlinear function}
\centering
\includegraphics[scale=0.2]{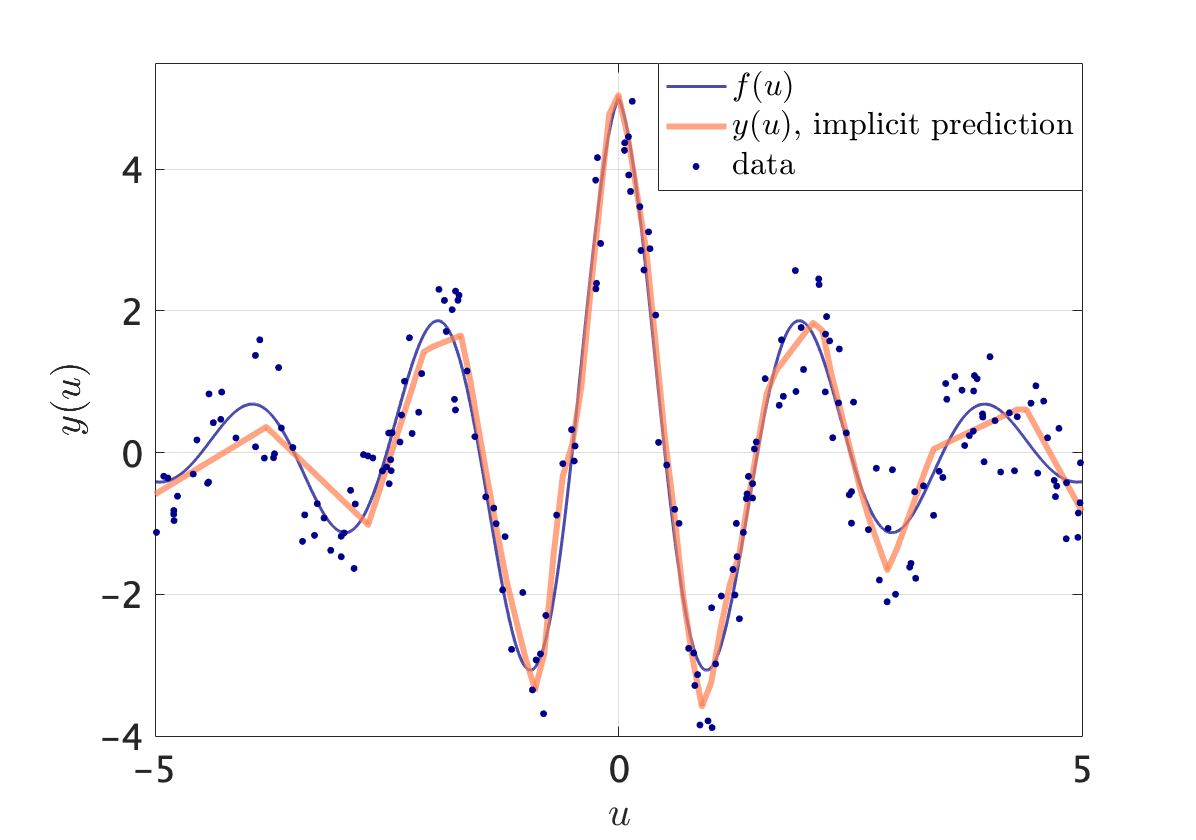}
\caption{Implicit prediction $y(u)$ comparison with $f(u)$}
\label{fig: activation}
\end{figure}
We select the input $u_i$, $i \in \{1, \cdots, m\}$ uniformly at random between $-5$ and $5$ with $m=200$; we add a random noise to the output, $y(u)=f(u)+w$ with $w$ taken uniformly at random between $-1$ and $1$, hence the standard deviation for $y(u)$ is $1 \slash \sqrt(3) \simeq 0.57$. We consider an implicit model of order $n = 75 $. We learn the parameters of the model by doing only two successive block updates: first, we update $(A, B)$ using stochastic projected gradient descent, the gradient being obtained with the implicit chain rule described in~\cref{app:grad-descent}. The RMSE across iterations for this block-update is shown in \cref{fig: activation2}. After this first update we achieve a RMSE of $1.77$. Second, we update $(C,D)$ using linear regression. After this update we achieve a RMSE of $0.56$. For comparison purposes, we also train a neural network with $3$ hidden layers of width $n/3 = 25$ using ADAM, mini-batches, and a tuned learning rate. We run Adam until convergence. We get a RMSE $ =0.65$, which is slightly above that of our implicit model. This first simple experiment shows the ability to fit nonlinear functions with implicit models as illustrated in~\cref{fig: activation}.
\begin{figure}[ht]
\centering
\includegraphics[scale=0.18]{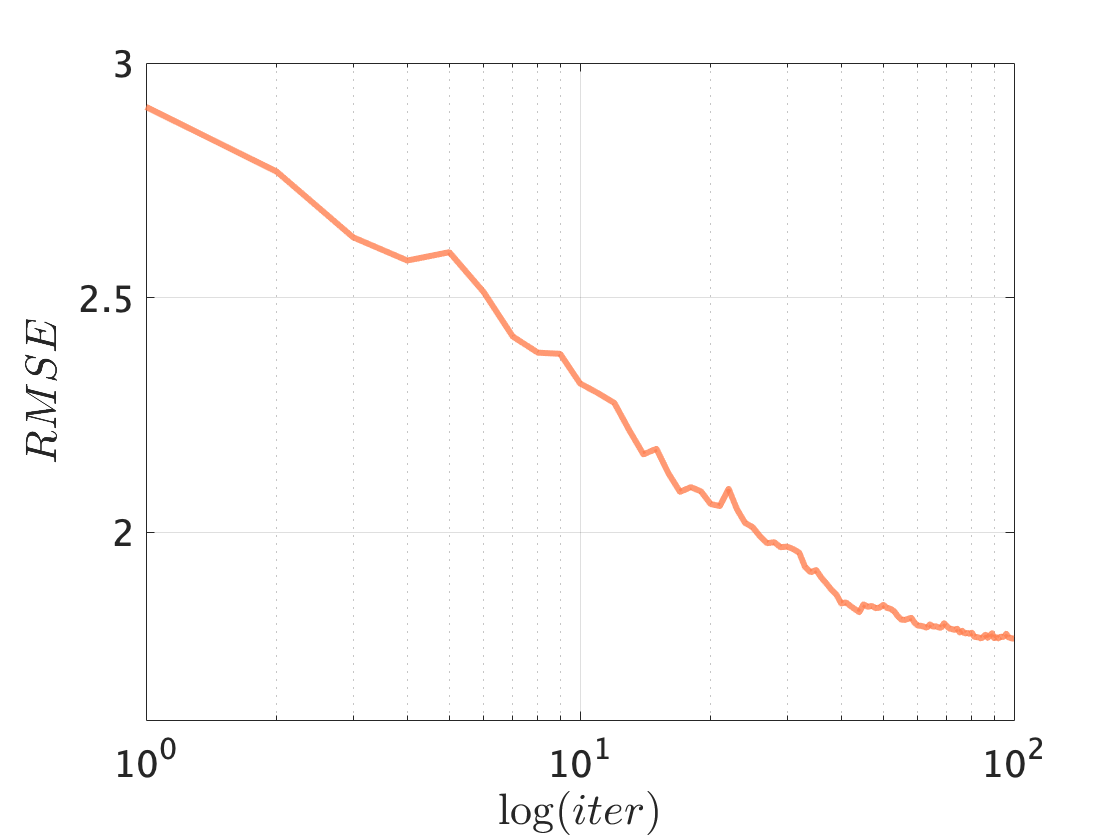}
\caption{RMSE across projected gradient iterations for the $(A, B)$ block update}
\label{fig: activation2}
\end{figure}

\subsection{Comparison with neural networks}
In that section we compare the performance of implicit models with that of neural networks. Experiments on both synthetic datasets and real datasets are conducted. The experiment results show that implicit model has the potential of matching or exceeding the performance of neural networks in various settings. To simplify the experiments, we do not apply any specific network structure or regularization during training. When it comes to training neural networks we will always use ADAM with a grid-search tuned step-size. Similarly, we will always use stochastic projected gradient descent as detailed in~\cref{app:grad-descent}. The choice of the number of hidden features $n$ for implicit models is always aligned with the neural network architecture for fair comparison as explained in~\cref{sub:feedforward_neural_networks_are_a_special_case}.

\subsubsection{Synthetic datasets}
We first consider two synthetic classification datasets: one generated from a neural network and an other from an implicit model. For each dataset, we then aim at fitting both an implicit model and a neural network. Each data point in the datasets contains an input $u\in \mathbb{R}^5$ ($p=5$) and output $y \in \mathbb{R}^2$ ($q=2$). The dataset is generated as follows.

We chose the input datapoints $u_i\in \reals^5$ i.i.d. by sampling each entry independently uniformly between $-1$ and $1$. The output data $y_i \in \reals^2$ is the one-hot representation of the argmax of the output from the generating model, whose parameters are chosen uniformly at random between $-1$ and $1$.

\begin{itemize}
    \item  For a neural network generating model, we us a fully connected 3-layer (5-5-5-2) feedforward neural network with ReLU activation.
    
    \item For an implicit model, we consider an implicit model with $n=10$ to match the number of hidden nodes in the neural network model. To ensure well-posedness, after sampling matrix the $A$, we proceed with a projection on the unit-sized infinity norm ball. (See \cref{app:vec-bisec-proj}).
\end{itemize}

For each dataset, we consider 20 training and test datapoints. The size of datasets are kept small to maintain the over-parameterized settings where the number of parameters is larger than the number of data points. Deep learning falls under the such regime \cite{belkin2019reconciling}. We run 5 separate experiments for both the neural network and implicit model generated dataset with the training model having the same hyperparameters and initialization as the generating models.
 
\vspace{3mm}

\begin{figure}[htbp]
\centering
\begin{minipage}[t]{0.48\textwidth}
\centering
\includegraphics[width=1\columnwidth]{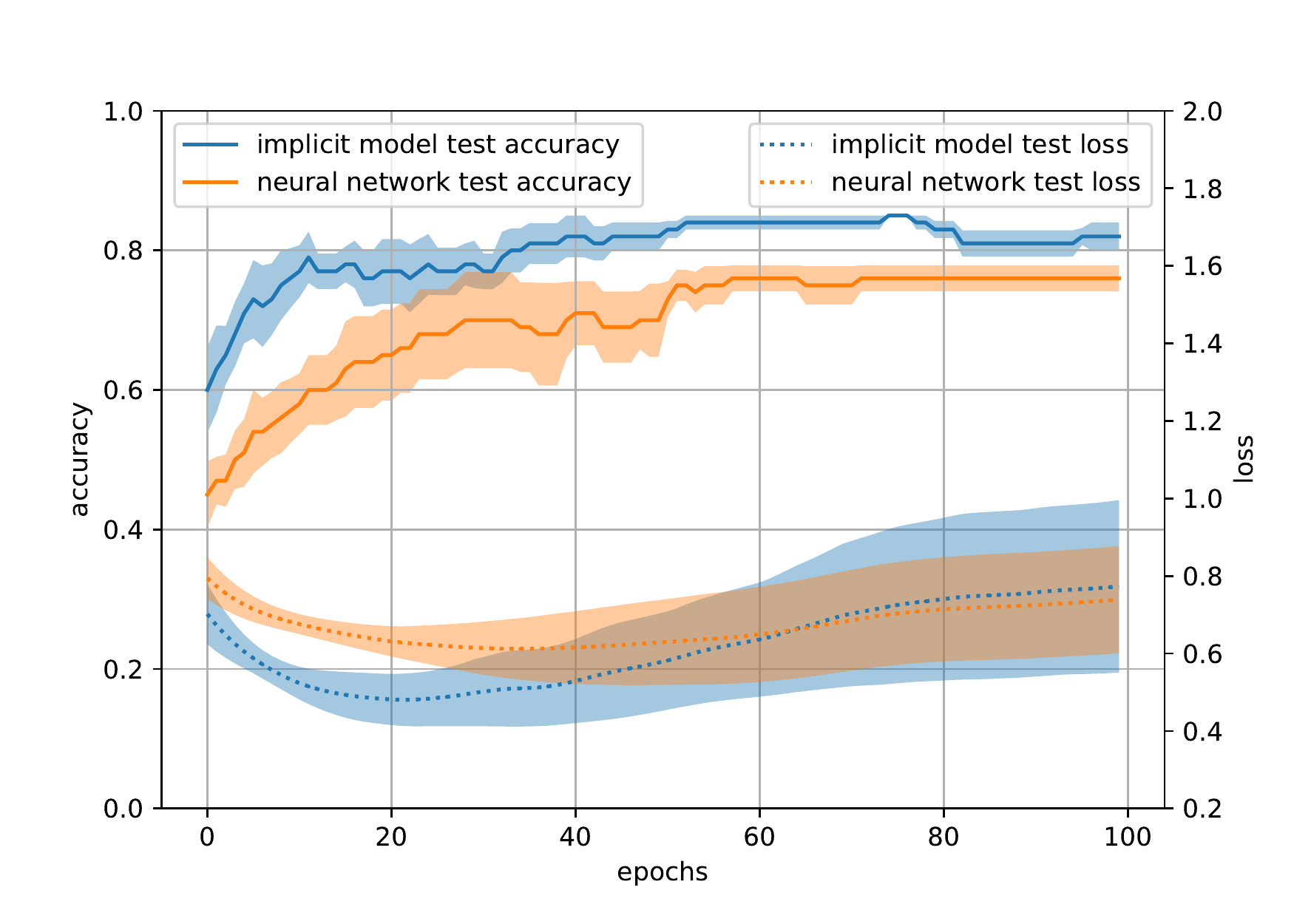}
\caption{Performance comparison on a synthetic dataset generated from a neural network. Average best accuracy, implicit: 0.85, neural network: 0.76. The curves are generated from 5 the different runs with the lines marked as mean and region marked as the standard deviation}
\label{fig:plotsynnn}
\end{minipage}
\hfill
\begin{minipage}[t]{0.48\textwidth}
\centering
\includegraphics[width=1\columnwidth]{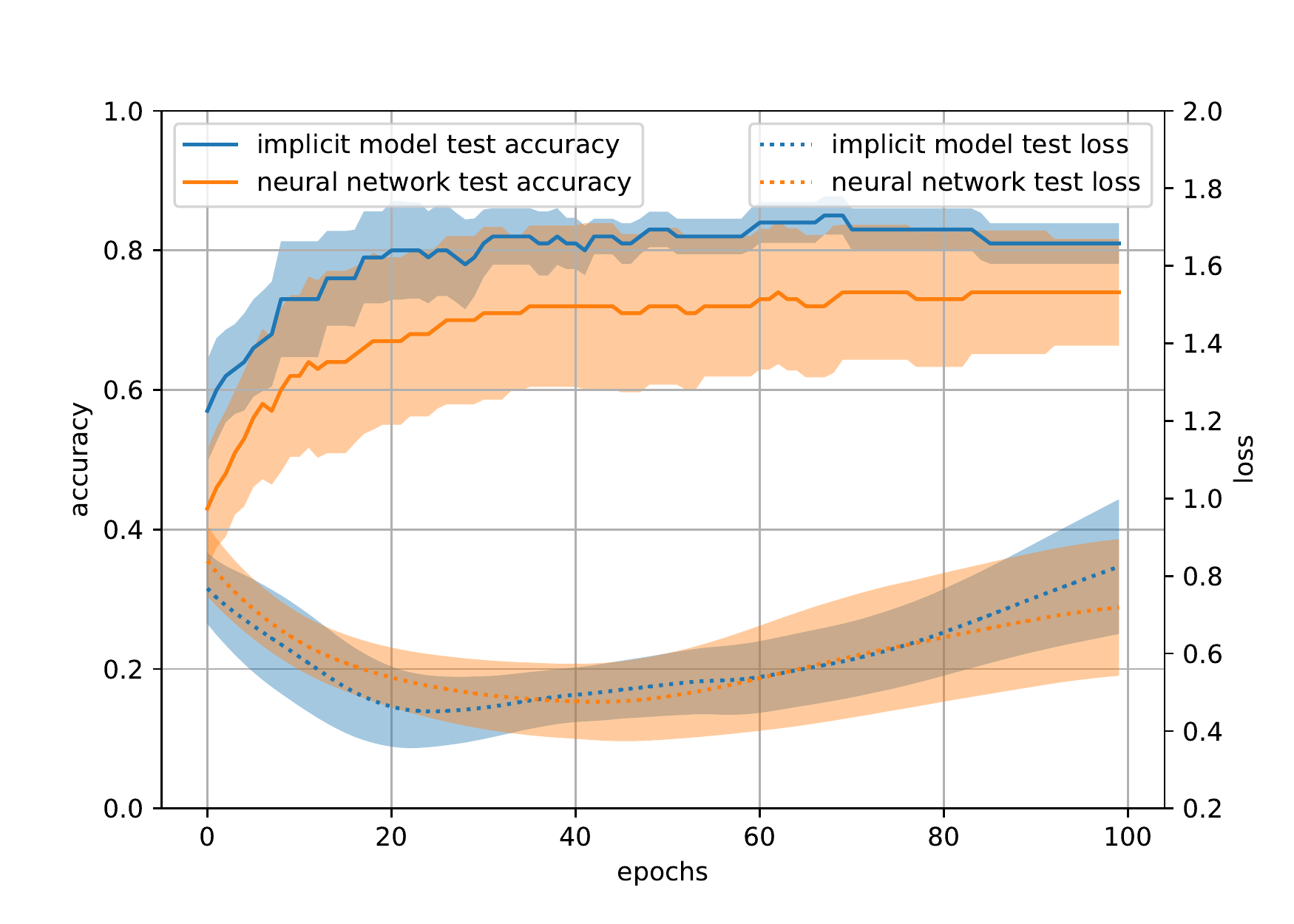}
\caption{Performance comparison on a synthetic dataset generated from an implicit model. Average best accuracy, implicit: 0.85, neural networks: 0.74. The curves are generated from 5 different runs with the lines marked as mean and region marked as the standard deviation over the runs.}
\label{fig:plotsynim}
\end{minipage}

\end{figure}

We find that the implicit model outperforms neural networks in both synthetic experiments. This may be explained by the increased modeling capacity of the implicit model, given similar parameter size, with respect to its neural network counterpart as mentioned in~\cref{sub:implicit_prediction_rules}.

\subsubsection{MNIST dataset}
\label{subsub: MNIST dataset}
In the digit classification dataset MNIST, the input data points are $28 \times 28$ pixels images of hand written digits, the output is the corresponding digit label. For training purposes, each image is reshaped into 784 dimensional vectors and normalized before training. There are $m=5 \times 10^4$ training data points and $10^4$ testing data points.  

The architecture we use for the neural network as a reference is a three-layer feedforward neural network (784-60-40-10) with ReLU activation. For the implicit model, we set $n=100$. We choose a batchsize of $100$ for both algorithms. %We use ADAM with learning rate $5e^{-3}$ tuned through grid search from $1e^{-1}$ to $1e^{-4}$. The performance is examined through the accuracy and loss on the test dataset over the training process.
The accuracy with respect to iterations is shown in \cref{fig:plotmnist}. We observe that the accuracy of the implicit model matches that of its neural network counterpart. 

\subsubsection{German Traffic Sign Recognition Benchmark}
\label{subsub: GTSRB dataset}
 In the German Traffic Sign Recognition Benchmark (GTSRB) \cite{stallkamp2011german}, the input data points are $32\times 32$ images with rgb channels of traffic signs consisting of 43 classes. Each image is turned into gray-scale before being reshaped into a 1024 dimensional vector and re-scaled to be between 0 and 1. There are 34,799 training data points and 12,630 testing data points. 

For reference, we use a (1024-300-100-43) feedforward neural network with ReLU activations. We choose $n=400$ for the implicit model. We choose a batchsize of 100. The accuracy with respect to iterations is shown in \cref{fig:plottraffic}.

\begin{figure}[htbp]
\centering
\begin{minipage}[t]{0.48\textwidth}
\centering
\includegraphics[width=1\columnwidth]{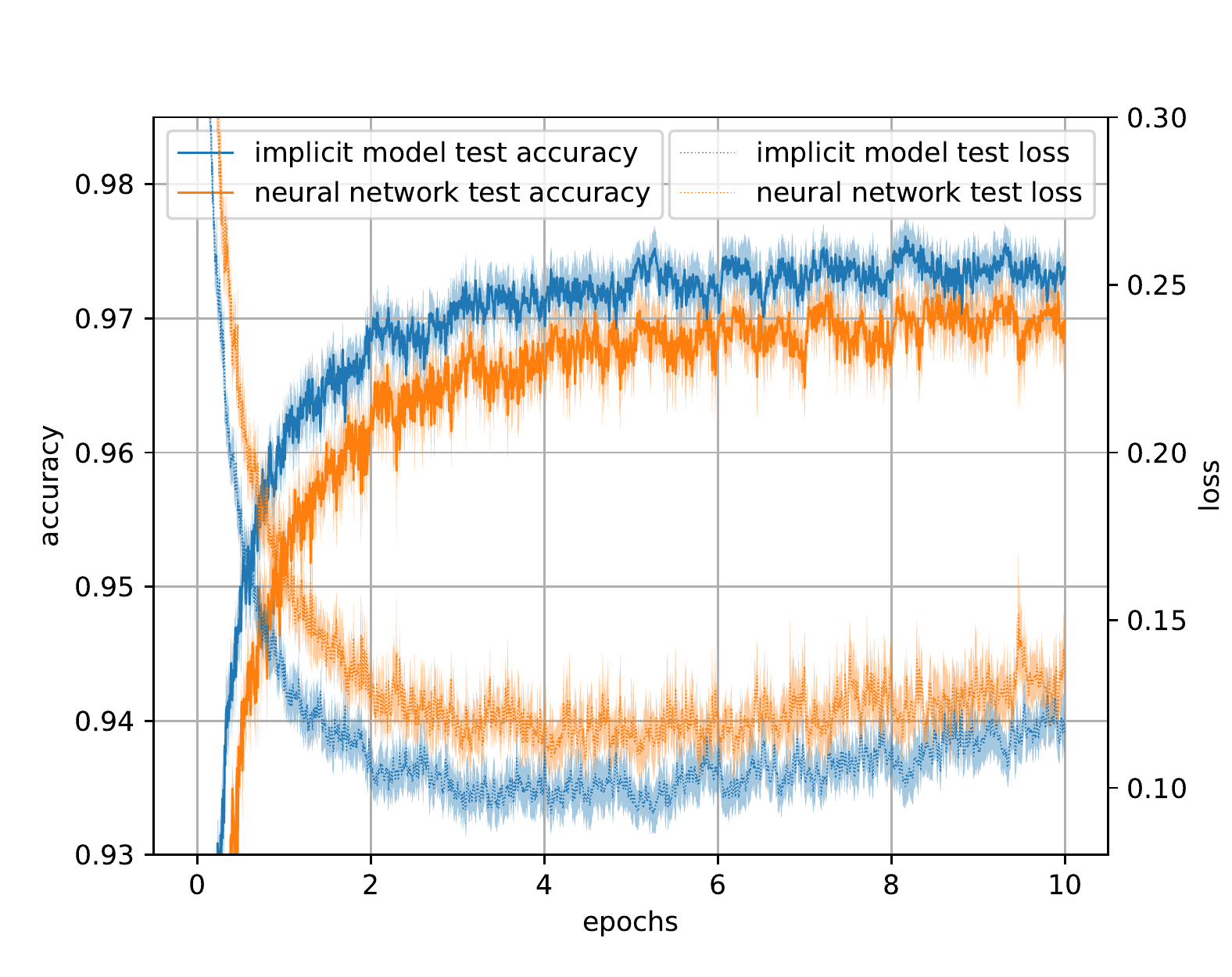}
\caption{Performance comparison on MNIST. Average best accuracy, implicit: 0.976, neural networks: 0.972. The curves are generated from 5 different runs with the lines marked as mean and region marked as the standard deviation over the runs.}
\label{fig:plotmnist}
\end{minipage}
\hfill
\begin{minipage}[t]{0.48\textwidth}
\centering
\includegraphics[width=1\columnwidth]{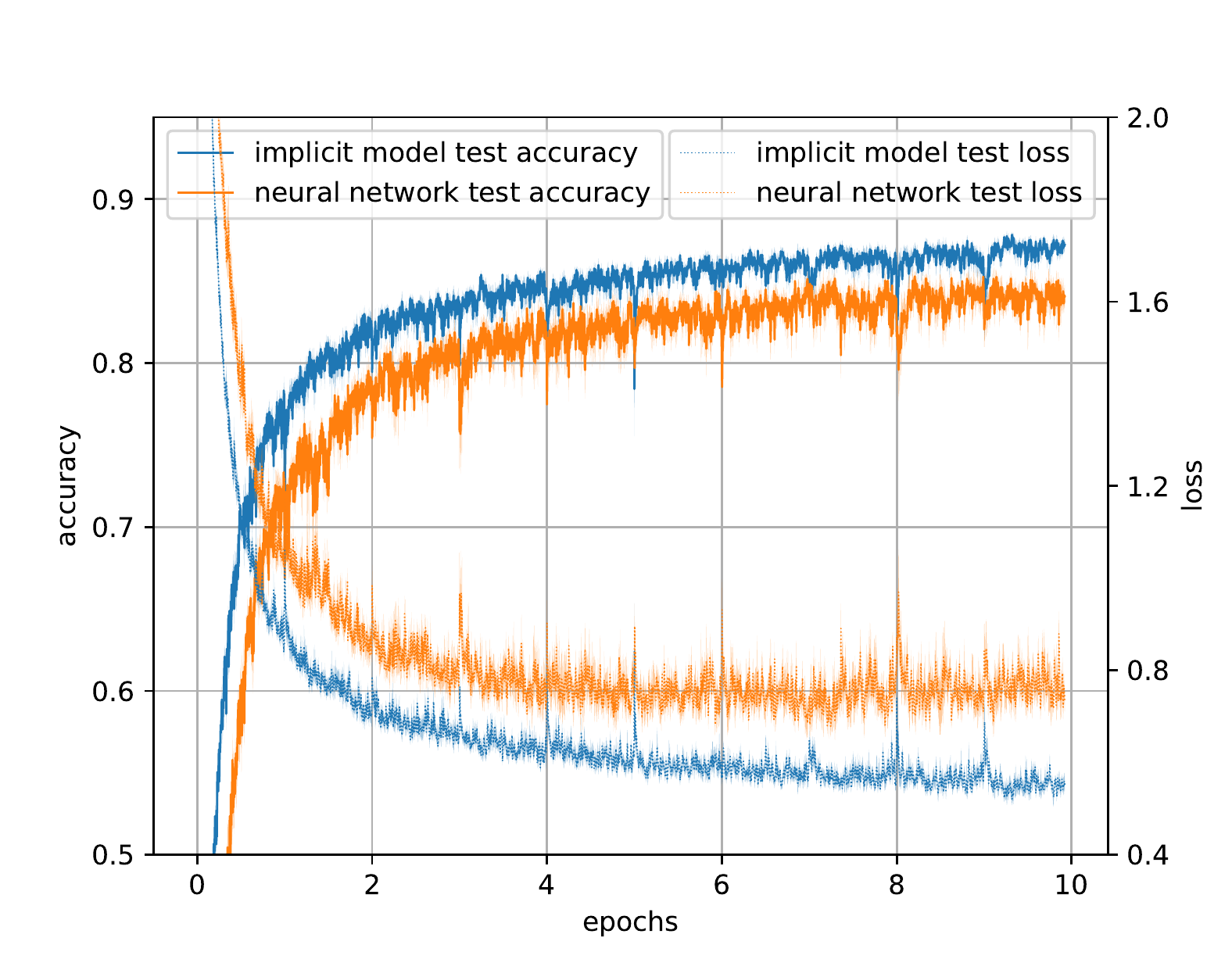}
\caption{Performance comparison on GTSRB.  Average best accuracy, implicit: 0.874, neural networks: 0.859. The curves are generated from 5 different runs with the lines marked as mean and region marked as the standard deviation over the runs.}
\label{fig:plottraffic}
\end{minipage}
\end{figure}

Similar to what we observed with synthetic datasets, the implicit model is capable of  matching and even outperform classical neural networks.% This help justifies the argument that introducing additional entries on top of neural networks, augmenting them into implicit models bears the potential of obtaining superior performance.

\subsection{Adversarial attack}

\subsubsection{Attack via the sensitivity matrix}
Our analysis above highlights the use of \emph{sensitivity matrix} as a measure for robustness. In this section, we show how sensitivity matrix can be used to generate effective attacks on two public datasets, MNIST and CIFAR-10. We compare our method against commonly used gradient-based attacks \cite{GoodfellowSS14, PapernotMJFCS16}. In this experiment, we consider two models: 1) feed-forward network trained on the MNIST dataset (98\% clean accuracy) and 2) ResNet-20 \cite{He2016DeepRL} trained on the CIFAR-10 dataset (92\% clean accuracy).

\begin{figure}[!h]
    \centering
    \includegraphics[width=.3\textwidth]{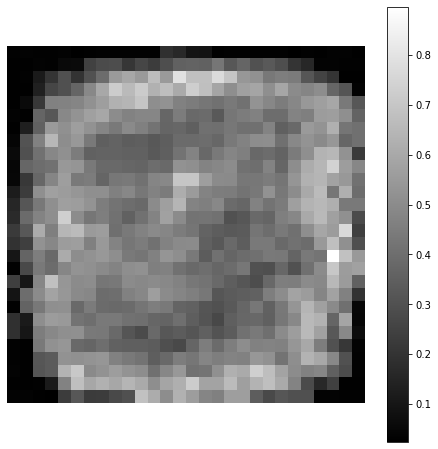}
    \includegraphics[width=.3\textwidth]{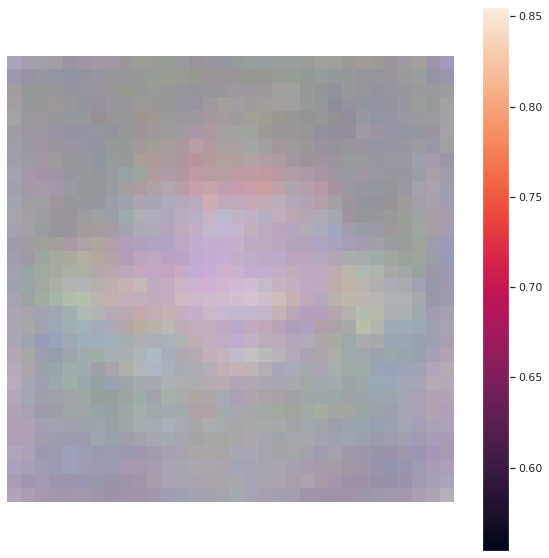}
    \caption{\emph{Left}: sensitivity values of a feed-forward network for the class ``digit 0'' in MNIST. \emph{Right}: sensitivity values of a ResNet-20 model for the class ``airplane'' in CIFAR-10. Brighter colors correspond to higher sensitivity when perturbed.}
    \label{fig:sensitivity-matrix}
\end{figure}

Figure \ref{fig:sensitivity-matrix} shows the sensitivity values for a particular class on MNIST and CIFAR-10 dataset. The sensitivity values are obtain from the implicit representation of the feed-forward network and ResNet-20. The 784 and 3072 input dimensions are arranged to correspond to the $28\times 28$ (grey scale) and $32\times 32$ (color) image pixel alignment. Brighter colors correspond to features with a higher impact on the output when perturbed. As a result, the sensitivity matrix can be used to generate adversarial attacks. 

We compare our method with commonly used gradient-based attacks. Precisely, for a given function $F$ (prediction rule) learned by a deep neural network, a benign sample $u \in \reals^p$ and the target $y$ associated with $u$, we compute the gradient of the function $F$ with respect to the given sample $u$, $\nabla_u F(u, y)$. We then take the absolute value of the gradient as an indication of which input features an adversary should perturb, similar to the saliency map technique \cite{SimonyanVZ13, PapernotMJFCS16}. The absolute value of the gradient can be seen as a ``local'' version of the sensitivity matrix; however,  unlike the gradient, the sensitivity matrix does not depend on the input data, making it a more general measurement of robustness for any given model.

\cref{tab:result} presents the experimental results of an adversarial attack using the sensitivity matrix and the absolute value of gradient on MNIST and CIFAR-10. For sensitivity matrix attack, we start from perturbing the input features that have the highest values according to the sensitivity matrix. For gradient-based attack, we do the same according to the absolute value of the gradient. We perturb the input features into small random values. Our experiments show that the sensitivity matrix attack is as effective as the gradient-based attack, while being very simple to implement. 

Interestingly, our attack does not rely on any input samples that the gradient-based attack needs. An adversary with the model parameters could easily craft adversarial samples using the sensitivity matrix. In the absence of access to the model parameters, an adversary can rely on the principle of \emph{tranferrability} \cite{LiuCLS17} and train a surrogate model to obtain the sensitivity matrix. \cref{fig:attack-examples} displays adversarial images generated using the sensitivity matrix. An interesting case is to use the sensitivity matrix to generate a sparse attack as seen in \cref{fig:attack-examples}.

\begin{table}[!h]
    \centering
    \caption{Experimental results of attack success rate against percentage of perturbed inputs on MNIST and CIFAR-10 (\numprint{10000} samples from test set).}
    \begin{tabular}{ lcccc }
     \toprule
     & \multicolumn{2}{c}{\textbf{Sensitivity matrix attack}} & \multicolumn{2}{c}{\textbf{Gradient-based attack}} \\
     \% of perturbed inputs & MNIST & CIFAR-10 & MNIST & CIFAR-10 \\
     \midrule
     0.1\% & 1.01\% & 3.04\% & 2.42\% & 1.75\% \\
     1\% & 13.41\% & 10.16\% & 26.92\% & 6.66\% \\
     10\% & 70.67\% & 36.21\% & 74.90\% & 33.18\% \\
     20\% & 89.82\% & 57.01\% & 87.10\% & 52.57\% \\
     30\% & 90.22\% & 67.45\% & 89.82\% & 66.59\% \\
     \bottomrule
    \end{tabular}
    \label{tab:result}
\end{table}

\begin{figure}[!h]
    \centering
    \includegraphics[width=.45\textwidth]{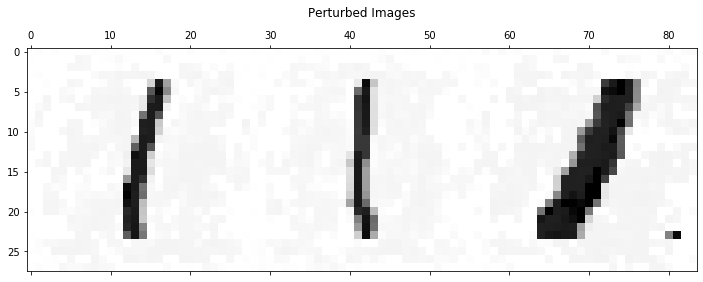}
    \includegraphics[width=.45\textwidth]{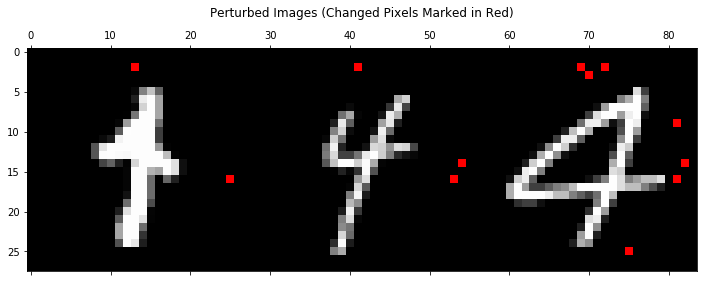}
    \includegraphics[width=.45\textwidth]{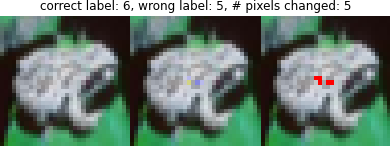}
    \includegraphics[width=.45\textwidth]{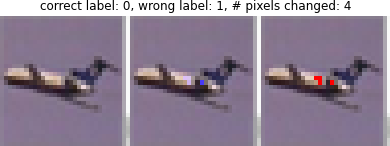}
    \caption{\emph{Top}: adversarial samples from MNIST. On the left are dense attacks with small perturbations and on the right are sparse attacks with random perturbations (perturbed pixels are marked as red). \emph{Bottom}: example sparse attack on CIFAR-10. The left ones are cleaned images, the middle ones are perturbed images, and the right ones mark the perturbed pixels in red for higher visibility.}
    \label{fig:attack-examples}
\end{figure}

\subsubsection{Attack with LP relaxation for CONE maps}
Although one can use sensitivity matrix to generate an effective adversarial example, one may wish to perform a more sophisticated attack by exploiting the weakness of an individual data point. This can be done by considering the LP relaxation (\cref{thm:lp-bound}), which has the advantage of generating a specific adversarial example for a given input data. The experiment in this section is again on MNIST and CIFAR-10 images. Here, the problem outlined in \cref{eq:state-wc-l1} is solved by LP relaxation, with the function $f_i(\xi) = (\xi - x_i^0)^2$. The optimization problem then finds a perturbed image that leads to the largest discrepancy between the perturbed state $x$ and the nominal state $x^0$. \cref{fig:cifar-attack} shows five example images. The clean images are shown in the top row. The perturbed images generated by the LP relaxation, as shown in the bottom row, appear quite similar to the naked eye; however, the model fails to predict these images correctly. The examples shown here are all correctly predicted by the model when no perturbation is added.

Our framework also allows for sparse adversarial attack by adding a cardinality constraint. \cref{fig:mnist-attack} shows three examples of perturbed images under non-sparse and sparse attack. Images on the left are the results of non-sparse attack and those on the right are the results of sparse attack. The model fails to predict the label correctly under both conditions. These results illustrate how the implicit prediction rules can be used to generate powerful adversarial attacks. It is also useful for adversarial training as a large amount of adversarial examples can be generated using the technique and be added back to the training data.

\begin{figure}[htbp]
    \centering
    \includegraphics[width=0.15\columnwidth]{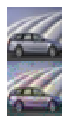}
    \includegraphics[width=0.15\columnwidth]{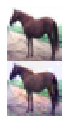}
    \includegraphics[width=0.15\columnwidth]{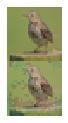}
    \includegraphics[width=0.15\columnwidth]{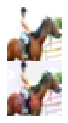}
    \includegraphics[width=0.15\columnwidth]{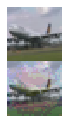}
    \caption{Example attack on CIFAR dataset. Top: clean data. Bottom: perturbed data.}
    \label{fig:cifar-attack}
\end{figure}
    
\begin{figure}[htbp]
    \centering
    \includegraphics[width=0.45\columnwidth]{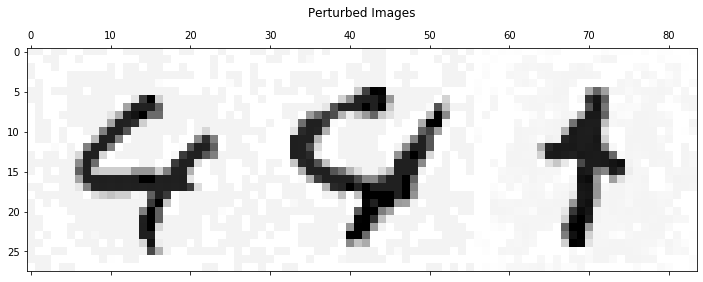}
    \includegraphics[width=0.45\columnwidth]{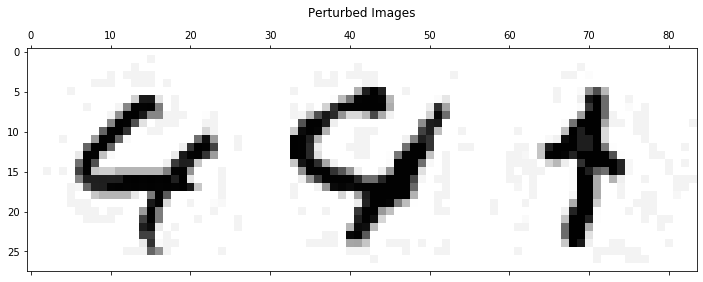}
    \caption{Example attack on MNIST dataset. Left: non-sparse attack. Right: sparse attack.}
    \label{fig:mnist-attack}
\end{figure}

\section{Prior Work}
\label{sec:related_work}

\subsection{In implicit deep learning} % (fold)
Recent works have considered versions of implicit models, and demonstrated their potential in deep learning. In particular, recent ground-breaking work by Kolter and collaborators \cite{zico19,kolter19} demonstrated success of an entirely implicit framework, which they call Deep Equilibrium Models, for the task of sequence modeling. Paper~\cite{node} uses implicit methods to solve and construct a general class of models known as neural ordinary differential equations, while \cite{NIPS2018_7948} uses implicit models to construct a differentiable physics engine that enables gradient-based learning and high sample efficiency. Furthermore, many papers explore the concept of integrating implicit models with modern deep learning methods in a variety of ways. For example, \cite{pmlr-v97-wang19e} show promise in integrating logical structures into deep learning by incorporating a semidefinite programming (SDP) layer into a network in order to solve a (relaxed) MAXSAT problem; see also \cite{wang2019satnet}. In  \cite{amos2018differentiable} the authors propose to include a model predictive control as a differentiable policy class for deep reinforcement learning, which can be seen as a kind of implicit architecture. In~\cite{amos2017optnet} the authors introduced implicit layers where the activation is the solution of some quadratic programming problem; in \cite{donti2017task}, the authors incorporate stochastic optimization formulation for end-to-end learning task, in which the model is trained by differentiating the solution of a stochastic programming problem. 

\subsection{In lifted models} % (fold)
\label{sub:in_lifted_models}
In implicit learning, there is usually no way to express the state variable in closed-form, which makes the task of computing gradients with respect to model parameters challenging.  Thus, a natural idea in implicit learning is to keep the state vector as a variable in the training problem, resulting in a higher-dimensional (or, ``lifted'') expression of the training problem. The idea of lifting the dimension of the training problem in (non-implicit) deep learning by introducing ``state'' variables has been studied in a variety of works; a non-extensive list includes \cite{taylor2016training}, \cite{askari2018lifted}, \cite{gu2018fenchel}, \cite{zeng2018global}, \cite{Zhang:2017}, \cite{carreira-perpinan14} and \cite{li2019lifted}. Lifted models are trained using block coordinate descent methods, Alternating Direction Method of Multipliers (ADMM) or iterative, non-gradient based methods. In this work, we introduce a novel aspect of lifted models, namely the possibility of defining a prediction rule implicitly. 
% paragraph in_lifted_models (end)

\subsection{In robustness analysis} % (fold)
\label{par:in_robustness_analysis}
The issue of robustness in deep learning is generating quite a bit of attention, due to the fact that many deep learning models suffer from the lack of robustness. Prior relevant work have demonstrated that deep learning models are vulnerable to adversarial attacks \cite{GoodfellowSS14, KurakinGB17a, PapernotMJFCS16, Kurakin:2017}. The vulnerability issue of deep learning models have motivated the study of corresponding defense strategies \cite{madry2018towards, PapernotM0JS16, NIPS2018_8285, Gowal2018, Shaham:2015, Wong:2017, Cohen:2019}. However, many of the defense strategies are later shown to be ineffective \cite{AthalyeC018, Carlini017-adversarial}, suggesting the needs for the theoretical understanding of robustness evaluations for deep learning model. In this work, we formalize the robustness analysis of deep learning via the lens of the implicit model. A large number of deep learning architectures can be modeled using implicit prediction rules, making our robustness evaluation a versatile analysis tool. 

In \cite{NIPS2018_8285}, the authors arrive at a similar formulation (which we discuss below) but do not explore the quality of the attacks that can be extracted by solving \eqref{eq:bidual}.

The SDP-based formulation given in~\cref{sub:sdp_relaxations} parallels the formulation in \cite{NIPS2018_8285} where the authors represent interval sets ($l \leq x \leq u$) as the quadratic constraint $x \circ x \leq (l+u) \circ x - l \circ u$ and work on the rank-relaxed version of their problem to arrive at a SDP. Upon further inspection, our formulation is similar to, and therefore extends, the formulation proposed in \cite{NIPS2018_8285} for feed-forward neural networks. Indeed, for a neural network, $A$ is block strictly upper triangular and $(I-|A|)^{-1}$ has a closed form expression; in the case of three layers we have:
\begin{align*}
    &A = \begin{bmatrix}
    0 & W_2 & 0\\
    0 & 0 & W_1 \\
    0 & 0 & 0
    \end{bmatrix}, \;\; B = \begin{bmatrix}
    0 \\ 
    0 \\ 
    W_0
    \end{bmatrix} ,
    \end{align*}
    leading to
    \begin{align*}
    (I-|A|)^{-1}|B| &= \begin{bmatrix}
    I & |W_2| & |W_2||W_1| \\
    0 & I & |W_1| \\
    0 & 0 & I
    \end{bmatrix} \begin{bmatrix}
     0 \\ 
    0 \\ 
    |W_0|
    \end{bmatrix} = \begin{bmatrix}
    |W_2||W_1||W_0| \\
    |W_1||W_0| \\
    |W_0|
    \end{bmatrix}.
\end{align*}
The difference between our proposed SDP and the SDP in \cite{NIPS2018_8285} is how the input-uncertainty constraints are dealt with at every layer. In \cite{NIPS2018_8285}, the authors strengthen their formulation by reducing the size of the uncertainty set at each layer. That is, they propagate the input uncertainty and bound the size of the uncertainty at each layer of the neural network (see Sections 5.1 and 5.2 of \cite{NIPS2018_8285}). As written, \eqref{eq:bidual} does not include these constraints and hence can be tightened by adding extra box-constraints as is done in the aforementioned reference. This allows to extend, to implicit models, the results that were derived for traditional feed-forward networks. 

% paragraph in_robustness_analysis (end)
% subsection related_work (end)

\subsection{In sparsity, compression and deep feature selection}
Sparsity and compression, which are well understood in classical settings, have found their place in deep learning and are an active branch of research. Early work in pruning dates back to as early as the 90s \cite{lecun1990optimal, hassibi1993optimal} and has since gained interest. In \cite{srivastava2014dropout}, the authors showed that by randomly dropping units (\ie\ increasing the sparsity level of the network or compressing the network) reduces overfitting and improved the generalization performance of networks. Recently, more sophisticated ways of pruning networks have been proposed, in an effort to reduce the overall size of the model, while retaining or accepting a modest decrease in accuracy: a non-extensive list of works include \cite{zhu2017prune, narang2017exploring, han2015deep, see2016compression, anwar2017structured, lebedev2016fast, changpinyo2017power, louizos2017learning, evci2019difficulty}. Sparsity and model compression is very closely tied with other techniques to reduce model size. This includes, but is not limited to, low-rank matrix factorization, group sparsity regularization, quantization techniques and low precision networks. While this sub-branch of research is far from being complete, sparsity and compression offer promise in the way of low memory footprints and generalization performance for deep networks.

\section{Concluding Remarks}
\label{sec:conclusions}
We conclude with a few perspectives for future research on implicit models.

\subsection{Cousins of implicit models}
\begin{figure}[h]
\centering
\includegraphics[width=.45\textwidth]
{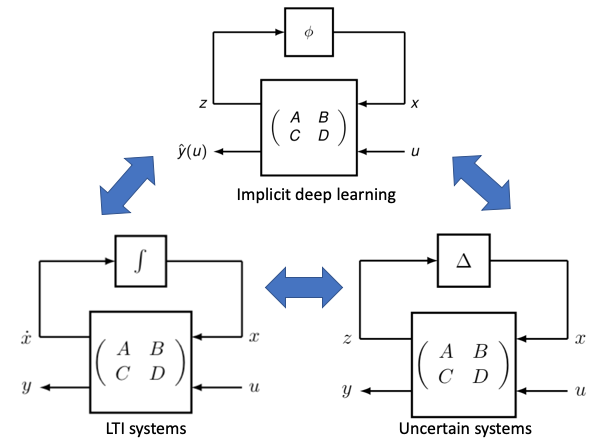}
\caption{Some cousins of implicit models: LTI systems (bottom left) and uncertain systems (bottom right).}
\end{figure}

Implicit models rely on a representation of the prediction rule where the linear operations are clearly separated from the (parameter-free) nonlinear ones, leading to the block-diagram in~\cref{fig:Figures_BlockDiag}.

This idea is strongly reminiscent of earlier representations arising in systems and control theory. Perhaps the most famous example lies with linear time-invariant (LTI) systems, where the idea is essentially equivalent to the state-space representation of transfer functions~\cite{aplevich2000essentials}. In that context, the block-diagram representation involves an integrator (in continuous or discrete time) in place of the static nonlinear map $\phi$.  The idea is also connected to the linear-fractional representation (or, transformation, LFT) arising in uncertain systems~\cite{dullerud2013course}, where now the linear operations actually represent a dynamic system (that is, the matrix $M$ is an LTI system itself), and the map $\phi$ is replaced by an uncertain matrix of parameters or LTI system. 

These connections raise the prospect of a unified theory tackling deep networks inside the loop of a dynamical system, where such networks can be composed (via feedback connections) with LTI or more generally uncertain systems. With the machinery of Linear Matrix Inequalities or the more general Integral Quadratic Constraints framework~\cite{megretski1997system}, one can develop rigorous analyses performed on systems with deep networks in the loop, so that bounds on, say, stability margins can be computed.

\subsection{Minimal representations}
The prediction rule $u \rightarrow \hat{y}(u)$ provided by a given implicit model can be in general obtained by many alternative models---scaling the model matrices via the diagonal scaling given in~\cref{eq:collatz-wiedlandt} illustrates this non-unicity. An interesting theoretical study would focus on obtaining representations that are minimal from the point of view of size, precisely the order $n$, which would extend the well-established theory of minimal representations of LTI systems.

\section*{Acknowledgments}
We would like to acknowledge useful comments and suggestions by Mert Pilanci and Zico Kolter.

\bibliographystyle{siamplain}
\bibliography{idl}

\newpage
\appendix

%!TEX root = idl_main.tex
\section[Proof of Thm]{Proof of~\cref{thm:pf-wp}} % (fold)
\label{app:proof_of_wellposedness}
Let $b \in \reals^n$. We first prove the existence of a solution $\xi \in \reals^n$ to the equation $\xi = \phi(A\xi + b)$. Consider the Picard iteration~\cref{eq:fixed-point-b}. We have for every $t \ge 1$:
\[
|x(t+1)-x(t)| = |\phi(Ax(t)+b) - \phi(Ax(t-1)+b)| \le |A| |x(t) - x(t-1)|,
\]
which implies that for every $t,\tau \ge 0$:
\[
|x(t+\tau) - x(t)| \le \sum_{k=t}^{t+\tau} |A|^k |x(1) - x(0)| \le |A|^t \sum_{k=0}^{\tau} |A|^k |x(1) - x(0)| \le |A|^t w,
\]
where 
\[
w := \sum_{k=0}^{+\infty} |A|^k |x(1) - x(0)| = (I-|A|)^{-1}|x(1) - x(0)| .
\]
Here we exploited the fact that, due to $\lpf(|A|) <1$, $I-|A|$ is invertible, and the series above converges. Since $\lim_{t \rightarrow 0} |A|^t = 0$, we obtain that $x(t)$ is a Cauchy sequence, hence it has a limit point, $x_\infty$. By continuity of $\phi$ we further obtain that $x_\infty = \phi(Ax_\infty + b)$, which establishes the existence of a solution.

To prove unicity, consider $x^1,x^2 \in \reals^n_+$ two solutions to the equation. Using the hypotheses in the theorem, we have, for any $k \ge 1$:
\[
|x^1-x^2| \le |A| |x^1-x^2| \le |A|^k |x^1-x^2|.
\]
The fact that $|A|^k \rightarrow 0$ as $k \rightarrow +\infty$ then establishes unicity.

%!TEX root = idl_main.tex
\section[Proof of Thm]{Proof of~\cref{thm:pf-wp-block}} % (fold)
\label{app:proof_of_wellposedness-block}
We follow the steps taken in the proof of~\cref{thm:pf-wp}. 
Let $b \in \reals^n$. Our first step is to establish that for the Picard iteration~\cref{eq:fixed-point-b}, we have
\[
\eta(x(t+1)-x(t)) \le M \eta(x(t+1)-x(t))
\]
for an appropriate non-negative matrix $M \ge0$.  Here, $\eta$ is a vector of norms, as defined in~\cref{eq:vect-norms}. For every $l \in [L]$, $t \ge 1$:
\begin{align*}
[\eta(x(t+1)-x(t))]_l &= \|[\phi(A(x(t)+b)- \phi(Ax(t-1)+b)]_l\|_{p_l} \\
&= \|\phi_l((A(x(t)+b)_l)- \phi_l((Ax(t-1)+b)_l)\|_{p_l} \\
&\le \gamma_l \|[A(x(t)-x(t-1))]_l\|_{p_l} \\
&= \gamma_l \|\sum_{h \in [L]}  A_{lh} (x(t)-x(t-1))_h\|_{p_l} \\
& \le \gamma_l \sum_{h \in [L]} \|A\|_{p_l \rightarrow p_h} \|x_h(t+1)-x_h(t)\|_{p_h} = \gamma_l [M \eta(x(t+1)-x(t))]_l, 
\end{align*}
which establishes the desired bound, with $M:= \diag(\gamma)N(A)$, where $\gamma$ is the vector of Lipschitz constants, and $N(\cdot)$ is the $L \times L$ matrix of induced norms~\cref{eq:induced-norm-matrix}. 

Assume now that $\lpf(M) <1$, as posited in the Theorem. Then $I-M$ is invertible. We proceed to prove existence by showing that the sequence of Picard iterates is Cauchy: for every $t,\tau \ge 0$,
\[
\eta(x(t+\tau) - x(t)) \le \sum_{k=t}^{t+\tau} M^k \eta(x(1) - x(0)) \le M^t \sum_{k=0}^{\tau} M^k \eta(x(1) - x(0)) \le M^t w,
\]
where 
\[
w := \sum_{k=0}^{+\infty} M^k \eta(x(1) - x(0)) = (I-M)^{-1}\eta(x(1) - x(0)) .
\]
The above proves the existence.

To prove unicity, consider $x^1,x^2 \in \reals^n_+$ two solutions to the equation. Using the hypotheses in the theorem, we have, for any $k \ge 1$:
\[
\eta(x^1-x^2) \le M \eta(x^1-x^2) \le M^k \eta(x^1-x^2).
\]
The fact that $M^k \rightarrow 0$ as $k \rightarrow +\infty$ then establishes unicity.

%!TEX root = idl_main.tex
\section{Proof of~\cref{thm:wp-tri}} % (fold)
\label{app:wp-tri}
Express the equation $x = \phi(Ax+b)$ as
\[
x_1 = \phi(A_{11}x_1+A_{12}x_2+b_1), \;\; x_2 = \phi(A_{22} x_2 + b_2),
\]
where $b = (b_1,b_2)$, $x=(x_1,x_2)$, with $b_i \in \reals^{n_i}$, $x_i \in \reals^{n_i}$, $i=1,2$. Here, since $\phi$ acts componentwise, we use the same notation $\phi$ in the two equations.

Now assume that $A_{11}$ and $A_{22}$ are well-posed with respect to $\phi$.  Since $A_{22}$ is well-posed for $\phi$, the second equation has a unique solution $x_2^*$; plugging $x_2=x_2^*$ into the second equation, and using the well-posedness of $A_{11}$, we see that the first equation has a unique solution in $x_1$, hence $A$ is well-posed.

To prove the converse direction, assume that $A$ is well-posed. The second equation above must have a unique solution $x_2^*$, irrespective to the choice of $b_2$, hence $A_{22}$ must be well-posed.  To prove that $A_{11}$ must be well-posed too, set $b_2 = 0$, $b_1$ arbitrary, leading to the system
\[
x_1 = \phi(A_{11} x_1 + A_{12}x_2 + b_1), \;\; 
x_2 = \phi(A_{22}x_2) .
\]
Since $A_{22}$ is well-posed for $\phi$, there is a unique solution $x_2^*$ to the second equation; the first equation then reads $x_1 = \phi(A_{11} x_1 + b_1+A_{12}x_2^*)$. It must have a unique solution for any $b_1$, hence $A_{11}$ is well-posed.

%\input feedback-connect.tex
%!TEX root = idl_main.tex
\section{Proofs of~\cref{thm:box-bnd}, \cref{thm:box-bnd-blip} and~\cref{thm:box-bnd-output-blip}} % (fold)
\label{app:box-state-bnds}
First let us prove~\cref{thm:box-bnd}.
Due to the non-expansivity of the CONE map $\phi$:
\begin{align}\label{eq:state-bounds-cone-ineq}
|x-x^0| \le |A||x-x^0| + b,
\end{align}
where $b := |B| \sigma_u \ge 0$. Since $\lpf(|A|) <1$, the matrix $I-|A|$ is invertible and its inverse is componentwise non-negative; for any $n$-vector $b \ge 0$, we have 
\[
\bar{\sigma} := (I-|A|)^{-1}b = \sum_{k\ge0} |A|^k b \ge 0.
\] 
Further, any $n$-vector $\sigma$ such that $\sigma \le |A|\sigma + b$,
we have, for every integer $N$:
\[
\sigma \le |A|^N \sigma + \sum_{k=0}^{N-1} |A|^k b \le |A|^N \sigma + \bar{\sigma}.
\]
Since $\lpf(|A|) <1$, we have $|A|^N \rightarrow 0$ as $N \rightarrow +\infty$, which yields $\sigma \le \bar{\sigma}$.
Applying this to $\sigma = |x-x^0|$ yields the desired result.

Now turn to the proof of~\cref{thm:box-bnd-blip}. 
For every $l \in [L]$:
\begin{align*}
[\eta(x-x^0)]_l & \le \|[\phi(A(x-x^0) + B(u-u^0)b)]_l\|_{p_l} \\
&\le \gamma_l \|\sum_{h \in [L]}  A_{lh} (x-x^0)_h\|_{p_l} + \gamma_l \|\sum_{i \in [p]}  B_{li} (u-u^0)_i\|_{p_l}\\
& \le \gamma_l [N(A) \eta(x-x^0))]_l + \gamma_l \sum_{i \in [p]}  \|B_{li}\|_{p_l} |u-u^0|_i\\
& \le \gamma_l [N(A) \eta(x-x^0))]_l + \gamma_l [N(B) |u-u^0|]_l, 
\end{align*}
which establishes the desired bound.

The result of~\cref{thm:box-bnd-output-blip} is obtained similarly. For given $i \in [q]$, we have
\begin{align*}
    |\hat{y}(u)-\hat{y}(u^0)|_i &\le | \sum_{l \in [L]} C_{il} (x-x^0)_l| + (|D| |u-u^0|)_i \\
    &\le \sum_{l \in [L]} \|C_{il}\|_{p^*_l} \eta(x-x^0)_l + (|D| |u-u^0|)_i.
\end{align*}

%!TEX root = idl_main.tex
\section[Proof of Thm]{Proof of~\cref{thm:lp-bound}} % (fold)
\label{app:proof_of_lp_bnds}
We have
\begin{eqnarray*}
p^* \le \overline{p} &:=& \min_{\lambda} \: \max_{x, \: u \in {\cal U}} \: \sum_{i \in [n]} f_i(x_i) + \lambda^\top (Ax+Bu-z) ~:~ x = z_+, \;\; |x-x^0| \le \sigma_x \\
&=& \min_{\lambda} \: \max_{u \in {\cal U}} \: \lambda^\top Bu + \max_{z \::\:  |z^+-x^0| \le \sigma_x} \: \sum_{i \in [n]} ( f_i(z_i^+) + (A^\top \lambda)_i z_i^+ - \lambda_i z_i ) \\
&=& \min_{\lambda, \: \mu=A^\top \lambda} \: \left(\max_{u \in {\cal U}} \: \lambda^\top  Bu \right) + \sum_{i \in [n]} g_i(\lambda_i,\mu_i) ,
\end{eqnarray*}
which establishes the first part of the theorem. If we further assume that the functions $g_i$, $i \in [n]$ are closed, strong duality holds, so that
\begin{eqnarray*}
\overline{p}	
&=& \min_{\lambda, \: \mu} \: \max_{x, \: u \in {\cal U}} \: \lambda^\top  Bu
+ x^\top (A^\top \lambda-\mu) + \sum_{i \in [n]} g_i(\lambda_i,\mu_i) \\
&=& 
\max_{x, \: u \in \Co {\cal U}} \: -\sum_{i \in [n]} g_i^*(-(Ax+Bu)_i,x_i)	
\end{eqnarray*}
where $g_i^*$ is the conjugate of $g_i$, $i \in [n]$.

%!TEX root = idl_main.tex
\section{Proof of~\cref{thm:pf-pertA}} % (fold)
\label{app:proof_pertA}
We have
\begin{align*}
|x-x^0|
&\le|Ax - Ax^0| = |A^0(x-x^0) + E(x-x^0) + Ex^0|
&\le |A^0+E||x-x^0| + |E| |x^0| .
\end{align*}
Applying a technique similar to that employed in the proof of~\cref{thm:box-bnd}, we obtain the desired relative error bounds.
% section proof_of_wellposedness (end)

%!TEX root = idl_main.tex
\section{Gradient Equations} % (fold)
\label{app:grad-descent}
In this section, we detail gradient computations with respect to the model parameter matrix $M=(A,B,C,D)$, in the context of the training problem~\ref{eq:training-pb-linf}. We assume that the map $\phi$ is BLIP with parameters $n_l$, $p_l$, $\gamma_l$, $l \in [L]$, and that $A$ satisfies the corresponding PF well-posedness condition with respect to $\phi$; we also assume that the latter is differentiable.

For simplicity, we first consider mini-batches of size $1$ (that is, $X = x \in \mathbb{R}^n$ and $U = u \in \mathbb{R}^p$), and we define $\hat{y} = Cx+Du $, $z = Ax + Bu$. We wish to calculate 
\[
\nabla_M \mathcal{L} = \begin{pmatrix}
\nabla_A \mathcal{L} &
\nabla_B \mathcal{L} \\
\nabla_C \mathcal{L} &
\nabla_D \mathcal{L} \\
\end{pmatrix}.
\]
The difficult part are the terms $\nabla_A \mathcal{L}$ and $\nabla_B \mathcal{L}$ due to the presence of the equilibrium equality constraint. We deal with this via implicit differentiation. In order to establish the existence of gradients we will need the following lemma.
\begin{lemma} \label{lem:d_wellposed}
Assume that the map $\phi$ is a BLIP map with parameters $n_l$, $p_l$, $\gamma_l$, $l \in [L]$, and that $A$ is well-posed with respect to $\phi$. Define the block-diagonal matrix $\Phi := \frac{\partial \phi(z)}{\partial z}$, then the equation in the $n \times n$ matrix $G$: $G = \Phi (AG+I)$ has a unique solution, which can be computed as the limit point of the recursion
\begin{equation}
    \label{eq:grad-fixed-point}
G(t+1) = \Phi (AG(t)+I), \;\; t=0,1,2,\ldots
\end{equation}

\end{lemma}
\begin{proof}
The result follows from the fact that if $\phi$ is BLIP with parameters $n_l$, $p_l$, $\gamma_l$, $l \in [L]$, then $\Phi = \diag(\Phi_1,\ldots,\Phi_L)$, where $\|\Phi_l\|_{n_l} < 1$, $l \in [L]$ and \cref{thm:pf-wp-block}.
\end{proof}

\paragraph{Calculating $\nabla_A \mathcal{L}$} For a given index pair $(j,k)$, we have 
\begin{align*}
\dfrac{\partial \mathcal{L}}{\partial A_{jk}} &= \dfrac{\partial \mathcal{L}}{\partial z} \cdot \dfrac{\partial Ax + Bu}{\partial A_{jk}} = \dfrac{\partial \mathcal{L}}{\partial z} \cdot \dfrac{\partial \sum_h A_{lh} x_h}{\partial A_{jk}}  = \dfrac{\partial \mathcal{L}}{\partial z} e_j x_k = \left[\nabla_z \mathcal{L} \; x^\top\right]_{jk} .
\end{align*}
We calculate $\nabla_z \mathcal{L}$ via implicit differentiation:
\begin{align} 
	\nabla_z \mathcal{L} & = \left(\frac{\partial \mathcal{L}}{\partial x} \cdot \frac{\partial x}{\partial z} \right)^\top , \;\;
	\frac{\partial\mathcal{L}}{\partial x} = \frac{\partial\mathcal{L}}{\partial\hat{y}} \cdot \frac{\partial Cx+Du}{\partial x}, \notag\\
	\frac{\partial x}{\partial z} & = \frac{\partial \phi(z)}{\partial z} + \frac{\partial \phi(Ax + Bu)}{\partial x} \cdot \frac{\partial x}{\partial z} \label{eq:grad_key} = (I - \Phi A)^{-1} \Phi, \notag
\end{align}
% Should not use \nabla here if we assume numerator form for the partial derivatives as we do in this section.
where $\Phi := \frac{\partial \phi(z)}{\partial z}$ is a block diagonal matrix. Thanks to \cref{lem:d_wellposed}, the matrix $G:=(I-\Phi A)^{-1}\Phi$ exists and can be obtained via fixed-point iterations~\ref{eq:grad-fixed-point}. Note that the gradient of the loss function $\nabla_{\hat{y}} \mathcal{L}$ can be easily computed, and we have
\begin{equation}
    \label{eq:nablazL}
\nabla_z \mathcal{L} = \left(C (I - \Phi A)^{-1} \Phi\right)^\top \nabla_{\hat{y}} \mathcal{L} = \left(CG\right)^\top \nabla_{\hat{y}} \mathcal{L} .
\end{equation}

\paragraph{Calculating the other gradients}
The other gradients are easily computed, as seen next, where $(j,k)$ denotes a generic index pair.
From the above it follows that
\begin{align*}
\dfrac{\partial \mathcal{L}}{\partial B_{jk}} &= \dfrac{\partial \mathcal{L}}{\partial z} \cdot \dfrac{\partial Ax + Bu}{\partial B_{jk}} = \left[\nabla_z \mathcal{L} \; u^\top\right]_{jk}  \Longrightarrow
\nabla_B \mathcal{L} =  \nabla_z {\mathcal{L}} \; u^\top,
\end{align*}
where $\nabla_z \mathcal{L}$  is given in \ref{eq:nablazL}.
Likewise,
\begin{align*}
\dfrac{\partial \mathcal{L}}{\partial C_{jk}} &= \dfrac{\partial \mathcal{L}}{\partial \hat{y}} \cdot \dfrac{\partial Cx + Du}{\partial C_{jk}} = \left[\nabla_{\hat{y}} \mathcal{L} \; x^\top\right]_{jk} \Longrightarrow
\nabla_C \mathcal{L} =  \nabla_{\hat{y}} \mathcal{L} \; x^\top ,
\end{align*}
and
\begin{align*}
\dfrac{\partial \mathcal{L}}{\partial D_{jk}} &= \dfrac{\partial \mathcal{L}}{\partial \hat{y}} \cdot \dfrac{\partial Cx + Du}{\partial D_{jk}} = \left[\nabla_{\hat{y}} \mathcal{L} \; u^\top\right]_{jk} \Longrightarrow
\nabla_D \mathcal{L} =  \nabla_{\hat{y}} \mathcal{L} \; u^\top .
\end{align*}
Finally, the gradient with respect to input $u$ is obtained as
\begin{align*}
    \frac{\partial \mathcal{L}}{\partial u_j} &=  \dfrac{\partial \mathcal{L}}{\partial z} \cdot \dfrac{\partial Ax + Bu}{\partial u_j} + \dfrac{\partial \mathcal{L}}{\partial \hat{y}} \cdot \dfrac{\partial Cx + Du}{\partial u_j} = \dfrac{\partial \mathcal{L}}{\partial z} \cdot B e_j + \dfrac{\partial \mathcal{L}}{\partial \hat{y}} \cdot D e_j \\
    &=  \left[B^\top \nabla_z \mathcal{L} + D^\top \nabla_{\hat{y}} \mathcal{L} \right]_{j}
    \Longrightarrow
\nabla_u \mathcal{L} =  B^\top \nabla_z \mathcal{L} + D^\top \nabla_{\hat{y}} \mathcal{L},
\end{align*}
where $\nabla_z \mathcal{L}$  is given in \ref{eq:nablazL}.

To summarize, we have obtained the following closed form evaluation of gradients for ($A, B, C, D$):
$$
\nabla_M \cal L = \begin{pmatrix}
\nabla_A \cal L &
\nabla_B \cal L \\
\nabla_C \cal L &
\nabla_D \cal L \\
\end{pmatrix} = \begin{pmatrix}
\nabla_z \mathcal{L} \\
\nabla_{\hat{y}} \mathcal{L}
\end{pmatrix}
\begin{pmatrix}
x \\ u 
\end{pmatrix}^\top,
$$
where $\nabla_{\hat{y}} \mathcal{L}$ can be easily computed and $\nabla_z\mathcal{L}$ is given by the following equation:
$$
\nabla_z\mathcal{L} = \left(C G\right)^\top \nabla_{\hat{y}} \mathcal{L}, \;\;
G = (I - \Phi A)^{-1} \Phi = \Phi (AG+I).
$$
Let $v:=\nabla_z \mathcal{L}$, we have
$$ v = \left(C (I - \Phi A)^{-1} \Phi\right)^\top \nabla_{\hat{y}} \mathcal{L},$$
which is equivalent to
\begin{equation} \label{eq:grad_fixpoint_z}
    v = \Phi \left(A^\top v + C^\top \nabla_{\hat{y}} \mathcal{L}\right).
\end{equation}
Again \cref{eq:grad_fixpoint_z} can be computed via the recursion~(\ref{eq:fixed-point-v}). In practice we use \cref{eq:grad_fixpoint_z} to compute $\nabla_z \mathcal{L}$ without explicitly forming  $G$ via \cref{eq:grad-fixed-point} where each iteration would involve a matrix-matrix product instead of a matrix-vector product in \cref{eq:grad_fixpoint_z}.

\paragraph{Mini-batch calculations} We may efficiently compute the gradient corresponding to a whole mini-batch of size greater than $1$. In the case of componentwise maps $\phi$, this is based on expressing the equation~(\ref{eq:grad_fixpoint_z}) as
\begin{equation} \label{eq:grad_fixpoint_z-mini}
    V = \Psi \circ \left(A^\top V + C^\top L\right),
\end{equation}
where columns of $L$ contains gradients of the loss with respect to $\hat{y}$, and $\Psi$ is a matrix that contains the derivatives of the activation, one column corresponding to the (diagonal) elements of $\Phi$, where $\Phi$ is defined in the previous section.
\section{Projection on $l_\infty$ Matrix Norm Ball} % (fold)
\label{app:vec-bisec-proj}
We address problem~\cref{eq:proj_nrm_ball}, which we write as
\[
p^*: = \min_{A} \: \onehalf\|A-A^0\|_F^2 ~:~ \sum_{j \in [n]} |A_{ij}|  \le \kappa, \;\; i \in [n].
\]
where $A^0 \in \reals^{n \times n}$ is  given. The problem is decomposable across the rows of the matrices involved, leading to $n$ sub-problems of the form
\[
\min_a \: \onehalf\|a-a^0_i\|_2^2 ~:~ \|a\|_1 \le \kappa,
\]
which $a^0_i \in \reals^n$ the $i$-th row of $A^0$.

The problem cannot be solved in closed form, but a bisection method can be applied to the dual:
\[
p^* = \max_{\lambda \ge 0} \: -\kappa\lambda + \sum_{i\in[n]} s_i(\lambda) ,
\]
where, for $\lambda \ge 0$ given:
\[
s_i(\lambda) := \min_\xi\: \onehalf(\xi-a^0_i)^2 + \lambda |\xi| , \;\; i \in [n].
\] 
A subgradient of the objective is 
\[
g_i(\lambda) := -\kappa + \sum_{i \in [n]} \max(|a^0_i|-\lambda,0), \;\; i \in [n].
\]
Observe that $p^* \ge 0$, hence at optimum:
\[
0 \le \lambda \le \frac{1}{\kappa}\sum_{i\in[n]} s(\lambda,a^0_i) \le \lambda^{\rm max} := \frac{1}{2\kappa}\|a^0_i\|_2^2.
\]
The bisection can be initialized with the interval $\lambda \in [0,\lambda^{\rm max}]$.

Returning to the original problem~\cref{eq:proj_nrm_ball}, we see that all the iterations can be expressed in a ``vectorized'' form, where updates for the different rows of $A$ are done in parallel. The dual variables corresponding to each row are collected in a vector  $\lambda \in \reals^n$. We initialize the bisection with a vector interval 
$[\lambda_l,\lambda_u]$, with 
$\lambda^l = 0$, $\lambda^u_i = \onehalf \|a^0_i\|_2^2/\kappa$, $i \in [n]$. We update the current vector interval as follows:
\begin{enumerate}
\item Set $\lambda = (\lambda_l+\lambda_u)/2$.
    \item Form a vector $g(\lambda)$ containing the sub-gradients corresponding to each row, evaluated at $\lambda_i$, $i \in [n]$:
    \[
    g(\lambda) = -\kappa\ones + (|A^0|-\lambda \ones^T)_+^T\ones.
    \]
    \item For every  $i \in [n]$, reset $\lambda^u_i = \lambda_i$ if $g_i(\lambda) >0$, $\lambda^l_i = \lambda_i$ if $g_i(\lambda) \le 0$.
\end{enumerate}

\end{document}